\DeclareSymbolFont{bbold}{U}{bbold}{m}{n}
\DeclareSymbolFontAlphabet{\mathbbold}{bbold}
\newcommand{\mycomment}[1]{}
\newcommand{\f}{\mathcal{F}}
\newcommand{\nn}{\mathcal{NN}}
\newtheorem{prop}{Proposition}
\newtheorem{lemma}{Lemma}
\newtheorem{assumption}{Assumption}
\def\eqref#1{equation~\ref{#1}}
\def\1{\bm{1}}
\DeclareMathAlphabet{\mathsfit}{\encodingdefault}{\sfdefault}{m}{sl}
\SetMathAlphabet{\mathsfit}{bold}{\encodingdefault}{\sfdefault}{bx}{n}
\newcommand{\E}{\mathbb{E}}
\newcommand{\R}{\mathbb{R}}
\title{Score-based generative models break the curse of dimensionality in learning a family of sub-Gaussian probability distributions}
\author{Frank Cole \\ %\thanks{ Use footnote for providing further information
%about author (webpage, alternative address)---\emph{not} for acknowledging
%funding agencies.  Funding acknowledgements go at the end of the paper.} \\
Department of Mathematics\\
University of Minnesota\\
Minneapolis, MN, 55414, USA \\
\texttt{\{cole0932\}@umn.edu} \\
\And
Yulong Lu \\
Department of Mathematics\\
University of Minnesota\\
Minneapolis, MN, 55414, USA \\
\texttt{\{yulonglu\}@umn.edu} \\
\AND
%Coauthor \\
%Affiliation \\
%Address \\
%\texttt{email}
}
\begin{document}
\doparttoc % Tell to minitoc to generate a toc for the parts
\faketableofcontents % Run a fake tableofcontents command for the partocs

\part{} % Start the document part
%\parttoc % Insert the document TOC

\maketitle

\begin{abstract}
While score-based generative models (SGMs) have achieved remarkable successes in enormous image generation tasks, their mathematical foundations are still limited. In this paper, we analyze the approximation and generalization of SGMs in learning a family of sub-Gaussian probability distributions. We introduce a notion of complexity for probability distributions in terms of their relative density with respect to the standard Gaussian measure. We prove that if the log-relative density can be locally approximated by a neural network whose parameters can be suitably bounded, then the distribution generated by empirical score matching approximates the target distribution in total variation with a dimension-independent rate. We illustrate our theory through examples, which include certain mixtures of Gaussians. An essential ingredient of our proof is to derive a dimension-free deep neural network approximation rate for the true score function associated to the forward process, which is interesting in its own right. 
\end{abstract}

\section{Introduction}
Generative modeling is a central task in modern machine learning, where the goal is to learn a high dimensional probability distribution given a finite number of samples. Score-based generative models (SGMs) \citet{sohl2015deep, song2020score}) recently arise as a novel family of generative models achieving remarkable empirical success in the generation of audio and images \citet{yang2022diffusion,croitoru2023diffusion}, even outperforming state-of-the-art generative models such as generative adversarial networks \citet{dhariwal2021diffusion}. More recently, SGMs have proven effective in a variety of applications such as natural language processing \citet{austin2021structured, savinov2021step}, computational physics \citet{lee2023score, jing2022torsional}, computer vision \citet{amit2021segdiff, baranchuk2021label, brempong2022denoising}, and medical imaging \citet{chung2022score}. In addition to their own expressive power, SGMs can also help to understand and improve other existing generative models, such as variational autoencoders \citet{huang2021variational, luo2022understanding} and normalizing flows \citet{gong2021interpreting}). \mycomment{A large amount of recent theoretical research has been devoted to demystifying the experimental triumphs of SGMs.}

SGMs are often implemented by a pair of diffusion processes, known as forward and backward processes. The forward process transforms given data into pure Gaussian noise, while the backward process turns the noises into approximate samples from the target distribution, thereby accomplishing generative modeling. The analytical form of the reverse process is unknown, since its parameters depend on the target distribution, which is only accessible through data; hence, the reverse process must be \textit{learned}. This is made possible by the remarkable fact that the time reversal of an diffusion process is again a diffusion process whose coefficients depend only on the target distribution via the \textit{score function}, a time-dependent vector field given by the gradient of the log-density of the forward process. There exist well-studied techniques to cast the estimation of the score function from data as a supervised learning problem \citet{hyvarinen2005estimation, vincent2011connection}, which is crucial to the practical implementation of SGMs.

While SGMs have received significant attention from theoretical viewpoints, there are still several barriers to a complete theoretical understanding. Recent results \citet{chen2023improved, chen2022sampling, benton2023linear} have shown that the distribution recovery error of SGMs is essentially controlled by the estimation error of the score function, which is typically parameterized by a neural network. While neural networks are known to be universal approximators for many classes of functions \citet{cybenko1989approximation, yarotsky2017error}, the number of parameters of the neural network needed to approximate a function to error $\epsilon$ often scales like $\epsilon^{-d}$, where $d$ is the dimension of the data. Such rates are of little practical significance for high dimensional problems, and thus the ability of neural networks to express the score function of a general probability distribution remains a mystery.

Nonetheless, SGMs have still exhibited great success in generating high-quality samples from complex, high-dimensional data distributions. One salient reason for this is that, while the data itself may be very high-dimensional, the score function of the noising process often possesses some intrinsic structure that can be exploited by neural networks. The purpose of this article is to justify this intuition rigorously for a broad class of probability distributions. Specifically, we study the generative power of SGMs for probability distributions which are absolutely continuous with respect to the standard Gaussian distribution. Such distributions admit a probability density function of the form
\begin{equation}\label{barrondata}
    p(x) = \frac{1}{Z} \exp\left(-\frac{\|x\|^2}{2} + f(x) \right),
\end{equation}
where $\exp(f): \R^d \rightarrow \R^+$ is the Radon-Nikodym derivative of $p$ with respect to the Gaussian distribution and $Z$ is the normalization constant. This representation of the density has proven particularly elucidating in the context of statistical learning and Bayesian inference, where the Gaussian component can model our subjective beliefs on the data. In this paper, we show that the expression for the density in Equation \ref{barrondata} is also relevant to SGMs, because the score function is related to the function $f$ by a tractable composition of functions. A central theme of this work is that if $f$ belongs to a low-complexity function class, then the score function inherits a similar low-complexity structure. This enables deep neural networks to learn the score function of diffusion processes without the curse of dimensionality in some concrete cases.

\subsection{Our contributions}
We summarize our contributions as follows.
\begin{enumerate}
    \item We prove that if the log-relative density of the data distribution with respect to the standard Gaussian can be locally approximated without the curse of dimensionality, then the score function at any fixed time $t$ can be approximated in the $L^2(p_t)$ norm, where $p_t$ denotes the marginal density of the forward process at time $t$, without the curse of dimensionality.
    \item We show that the empirical score matching estimator within a prescribed class neural networks can estimate the score at any fixed time without the curse of dimensionality. The error is decomposed into the approximation error of the score and the Rademacher complexity of the neural network class.
    \item We combine our results with existing discretization error bounds (e.g., in \citet{chen2022sampling}) to obtain explicit error estimates for SGMs in terms of the number of training samples. As an application, we prove that SGMs can sample from certain Gaussian mixture distributions with dimension-independent sample complexity.
\end{enumerate}

\subsection{Related work}
A majority of the recent theoretical analysis of SGMs \citet{de2021diffusion,lee2022convergence,lee2023convergence, chen2023improved,chen2023probability,benton2023linear} focuses on obtaining convergence guarantees for SGMs under minimal assumptions on the target distribution but, crucially, under the assumption that the score estimator is accurate in the sense of $L^2$ or $L^\infty$. The common message shared among these works is that learning the distribution is as easy (or hard) as learning the score function. More precisely, the estimation error of the target density is mainly controlled by the estimation error of the score function and  the discretization error of the diffusion processes (as another error source) scales at most polynomially in the data dimension.\mycomment{and hence does not incur the curse of dimensionality.} However, there has been relatively little work to address the problem of score estimation error.

More recently, it was proven in \citet{oko2023diffusion} that SGMs can estimate Besov densities with a minimax optimal rate under the total variation distance. However, the obtained sample complexity of density estimation over a Besov space suffers from the curse of dimensionality. The paper \citet{oko2023diffusion} further proved that the estimation rate can be substantially improved under the additional assumption that the data distribution is  supported on a low-dimensional linear subspace, in which case the resulting rate only depends on the intrinsic dimension. Distributions with the same low-dimensional structure was also studied by \citet{chen2023score} in the Lipschitz continuous setting. The paper \citet{de2022convergence} obtained convergence bounds in the Wasserstein distance for SGMs under a more general manifold hypothesis on the target distributions (including empirical measures).

%\fc{Technically, \citet{chen2023score} puts the Lipschitz assumption on the score function, not the density itself, so I don't think the direct comparison between their work and \citet{oko2023diffusion} is quite right. Their work is also closer to ours in that their data distribution has unbounded support (in the latent space) which forces them to use a truncation argument to control the approximation/generalization error, whereas in \citet{oko2023diffusion} the density is supported in $[0,1]^d$. I'm not sure if this is worth to bring up in the intro.}

Our work differs from the work above in that we do not make low-dimensional assumption on the data distribution. Instead, we assume that the target is absolutely continuous with respect to a Gaussian and that the log-relative-density belongs to the Barron space \citet{barron1993universal}. Barron functions have recently received much attention due to the fact that shallow networks can approximation them without curse of dimensionality; see, e.g., \citet{klusowski2018approximation,siegel2022sharp,ma2022barron}. In the context of generative modeling, the recent work \citet{domingo2021energy,domingo2021dual} investigated the statistical guarantees of energy-based models under the assumption that the underlying energy function lie in Barron space (or the $\mathcal{F}_1$ space therein). The work \citet{lee2017ability} obtained expressive bounds for normalizing flows in representing distributions that are push-forwards of a base distribution through compositions of Barron functions. This work shares the same spirit as \citet{domingo2021energy,domingo2021dual,lee2017ability} and demonstrates the statistical benefits of SGMs when target distribution exhibits low-complexity structures. 

We note that in an earlier version of this work, the log-relative density $f$ was assumed to be bounded. After discussions with an anonymous reviewer, we were inspired to strengthen the results to allow $f$ to grow at infinity. In more detail, the reviewer pointed out that when $f$ is bounded, the data distribution satisfies a log-Sobolev inequality (LSI) with constant $e^{\|f\|_{\infty}}$, which implies that the distribution can be sampled via Langevin dynamics with an estimator for the vanilla score. Our current results extend beyond the LSI case.

\subsection{Notation}
Throughout this article, we study functions and probability distributions on a Euclidean space $\R^d$ of a fixed dimension $d$. We let $\| \cdot \|$ denote the Euclidean norm on $\R^d$. For a vector or function, $\| \cdot \|_{\infty}$ denotes the supremum norm, and $\| \cdot \|_{Lip}$ denotes the Lipschitz seminorm of a metric space-valued function. We let $\gamma_d(dx)$ denote the standard Gaussian measure on $\R^d$, i.e., $\int f(x) \gamma_d(dx) = (2\pi)^{-d/2} \int_{\R^d} f(x) e^{-\|x\|^2/2} dx$. The indicator of a set $S$ is denoted $\mathbb{I}_S$. We denote by $( \cdot )^+$ the ReLU activation function, defined by $(c)^+ = \max(0,c)$ for $c \in \R$. For a vector $x$, $(x)^+$ is interpreted componentwise. For $X_0 \in \R^d$ and $t > 0$, we define $\Psi_t(\cdot|X_0)$ as the Gaussian density function with mean $e^{-t}X_0$ and variance $1-e^{-2t}$. For non-negative functions $g(x), h(x)$ defined on $\R^d$, we write $g(x) \lesssim h(x)$ (resp. $\gtrsim$) or $g(x) = O(h(x))$ (resp. $\Omega(h(x))$ if there exists a constant $C_d > 0$ which depends at most polynomially on the dimension $d$ such that $g(x) \leq C_d h(x)$ (resp. $\geq$). We write $g(x) = \Theta(h(x))$ if $g(x) \lesssim h(x) \lesssim g(x).$ For $\beta \in (0,1/2)$ and $g \in L^2(\gamma_d)$ we define $M_{\beta}(g) := \int_{\R^d} \left| g \left( \frac{x}{1-2\beta} \right) \right|^2 \gamma_d(du)$.

\section{Background}
In this section, we give a brief overview of the mathematical preliminaries required to understand our main result.

\subsection{A primer on SGMs}\label{sec:primer}
\mycomment{\textbf{Forward and reverse processes:}}In a score-based generative model (SGM), data is first transformed to noise via a forward process; we work with an Ornstein-Uhlenbeck process $(X_t)_{0 \leq t \leq T}$ , which solves the stochastic differential equation
\begin{equation}\label{fwd}
  dX_t = -X_t dt + \sqrt{2} dW_t, \; X_0 \sim p_0. 
\end{equation}
Here, $p_0$ is some initial data distribution on $\R^d$, and $W_t$ denotes $d$-dimensional Brownian motion. In the limit $T \rightarrow \infty$, the law $p_t$ of the process $X_T$ quickly approaches that of the standard Gaussian measure $\gamma_d$; in particular, we have $p_t \rightarrow \gamma_d$ exponentially fast in $t$ in the KL divergence, total variation metric and 2-Wasserstein metric \citet{bakry2014analysis, villani2021topics}. The SDE can be solved explicitly and, at each fixed time $t$, the solution coincides in law with the random variable
$$ X_t = e^{-t} X_0 + \sqrt{1-e^{-2t}}\xi, \; \; X_0 \sim p_0, \; \xi \sim N(0,I_d).
$$
In particular, $X_t$ conditioned on $X_0$ is a Gaussian random variable with mean $e^{-t}X_0$ and variance $1-e^{-2t}$. By averaging over $X_0$, we obtain a simple expression for the density $p_t$ in terms of $p_0$:
\begin{equation}\label{pt}
    p_t(x) = Z_t^{-1} \int \exp\left( -\frac{(x-e^{-t}y)^2}{2(1-e^{-2t})} \right) dp_0(y)
\end{equation}
where $Z_t = (2\pi (1-e^{-2t}))^{d/2}$ is the time-dependent normalization constant.

The reverse process, $\bar{X}_t = X_{T-t}$, is also a diffusion process \citet{anderson1982reverse, haussmann1986time}, solving the SDE (for $0 \leq t \leq T)$
\begin{equation}\label{rev}
    \bar{X}_t = (\bar{X}_t + 2 \nabla \log p_{T-t}(\bar{X}_t)) dt + \sqrt{2} d\bar{W}_t, \; \bar{X}_0 \sim p_t,
\end{equation}
where $\bar{W}_t$ denotes time-reversed Brownian motion on $\R^d$. In order to implement SGMs in practice, one must discretize the OU process, and a canonical algorithm is the exponential integrator scheme (EIS). In order to implement the EIS, one samples $\bar{X_0}^{dis} \sim p_T$, picks time steps $0 = t_0 < t_1 < \dots < t_N \leq T$ and simulates the SDE
$$ d\bar{X}_{t}^{dis} = \left(\bar{X}_t^{dis} + 2 \nabla \log p_{T-t_k}(\bar{X}_{t_k}) \right)dt + d\bar{B}_t
$$
for each interval $[t_k,t_{k+1}]$. S Also, the reverse process is typically initialized at $\bar{X}_0 \sim \gamma_d$ in practice, because $p_T$ is unknown. However, the error accrued by this choice is small, evidenced by the exponential convergence of $p_T$ to $\gamma_d$ as $T \rightarrow \infty$. The process one samples is then obtained by replacing the score function at time $T-t_k$ with a score estimate $\textbf{s}_k$:
\begin{equation}\label{sgmrevproc}
    \begin{cases}
        dY_t = \left(Y_t + 2 \textbf{s}_k(Y_{t_k}) \right)dt + dB_t, \; t \in [t_k,t_{k+1}] \\
        Y_0 \sim N(0,Id).
    \end{cases}
\end{equation}

\textbf{Loss function:} To learn the score function at time $t$, a natural objective to minimize is the following least-squares risk, namely, 
\begin{align*}
    \textbf{s}_t(t,X) \mapsto \ \E_{X_t \sim p_t} \left[\|s(t,X_t) - \nabla_x \log p_t(X_t)\|^2\right],
\end{align*}
for a given estimator $\textbf{s}_t: \R^d \rightarrow \R^d$. However, this risk functional is intractable since, in the generative modeling setting, one does not have access to pointwise data of the score function. However, it can be shown \citet{vincent2011connection} that for any $\textbf{s}_t$,
\begin{align*}
\E_{X_t \sim p_t} [\|\textbf{s}_t(X_t) - \nabla_x \log p_t(X_t)\|^2] = \E_{X_0 \sim p_0} \left[ \E_{X_t \sim p_t | X_0} [\|\textbf{s}_t(t,X_t) - \Psi_t(X_t|X_0) \|^2] \right] + E,
\end{align*}
where $E$ is a constant independent of $\textbf{s}_t$. Here, $\Psi_t(X_t|X_0) = - \frac{X_t - e^{-t}X_0}{1-e^{-2t}}$ denotes the score function of the forward process conditioned on the initial distribution. Note that the integral on the right-hand side can be approximated on the basis of samples of $p_0$, since the trajectories $(X_t | X_0)$ are easy to generate. This motivates our definition of the population risk at time $t$:
\begin{equation}\label{poprisk}
    \mathcal{R}^t_t(\textbf{s}_t) = \E_{X_0 \sim p_0} \left[ \E_{X_t \sim p_t | X_0} [\|s(t,X_t) - \Psi_t(X_t|X_0) \|^2] \right].
\end{equation}
If we define the individual loss function at time $t$ by
\begin{equation}\label{loss}
    \ell^t_t(\textbf{s}_t,x) = \E_{X_t | X_0 = x} \left[\|\textbf{s}_t(X_t) - \Psi_t(X_t|X_0)\|^2 \right],
\end{equation}
then the population risk can be written as
\begin{align*}
    \mathcal{R}^t_t(\textbf{s}_t) = \E_{x \sim p_0} [\ell^t_t(\textbf{s}_t,x)].
\end{align*}
We also define the empirical risk associated to the i.i.d. samples $\{X_i\}_{i=1}^{N}$ by
\begin{equation}
    \widehat{\mathcal{R}^t}_t^N(\textbf{s}_t) = \frac{1}{N} \sum_{i=1}^{N} \ell^t_t(\textbf{s}_t,X_i).
\end{equation}
We then to solve the optimization problem
\begin{align*}
    \min_{\textbf{s}_t \in \mathcal{F}} \widehat{\mathcal{R}^t}_t^N(\textbf{s}_t)
\end{align*}
where $\mathcal{F}$ is an appropriately defined class of vector fields on $\R^d$ (e.g., a class of neural networks). The use of the risk functional in \ref{poprisk} to learn the score has been termed \textit{denoising score matching}, because the function $\Psi_t(X_t|X_0)$ is the noise added to the sample $X_0$ along the forward process.

\subsection{Neural networks}\label{NNbackground}
A \textit{fully connected feedforward ReLU neural network} is a function of the form
$$ x \mapsto W_L \left(W_{L-1} \left( \cdots W_2 \left(W_1 x + b_1 \right)^{+} + b_2 \dots \right)^{+} + b_{L-1} \right)^{+}+b_{L},
$$
where $W_L: \R^{d_{L-1}} \times \R^{d_L}$ are matrices, $b_L \in \R^{d_L}$ are vectors, and, when $x$ is a vector, $(x)^{+}$ is interpreted as a componentwise mapping. The parameters $(W_i)_{i=1}^{L}$ and $(b_i)_{i=1}^{L}$ are called the \textit{weights} and \textit{biases} of the neural network respectively. The number of columns of $W_i$ is called the \textit{width} of the $i$th layer. When $L = 2$, a neural network is called \textit{shallow}, and, when they take values in $\R$, such networks admit a representation
$$
    x \mapsto \sum_{i=1}^{m} a_i (\langle w_i, x \rangle + b_i)^+,
$$
where $x, w_i \in \R^d$ and $a_i,b_i \in \R$. Neural networks have achieved remarkable success in learning complex, high-dimensional functions. In this work, we study their ability to express the score function of the diffusion process defined in Equation \ref{fwd}. In order to control the generalization error incurred by empirical risk minimization, one must introduce a notion of 'complexity' for neural networks, and a natural notion is the \textit{path norm}, defined for real-valued shallow ReLU neural networks by
$$ \|\phi\|_{\textrm{path}} := \sum_{i=1}^{m} |a_i|\left(\|w_i\|_1 + |b_i| \right), \; \; \phi(x) = \sum_{i=1}^{m} a_i (w_i^T x + b_i)^{(+)}
$$
and extended in a natural way to vector valued/deep ReLU neural networks. It has been shown that the collection of $L$-fold compositions of shallow networks of uniformly bounded path seminorm enjoys good generalization properties in terms of Rademacher complexity.

\section{Problem and main results}
We outline our main assumptions on the data distribution.
\begin{assumption}\label{datadistr2}
    The data distribution $p_0$ is absolutely continuous with respect to the standard Gaussian distribution. Throughout the paper, we let $p_0$ denote both the probability distribution and the PDF of the data, and we write $f(x) := \|x\|^2/2 + \log p_0(x)$ for the log-relative density of $p_0$ with respect to the standard Gaussian distribution, so that
$$ p_0(x) = \frac{1}{Z} \exp\Big(-\frac{\|x\|^2}{2} + f(x)\Big).
$$
We further assume that 
\begin{enumerate}
    \item There exist positive constants $r_f, \alpha, \beta$ with $\beta \ll 1$ such that $-\alpha \|x\|^2 \leq f(x) \leq \beta \|x\|^2$ whenever $\|x\| \geq r_f$, and $\alpha, \beta$ satisfy $c(\alpha,\beta) := \frac{4(\alpha+\beta)}{(1-\beta)} < 1$;
    \item $f$ is continuously differentiable;
    \item $\sup_{\|x\| \leq R} \|\nabla f(x)\|$ grows at most polynomially as a function of $R$;
    \item the normalization constant $Z$ satisfies $\frac{(2\pi)^{d/2}}{Z} \leq C$, where $C$ is a constant depending at most polynomially on dimension.
\end{enumerate}
\end{assumption}
Assumption \ref{datadistr2} roughly states that the tail of the data distribution should decay almost as quickly as a standard Gaussian. As an illustrating example, consider the Gaussian mixture model $q(x) \propto e^{-\frac{\|x-x_1\|^2}{2\sigma_{\min}^2}} + \frac{\|x-x_2\|^2}{2\sigma_{\max}^2}.$ If we write $f(x) = \|x\|^2/2 + \log q(x)$, then for any $\epsilon > 0,$ there exists $r_{\epsilon} > 0$ such that
$$ -\left( \frac{1-\sigma_{\min}^2+\epsilon}{2\sigma_{\min}^2} \right)\|x\|^2 \leq f(x) \leq \left( \frac{\sigma_{\max}^2-1 + \epsilon}{2\sigma_{\max}^2} \right) \|x\|^2, \;
$$
whenever $\|x\| \geq r_{\epsilon}$. This shows that Assumption \ref{datadistr2} applies to Gaussian mixtures, as long as the bandwidths are suitably constrained.

The benefit of expressing the data distribution in terms of the log-relative density is that it leads to a nice explicit calculation of the score function. In particular, Lemma \ref{score} states that the $j^{\textrm{th}}$ component of the score function of the diffusion process is given by
    \begin{align}\label{sc}
        (\nabla_x \log p_t(x))_j &= \frac{1}{1-e^{-2t}} \left( -x_j + e^{-t} \frac{F_t^j(x)}{G_t(x)} \right),
    \end{align}
    where $F_t^j(x) = \int_{\R^d} (e^{-t} x_j + \sqrt{1-e^{-2t}}u_j) e^{f(e^{-t}x + \sqrt{1-e^{-2t}}u)} \gamma_d(du)$ and $G_t(x) = \int e^{f(e^{-t}x + \sqrt{1-e^{-2t}}u)} \gamma_d(du)$.
The linear growth assumption ensures that the tail $p_0$ has similar decay properties to that of a Gaussian, of which we make frequent use. The other assumptions on $\nabla f$ and the normalization constant are stated for convenience.
    
In order to obtain tractable bounds on the estimation error of SGMs for learning such distributions, it is necessary to impose additional regularity assumptions on the function $f$.
\begin{assumption}\label{approxassump}[Learnability of the log-relative density]
    For every $\epsilon > 0$ and $R > 0$, there exists an L-layer ReLU neural network $\phi_f^{R,\epsilon}$ which satisfies
    $$ \sup_{\|x\| \leq R} |f(x) - \phi_f^{R,\epsilon}(x)| \leq R \epsilon.
    $$
    We denote by $\eta(\epsilon,R) = \|\phi_f^{R,\epsilon}\|_{\textrm{path}}$ the path norm of the approximating network as a function of $\epsilon$ and $R$.
\end{assumption}
We will generally abbreviate $\phi_f^{R,\epsilon}$ to $\phi_f$ in mild abuse of notation. Assumption \ref{approxassump} is weak because any continuous function can be approximated by neural networks to arbitrary precision on a compact set (\citet{cybenko1989approximation}). However, we are mainly interested in cases where $\phi_f$ generalizes well to unseen data; this corresponds to $\eta(\epsilon,R)$ growing mildly as $\epsilon \rightarrow 0$. 

\subsection{General results}
Our first result shows that, under Assumptions \ref{datadistr2} and \ref{approxassump}, the score function can be approximated \textit{efficiently} by a neural network, even in high dimensions. Our result helps to understand the massive success of deep learning-based implementations of SGMs used in large-scale applications. We denote by $\nn_{L,K}$ the set of neural networks from $\R^d$ to $\R^d$ with depth $L$ and path norm at most $K$. Recall that for a class of vector fields $\f$, we denote by $\f^{\textrm{score},t} = \{x \mapsto \frac{1}{1-e^{-2t}}\left( -x + e^{-t} f(x)\right): f \in \f \}$.
\begin{prop}[Approximation error for score function]
    Suppose assumptions \ref{datadistr2} ad \ref{approxassump} hold. Then there exists a class of neural networks $\nn$ with low complexity such that
     \begin{align*} &\inf_{\phi \in \nn^{\textrm{score,t}}} \int_{\R^d} \|\phi(x) - \nabla_x \log p_t(x)\|^2 p_t(x) dx \\ &= O\left( \max \left( \frac{1}{(1-e^{-2t})^2} (1+2\alpha)^{2d}  \epsilon^{2(1-c(\alpha,\beta))}, \frac{1}{(1-e^{-2t})^3} \epsilon^{1/2} \right) \right),
    \end{align*}
\end{prop}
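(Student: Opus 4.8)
The plan is to construct the approximating network by mimicking the structure of the score function revealed in Lemma \ref{score}, namely $(\nabla_x \log p_t(x))_j = \frac{1}{1-e^{-2t}}(-x_j + e^{-t} F_t^j(x)/G_t(x))$. Since the linear term $-x_j/(1-e^{-2t})$ is trivial to represent exactly, the entire task reduces to approximating the \emph{ratio} $F_t^j(x)/G_t(x)$, which is a conditional expectation: $F_t^j(x)/G_t(x) = \E[e^{-t}x_j + \sqrt{1-e^{-2t}}\,\xi_j \mid \text{weight } e^{f}]$ under the Gaussian reference. The key observation is that both $F_t^j$ and $G_t$ are Gaussian convolutions of $e^{f}$, so if $f$ is well-approximated on a large ball by a low-path-norm network $\phi_f$, then $e^{\phi_f}$ approximates $e^{f}$, and the convolved quantities should inherit the approximation quality — \emph{provided} one controls the contribution from the region $\|x\| > R$ where $\phi_f$ gives no guarantee. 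This is exactly where Assumption \ref{datadistr2}'s tail condition $-\alpha\|x\|^2 \le f(x) \le \beta\|x\|^2$ enters: it forces the Gaussian integrals to concentrate, so the tail contribution is exponentially small, and the constants $c(\alpha,\beta) < 1$ and the factors $(1+2\alpha)^{2d}$, $M_\beta$ in the bound are precisely the bookkeeping from these Gaussian tail estimates.

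Concretely, I would proceed in the following steps. First, fix a truncation radius $R = R(\epsilon)$ (to be optimized at the end) and split each integral $F_t^j(x), G_t(x)$ into the part over $\{\|e^{-t}x + \sqrt{1-e^{-2t}}u\| \le R\}$ and its complement; bound the tail part using the sub-Gaussian decay of $e^{f}$ against the Gaussian weight, getting something like $e^{-c R^2}$ times polynomial factors. Second, on the truncated region replace $f$ by $\phi_f$, using $|e^{f} - e^{\phi_f}| \le e^{\max(f,\phi_f)} |f - \phi_f| \le e^{f + R\epsilon} \cdot R\epsilon$ and integrating; this is where the $\epsilon^{1-c(\alpha,\beta)}$-type exponent appears, since the constant $e^{R\epsilon}$ must be absorbed and $R$ grows like a power of $1/\epsilon$. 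Third, assemble a network computing $G_t^{\phi}(x) := \int e^{\phi_f(e^{-t}x+\sqrt{1-e^{-2t}}u)}\gamma_d(du)$ and $F_t^{j,\phi}(x)$ similarly — here I would either discretize the Gaussian integral by a quadrature/Monte-Carlo sum (turning the integral into a finite sum of network evaluations, which stays a network of controlled path norm) or invoke an available integral-to-network lemma; then approximate the exponential, the multiplication, and the division $F/G$ by sub-networks, using a \emph{lower bound} on $G_t(x)$ (again from Assumption \ref{datadistr2}, item 4 on the normalization constant, plus the tail bounds) to keep the reciprocal Lipschitz. Fourth, propagate the errors through the ratio: $|F_1/G_1 - F_2/G_2| \le |F_1 - F_2|/|G_2| + |F_1| |G_1 - G_2|/(|G_1||G_2|)$, convert from pointwise to $L^2(p_t)$ by integrating against $p_t$ and using once more that $p_t$ has Gaussian-like tails, and finally choose $R \sim (\log(1/\epsilon))^{1/2}$ or a small power of $1/\epsilon$ to balance the algebraic $\epsilon$-error against the $e^{-cR^2}$ tail error, producing the stated $\max$ of two terms (the first from the exponential-approximation step, the second — scaling like $(1-e^{-2t})^{-3}\epsilon^{1/2}$ — presumably from the quadrature/division step).

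The main obstacle I expect is controlling the division step: near the region where $G_t(x)$ could be small, the reciprocal is unstable, and a naive Lipschitz bound on $1/G$ blows up. Handling this cleanly requires a uniform-in-$x$ lower bound $G_t(x) \gtrsim e^{-c\|x\|^2}$ with an explicit (at worst polynomial-in-$d$) constant, which must be extracted from the tail assumptions and the normalization bound, and then one must verify that multiplying by this $x$-dependent lower bound still leaves the overall $L^2(p_t)$ integral finite and dimension-independent — this is where the factor $(1+2\alpha)^{2d}$ enters and where one must be careful that it does not secretly hide a curse of dimensionality (it is a fixed constant to the $d$, but $\alpha$ is assumed small enough via $c(\alpha,\beta)<1$ that it is controlled). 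A secondary technical point is bounding the path norm of the composite network: each operation (exponential, product, reciprocal, quadrature sum) must be implemented by a ReLU subnetwork whose path norm grows only polynomially in $1/\epsilon$ and $d$ and polynomially in $\eta(\epsilon,R)$, so that the final class $\nn$ genuinely has "low complexity" in the sense needed for the later generalization bound.
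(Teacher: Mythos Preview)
Your proposal is correct and follows essentially the same route as the paper: decompose the score via Lemma~\ref{score}, approximate $F_t^j$ and $G_t$ on a ball $B_R$ by (i) truncating the Gaussian integral, (ii) replacing $f$ by $\phi_f$ and $e^{f}$ by a network, (iii) Monte-Carlo discretizing the integral, then approximate the ratio using a uniform lower bound on $G_t$, and finally pass to $L^2(p_t)$ by a tail estimate on $p_t$ with $R \sim \sqrt{d+\log(1/\epsilon)}$. One clarification on the point you flagged as uncertain: the second term $(1-e^{-2t})^{-3}\epsilon^{1/2}$ does \emph{not} come from the quadrature or division step but from the $L^2(p_t)$ truncation, i.e.\ from setting the estimator to zero outside $B_R$ and bounding $\int_{\|x\|\geq R}\|\nabla\log p_t\|^2 p_t\,dx$ via Cauchy--Schwarz with a fourth-moment bound $\E_{p_t}\|\nabla\log p_t\|^4 \lesssim (1-e^{-2t})^{-4}$ and the sub-Gaussian tail $P(\|x\|\geq R)^{1/2}\lesssim e^{-(1-2\beta)R^2/4}=\epsilon^{1/2}$.
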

The class of neural networks is defined precisely in the appendix. A high-level idea of the proof is to show that the map from $f$ to the score function $\nabla_x \log p_t$ has low-complexity in a suitable sense. The next result concerns the generalization error of learning the score function via empirical risk minimization.

\begin{prop}\label{informalgeneralization}[Generalization error for score function]
Set $R_{\epsilon} = \sqrt{d+\frac{1}{1-c(\alpha,\beta)}\log \left( \frac{(1+2\alpha)^d}{\epsilon t^2}\right)}$ and $\tilde{R}_{\epsilon} = \sqrt{d-\log(t^6 \epsilon^4)}.$ The empirical risk minimizer $\widehat{\textbf{s}}$ in $\nn^{score,t}$ of the empirical risk $\widehat{\mathcal{R}}^t$ satisfies the population risk bound
    $$ \mathcal{R}^t(\widehat{\textbf{s}}) = O(\epsilon^2),
    $$
    provided the number of training samples is $N \geq N_{\epsilon,t}$, where $N_{\epsilon,t}$ satisfies
    \begin{align*}N_{\epsilon,t} &= \Omega \Bigg( \max \Bigg(2^{2 L_f+10} d^2 t_0^{-6} (1+2\alpha)^{12d} \epsilon^{-4} \eta^{4}\left( R_{\epsilon}, (1+2\alpha)^{2d}\epsilon^{2(1-2c(\alpha,\beta))} \right) , \\ &2^{2L_f+10} (1+2\alpha)^{12d} t^{-6-72c(\alpha,\beta)} \epsilon^{-4-48c(\alpha,\beta)} \eta^4(\tilde{R}_{\epsilon},t^6 \epsilon^4) \Bigg).
    \end{align*}

   %\Omega \Bigg( \max \Bigg(2^{2 L_f+10} d^2 t^{-6} (1+2\alpha)^{8d + \frac{2d(4+8c(\alpha,\beta))}{2(1-c(\alpha,\beta)}} \epsilon^{\frac{-4-8c(\alpha,\beta)}{2(1-c(\alpha,\beta))}} \eta^{4}\left( R_0, (1+2\alpha)^{2d}\epsilon^{2(1-2c(\alpha,\beta))} \right) , \\ &2^{2L_f+10} (1+2\alpha)^{12d} t^{-6-72c(\alpha,\beta)} \epsilon^{-2-24c(\alpha,\beta)} \eta^4(R_0,t^6 \epsilon^2) \Bigg) \end{align*}
\end{prop}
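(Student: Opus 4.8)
The plan is to bound the population risk of the empirical risk minimizer by the usual approximation--plus--estimation decomposition, handing the first term to Proposition~1 and the second to a uniform law of large numbers over $\nn^{\textrm{score},t}$. Writing $\textbf{s}^\ast \in \nn^{\textrm{score},t}$ for the near-optimal approximant produced by Proposition~1, one has
$$
\mathcal{R}^t(\widehat{\textbf{s}}) \;\le\; \mathcal{R}^t(\textbf{s}^\ast) \;+\; 2\sup_{\textbf{s}\in\nn^{\textrm{score},t}}\big|\mathcal{R}^t(\textbf{s}) - \widehat{\mathcal{R}}^t(\textbf{s})\big|,
$$
so it suffices to (i) choose the internal accuracy parameter fed into Proposition~1 so that $\mathcal{R}^t(\textbf{s}^\ast) = O(\epsilon^2)$ --- this is exactly where the two regimes of the maximum in Proposition~1, and hence the two radii $R_\epsilon$, $\tilde R_\epsilon$ and the two branches of $N_{\epsilon,t}$, originate --- and (ii) show that as soon as $N \ge N_{\epsilon,t}$ the expected supremum of the empirical process is also $O(\epsilon^2)$.

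For step (ii) the first obstacle is that neither the estimators in $\nn^{\textrm{score},t}$ nor the regression target $\Psi_t(X_t|X_0) = -u/\sqrt{1-e^{-2t}}$ is bounded, and the loss $\ell^t_t$ is quadratic, so a naive contraction argument fails. I would first truncate the domain to the ball $\{\|x\|\le R_\epsilon\}$ (respectively $\{\|x\|\le\tilde R_\epsilon\}$ in the second regime): by the sub-Gaussian tail control in Assumption~\ref{datadistr2} --- the bound $f(x)\le\beta\|x\|^2$ for $\|x\|$ large together with $c(\alpha,\beta)<1$ and the polynomial bound on $(2\pi)^{d/2}/Z$ --- the mass of $p_t$ outside this ball, and the corresponding tail contributions to both $\mathcal{R}^t$ and $\widehat{\mathcal{R}}^t$, are at most $O(\epsilon^2)$; the radii $R_\epsilon$, $\tilde R_\epsilon$ are chosen precisely to make this so. On the truncated domain, using that a shallow-composition network of path norm at most $\eta$ grows at most linearly with slope $O(\eta)$ together with Gaussian moment estimates for the integral defining $\ell^t_t$, one obtains that $\ell^t_t(\textbf{s},\cdot)$ is uniformly bounded and Lipschitz in the network output, with constants polynomial in $d$, in the path norm, and in $(1-e^{-2t})^{-1}$; these constants are the source of the $2^{2L_f}$, $d^2$, $t^{-6}$ (and $t^{-6-72c(\alpha,\beta)}$) and power-of-$\eta$ factors in $N_{\epsilon,t}$.

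With the loss bounded and Lipschitz on the truncated domain, I would then apply symmetrization followed by the vector-valued contraction lemma to pass from the Rademacher complexity of the loss class to that of $\nn^{\textrm{score},t}$, and invoke the known Rademacher-complexity bound for $L$-fold compositions of shallow ReLU networks of bounded path norm quoted in Section~\ref{NNbackground}, which scales like $(\text{path norm})\cdot\textrm{poly}(L,d)/\sqrt N$. Multiplying through by the boundedness/Lipschitz constants and the truncation radius, the expected empirical-process supremum is controlled by a quantity of the form $C(d,t,L_f)\,\eta^2/\sqrt N$ (with $\eta$ evaluated at radius $R_\epsilon$ and at the approximation accuracy dictated by the chosen regime, and with the $(1+2\alpha)^{d}$ and $\epsilon$-power factors inherited from step (i)); requiring this to be $\le\epsilon^2$ and solving for $N$ reproduces the two displayed branches of $N_{\epsilon,t}$, after which combining with step (i) yields $\mathcal{R}^t(\widehat{\textbf{s}})=O(\epsilon^2)$.

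The main difficulty, and the part demanding the most care, is the bookkeeping of how the truncation radius --- which itself depends on $\epsilon$, $d$ and $t$ through $R_\epsilon$, $\tilde R_\epsilon$ --- propagates through the Lipschitz constant of the quadratic loss and hence into the sample complexity, while keeping every dimension-dependent factor at most polynomial in $d$ apart from the single $(1+2\alpha)^{d}$ term, which comes from the approximation step and not from the statistical one. A secondary subtlety is that the conditional-expectation form of $\ell^t_t$ integrates over all of $\R^d$, so one must verify that for $\textbf{s}$ of bounded path norm and $\|x\|\le R_\epsilon$ this integral is finite and admits the claimed polynomial bound; this is where the path-norm-induced linear growth and standard Gaussian tail estimates are used. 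Assembling these pieces and matching the two regimes of Proposition~1 gives the stated bound.
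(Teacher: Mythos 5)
Your high-level outline matches the paper's: decompose population risk into approximation error plus an estimation term, pass to a truncated version of the loss so that the Rademacher-complexity machinery for path-norm-bounded networks applies, and then balance the truncation radius, the approximation accuracy, and the sample size. The symmetric uniform-deviation decomposition $\mathcal{R}^t(\widehat{\textbf{s}})\leq\mathcal{R}^t(\textbf{s}^\ast)+2\sup_{\textbf{s}}|\mathcal{R}^t-\widehat{\mathcal{R}}^t|$ is coarser than the paper's four-term split (which isolates the sign of the optimality gap of the truncated minimizer), but either works. The paper also defines $\widehat{\textbf{s}}$ as the minimizer of a truncated empirical risk $\widehat{\mathcal{R}}^t_{R,S}$ rather than the full $\widehat{\mathcal{R}}^t$, which sidesteps the need to argue a high-probability coincidence of the two; this is a minor convenience, not a substantive difference.

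The genuine gap is in how you propose to obtain the boundedness and Lipschitz estimates for the loss. Truncating only the initial datum to $\|x\|\leq R_\epsilon$ is not enough. Because
\begin{align*}
\ell^t(\textbf{s},x)=\left(\tfrac{e^{-t}}{1-e^{-2t}}\right)^2\E_{X_t|X_0=x}\left[\|\phi(X_t)-X_0\|^2\right],
\end{align*}
the difference $\ell^t(\textbf{s}_1,x)-\ell^t(\textbf{s}_2,x)$ factors as $\E_{X_t|X_0=x}\bigl[(\|\phi_1(X_t)-X_0\|+\|\phi_2(X_t)-X_0\|)\,\|\phi_1(X_t)-\phi_2(X_t)\|\bigr]$ up to the time prefactor, and the first bracketed factor is unbounded in $X_t$ even for $\|x\|\leq R_\epsilon$ since the networks only grow linearly. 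The vector contraction inequality (Lemma \ref{vectorcontract}) needs a uniform Lipschitz constant $L$ in the estimate $|\Psi(f,x)-\Psi(f',x)|\leq L\,\E\|f(X_t)-f'(X_t)\|$; a Cauchy--Schwarz split introduces $L^2$ norms and breaks the form the contraction lemma requires, while your Gaussian-moment argument only controls the expectation, not the pointwise-in-$X_t$ slope. The paper resolves this by also restricting to the trajectory event $W_R=\{\sup_{t_0\leq t\leq T}\|X_t\|\leq R+\|X_0\|\}$, which makes $\|\phi_i(X_t)-X_0\|$ bounded by $O(K(R+S))$ and yields the uniform Lipschitz and boundedness constants in Lemma \ref{boundedliploss}; the probability of $W_R^c$ is then controlled via the large-deviation bound for the OU process, and its contribution appears as the separate truncation term (Proposition \ref{risktruncationerror}). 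Your sketch would need to incorporate this joint truncation of initial datum and trajectory (or an explicit substitute, such as a covering-number bound that avoids contraction) for the estimation step to close.
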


The next result describes how the learned score can be used to produce efficient samples from $p_0$, with explicit sample complexity rates.
\begin{prop}\label{distributionestimationmain}[Distribution estimation error of SGMs]
    Let $\widehat{\textbf{s}}$ denote the empirical risk minimizer in $\nn^{score,t}$ of the empirical risk $\widehat{\mathcal{R}}^t$, let $\widehat{p}$ denote the distribution obtained by simulating the reverse process defined in Equation \ref{sgmrevproc} over the time interval $[t_0,T]$ with $\widehat{\textbf{s}_{t_k}}$ in place of the true score for each discretization point, using the exponential integrator discretization scheme outlined in Section \ref{sec:primer} with maximum step size $\kappa$ and number of steps $M$. Then, with $T = \frac{1}{2} \left( \log(1/d) + 2d \log (1+2\alpha) +2(1-c(\alpha,\beta)) \log(1/\epsilon) \right),$
    $M \geq d T \epsilon$, $\kappa \lesssim \frac{1}{M}$, $t_0 \leq M_{\beta}(f)^{-2} \epsilon^2$, and $N \geq N_{\epsilon,t_0}$ (where $N_{\epsilon,t_0}$ is as defined in Proposition \ref{informalgeneralization}), we have
    $$TV(p_0,\widehat{p}) = O\left( \epsilon \right)$$ with probability $> 1 - \textrm{poly}(1/N)$.
\end{prop}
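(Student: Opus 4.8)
The plan is to decompose the total variation error $TV(p_0, \widehat{p})$ into three pieces and control each separately using the machinery already developed. Writing $\widehat{p}_{t_0}$ for the law at the endpoint of the exact reverse SDE started from $\gamma_d$ at time $T$ and run with the true score down to time $t_0$, and $p_{t_0}$ for the true forward marginal at time $t_0$, I would use the triangle inequality in the form
\begin{align*}
TV(p_0,\widehat{p}) \;\le\; TV(p_0, p_{t_0}) \;+\; TV(p_{t_0}, \widehat{p}_{t_0}^{\,\mathrm{exact}}) \;+\; TV(\widehat{p}_{t_0}^{\,\mathrm{exact}}, \widehat{p}).
\end{align*}
The first term is the \emph{early-stopping error}: since stopping at $t_0 > 0$ rather than $0$ only smears $p_0$ by a small amount of OU noise, and $p_0$ has Gaussian-like tails by Assumption \ref{datadistr2}, one shows $TV(p_0,p_{t_0}) \lesssim M_{\beta}(f)\sqrt{t_0} \lesssim \epsilon$ using the choice $t_0 \le M_\beta(f)^{-2}\epsilon^2$; this is a direct computation with $\Psi_{t_0}(\cdot|X_0)$ and the moment quantity $M_\beta(f)$ that the paper has set up precisely for this purpose.

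The second term is the \emph{initialization-plus-discretization error} of the reverse process \emph{with exact score}, and here I would invoke an off-the-shelf bound of the type in \citet{chen2022sampling}: the initialization error from replacing $p_T$ by $\gamma_d$ decays like $e^{-T}$ (times a polynomial-in-$d$ factor), which the stated choice $T = \tfrac12(\log(1/d) + 2d\log(1+2\alpha) + 2(1-c(\alpha,\beta))\log(1/\epsilon))$ is engineered to make $O(\epsilon)$; and the exponential-integrator discretization error is controlled by the step size and step count, giving $O(\epsilon)$ under $M \ge dT\epsilon$ and $\kappa \lesssim 1/M$. The third term is the \emph{score-estimation error}: by Girsanov's theorem (again in the packaged form used in the SGM convergence literature), $TV(\widehat{p}_{t_0}^{\,\mathrm{exact}},\widehat{p})^2$ is bounded by a time-integral $\int_{t_0}^T \mathcal{R}^t(\widehat{\textbf{s}}_t)\,dt$ of the population score-matching risks of the learned estimators. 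Proposition \ref{informalgeneralization} gives $\mathcal{R}^t(\widehat{\textbf{s}}_t) = O(\epsilon^2)$ for each fixed $t$ once $N \ge N_{\epsilon,t}$, and since $N_{\epsilon,t}$ is increasing as $t \downarrow t_0$, the single hypothesis $N \ge N_{\epsilon,t_0}$ suffices uniformly on $[t_0,T]$; integrating over the interval of length $\lesssim T$ and absorbing $T$-factors into the $O(\cdot)$ (which permits polynomial-in-$d$ constants) yields $O(\epsilon)$. The high-probability qualifier $1 - \mathrm{poly}(1/N)$ is inherited from the concentration step in the proof of Proposition \ref{informalgeneralization}.

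Assembling the three $O(\epsilon)$ bounds gives $TV(p_0,\widehat{p}) = O(\epsilon)$. The main obstacle I anticipate is \emph{bookkeeping the time-dependence consistently}: the score approximation and generalization bounds blow up as $t \to 0$ (note the $t^{-6}$, $t^{-6-72c(\alpha,\beta)}$ factors in $N_{\epsilon,t}$ and the $(1-e^{-2t})^{-3}$ in Proposition 1), so one must verify that the chosen $t_0$, $T$, $M$, $\kappa$ jointly keep every term at the $O(\epsilon)$ scale without the polynomial-in-$d$ ``constants'' secretly hiding a dimension dependence — in particular checking that $R_\epsilon, \tilde R_\epsilon \sim \sqrt{d + \log(1/\epsilon)}$ feed into $\eta(\cdot,\cdot)$ in a way that, under the low-complexity Assumption \ref{approxassump}, keeps $N_{\epsilon,t_0}$ polynomial in $d$. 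A secondary technical point is confirming that the generic Girsanov/discretization bounds from \citet{chen2022sampling} apply here: their hypotheses (e.g. a Lipschitz or second-moment condition on the score along the process) need to be checked against the explicit score formula \eqref{sc} and the tail control from Assumption \ref{datadistr2}, but this is exactly what the linear-growth and polynomial-gradient-growth clauses were included to guarantee.
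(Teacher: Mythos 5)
Your proposal is essentially the paper's own proof, with one cosmetic difference: you split the middle step into two explicit pieces (discretization/initialization error with exact score, and a Girsanov bound on the score-estimation error), whereas the paper invokes Theorem 1 of \citet{benton2023linear} once, which already packages initialization, discretization, and score error into a single KL bound of the form $KL(p_{t_0}\parallel\widehat p)\lesssim \epsilon_{\rm score}^2 + \kappa^2 dM + \kappa dT + de^{-2T}$. Both treatments handle the early-stopping term $TV(p_0,p_{t_0})$ via Lemma \ref{KLderivative} and Pinsker (note the lemma gives $KL(p_t\parallel p_0)\lesssim M_\beta(f)\,t$, so the clean consequence is $TV(p_{t_0},p_0)\lesssim\sqrt{M_\beta(f)\,t_0}$ rather than $M_\beta(f)\sqrt{t_0}$; the paper is itself loose on this point, but the conclusion $O(\epsilon)$ survives under $t_0\le M_\beta(f)^{-2}\epsilon^2$). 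Your remark that $N_{\epsilon,t}$ is monotone so $N\ge N_{\epsilon,t_0}$ suffices uniformly over $[t_0,T]$ is a useful explicit observation that the paper leaves implicit; otherwise the content, the choice of $T$, $\kappa$, $M$, $t_0$, and the sources invoked all match.
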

The proof of the above result has three essential ingredients: Prop \ref{informalgeneralization}, which controls the score estimation error for any fixed $t$, an application of Theorem 1 in \citet{benton2023linear}, which bounds the KL divergence between $p_{t_0}$ and $\widehat{p}$ along the exponential integrator scheme in terms of $\kappa$, $M$, $T$ and the score estimation error, and Lemma \ref{KLderivative}, which proves that the KL divergence between $p_0$ and $p_{t_0}$ can be bounded by $M_{\beta}(f)$.

\subsection{Examples}
We now discuss several concrete examples to which our general theory can be applied.

\vspace{2mm}
\textbf{Infinite-width networks:} Suppose that $p_0 \propto e^{-\frac{\|x\|^2}{2}+f(x)}$, where $f(x)$ is an infinite-width ReLU network of bounded total variation, i.e., $f(x) = \int_{\R^{d+2}} a(w^Tx+b)^{(+)} d\mu(a,w,b)$, where $c > 0$ and $\mu$ is a probability measure on $\R^{d+2}$. For such an $f$, the \textit{Barron norm} is defined as
$$ \|f\|_{\mathcal{B}} := \inf_{\mu} \int_{\R^{d+1}} |a|(\|w\|_1 + |b|) \mu(da,dw,db),
$$
where the infimum is over all $\mu$ such that the integral representation holds. The space of all such functions is sometimes referred to as the Barron space or variation space associated to the ReLU activation function. The Barron space has been identified as the 'correct' function space associated with approximation theory for shallow ReLU neural networks, since direct and inverse approximation theorems hold. Namely, for any $f$ in the Barron space and any $R > 0$, there exists a shallow ReLU neural network $f_{NN}$ such that $\sup_{\|x|| \leq R} |f(x) - f_{NN}(x)| \leq R\epsilon$, where $f_{NN}$ has $O(\|f\|_{\mathcal{B}}^2 \epsilon^{-2})$ parameters and $\|f_{NN}\|_{\textrm{path}} \lesssim \|f\|_{\mathcal{B}}$. Conversely, any function which can be approximated to accuracy $\epsilon$ by a network with path norm uniformly bounded in $\epsilon$ belongs to the Barron space. A comprehensive study of Barron spaces can be found in \citet{ma2022barron}.

Under the Barron space assumption on $f$, we can leverage the linear growth and fast approximation rate of Barron spaces to obtain dimension-independent sample complexity rates for SGMs.

%One quickly observes that $\|f(x)\| \leq \|f\|_{\mathcal{B}} \|x\|$. Thus, in the case that the log-relative density $f$ of $p_0$ belongs to the Barron space, the quadratic upper and lower bounds are satisfied trivially: for any $r > 0$, we have for all $\|x\| \geq r$ that $-\alpha \|x\|^2 \leq f(x) \leq \beta \|x\|^2$, where $\alpha = \beta = \frac{\|f\|_{\mathcal{B}}}{r}.$ In this case, we can refine the distribution estimation result by explicitly bounding the growth parameters $\alpha, \beta$ and the complexity function $\eta(R,\epsilon)$.

\begin{prop}\label{barronestimation}[Distribution estimation under Barron space assumption]
    Suppose that $p_0(x) \propto e^{-\|x\|^2/2 +f(x)}$, where $f$ belongs to the Barron space. Let $\|f\|_{\mathcal{B}}$ denote the Barron norm and let $c_f = \inf\{c > 0: |f(x)| \leq c\|x\|\} \leq \|f\|_{\mathcal{B}}.$ Given $\delta \in (0,1)$, let $\epsilon(\delta)$ be small enough that $\frac{8c_f}{\tilde{R}_{\epsilon}} \leq \delta,$ where $\tilde{R}_{\epsilon} = \sqrt{d-\log(M_{1+\delta/4}(f)^{-10} \epsilon^{-10})}$ Then for all $\epsilon \leq \epsilon_0$, the distribution $\widehat{p}$ learned by the diffusion model satisfies
    $$ TV(\widehat{p},p_0) = O(\epsilon),
    $$
    provided the number of samples $N$ satisfies.
    $$ N = \Omega \left( 2^{2L_f+10} \left(1+\frac{2c_f}{\tilde{R}_{\epsilon}} \right)^{12d} M_{1+\delta/4}^{12+144 \delta} \epsilon^{-16 - 192 \delta} \|f\|_{\mathcal{B}}^4 \right).
    $$
\end{prop}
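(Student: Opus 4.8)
The plan is to verify that a Barron-space log-relative density satisfies Assumptions \ref{datadistr2} and \ref{approxassump}, then feed the resulting quantities into Proposition \ref{distributionestimationmain} and optimize the remaining free parameter. First I would record the consequences of the direct approximation theorem for Barron functions: for any $R,\epsilon > 0$ there is a shallow ReLU network $\phi_f^{R,\epsilon}$ with $\sup_{\|x\| \le R}|f(x) - \phi_f^{R,\epsilon}(x)| \le R\epsilon$ and $\|\phi_f^{R,\epsilon}\|_{\textrm{path}} \lesssim \|f\|_{\mathcal{B}}$, so that in the notation of Assumption \ref{approxassump} we may take $\eta(\epsilon,R) \lesssim \|f\|_{\mathcal{B}}$, a constant independent of both $\epsilon$ and $R$. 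This is the crucial structural input: it collapses the $\eta^4$ factors in the sample-complexity bound of Proposition \ref{informalgeneralization} to $\|f\|_{\mathcal{B}}^4$. Second, I would check the tail condition in Assumption \ref{datadistr2}(1): since $|f(x)| \le c_f\|x\|$, for any fixed radius $\|x\| \ge r$ we have $f(x) \le c_f\|x\| \le (c_f/r)\|x\|^2$ and similarly $f(x) \ge -(c_f/r)\|x\|^2$, so with $\alpha = \beta = c_f/\tilde R_\epsilon$ (the choice implicit in the statement via $\tilde R_\epsilon$) the constant $c(\alpha,\beta) = \frac{4(\alpha+\beta)}{1-\beta} = \frac{8c_f/\tilde R_\epsilon}{1 - c_f/\tilde R_\epsilon}$; the hypothesis $8c_f/\tilde R_\epsilon \le \delta$ then forces $c(\alpha,\beta) \le \delta/(1-\delta/4) \lesssim \delta$, which is $< 1$ for $\delta$ small, so Assumption \ref{datadistr2} holds. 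The differentiability of $f$ and polynomial growth of $\nabla f$ follow from the integral representation, and the normalization bound $(2\pi)^{d/2}/Z \le C$ follows since $Z = \int e^{f} \, d\gamma_d \ge e^{-\int c_f\|x\| d\gamma_d} \gtrsim e^{-c_f\sqrt d}$, which is at worst sub-exponential — here one must be slightly careful, but since $c_f \le \|f\|_{\mathcal B}$ is treated as a constant this is $\textrm{poly}(d)$ up to absorbing a $d$-dependence into the $O$ notation as the paper permits.

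Third, with both assumptions in force I would invoke Proposition \ref{distributionestimationmain} with the parameter choices dictated there: $T$ as given, $M \ge dT\epsilon$, $\kappa \lesssim 1/M$, and $t_0 \le M_\beta(f)^{-2}\epsilon^2$. The conclusion $TV(p_0,\widehat p) = O(\epsilon)$ with probability $> 1 - \textrm{poly}(1/N)$ then holds provided $N \ge N_{\epsilon,t_0}$. It remains to simplify $N_{\epsilon,t_0}$ under the Barron substitutions. Plugging $\eta \lesssim \|f\|_{\mathcal B}$, $\alpha = c_f/\tilde R_\epsilon$, $c(\alpha,\beta) \lesssim \delta$, and $t_0 = \Theta(M_\beta(f)^{-2}\epsilon^2)$ into the second branch of the max in Proposition \ref{informalgeneralization}, the factor $t_0^{-6-72c(\alpha,\beta)}\epsilon^{-4-48c(\alpha,\beta)}$ becomes $M_\beta(f)^{12+144\delta}\epsilon^{-12-144\delta}\epsilon^{-4-48\delta} = M_\beta(f)^{12+144\delta}\epsilon^{-16-192\delta}$, matching the stated exponent $\epsilon^{-16-192\delta}$; the $(1+2\alpha)^{12d}$ factor is exactly $(1 + 2c_f/\tilde R_\epsilon)^{12d}$, and the $2^{2L_f+10}$ and $\|f\|_{\mathcal B}^4$ prefactors carry through directly. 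One then checks that the first branch of the max is dominated by the second under these choices (the first branch's $\epsilon$-exponent is milder because $t_0$ only enters the second branch's $t^{-6}$ with the full $M_\beta(f)^{-2}\epsilon^2$ substitution), so the second branch governs $N_{\epsilon,t_0}$ and yields the advertised bound.

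The main obstacle, I expect, is the bookkeeping around the interllocking definitions of $\tilde R_\epsilon$, $M_\beta(f)$, $t_0$, and $c(\alpha,\beta)$ — in particular verifying that the self-referential choice $\tilde R_\epsilon = \sqrt{d - \log(M_{1+\delta/4}(f)^{-10}\epsilon^{-10})}$ is consistent (it appears on both sides once $t_0$ and the $\epsilon^{-10}$-type factors are expanded) and that $\alpha,\beta$ depend on $\epsilon$ only through $\tilde R_\epsilon$, which grows like $\sqrt{d + \log(1/\epsilon)}$, so that $c(\alpha,\beta) \to 0$ slowly and can genuinely be absorbed into the $\delta$-dependent exponents uniformly for $\epsilon \le \epsilon_0(\delta)$. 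A secondary subtlety is controlling $M_\beta(f) = M_{1+\delta/4}(f)$ for a Barron function: one needs that rescaling the argument by $1/(1-2\beta) = 1/(1 - c_f/(2\tilde R_\epsilon))$, a factor very close to $1$, keeps $\int |f(x/(1-2\beta))|^2 \gamma_d(dx)$ finite and comparable to $\|f\|_{\mathcal B}^2 d$ or similar — this uses the linear growth $|f(x)| \le c_f\|x\|$ together with Gaussian concentration, and the near-unit rescaling factor is what makes it harmless. Once these consistency checks are dispatched, the proof is essentially a substitution into Proposition \ref{distributionestimationmain}.
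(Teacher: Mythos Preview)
Your proposal is correct and follows essentially the same approach as the paper: verify that the linear growth of Barron functions gives $\alpha=\beta=c_f/\tilde R_\epsilon$ in Assumption~\ref{datadistr2}, use the direct approximation theorem to set $\eta(\epsilon,R)\lesssim\|f\|_{\mathcal B}$ in Assumption~\ref{approxassump}, and then substitute into Proposition~\ref{distributionestimationmain}. Your write-up is in fact more thorough than the paper's own proof, which simply states that ``the result essentially follows from the estimates in Theorem~\ref{distributionestimationmain} by replacing $\alpha$ and $\beta$ with $c_f/\tilde R_\epsilon$ and replacing $\eta(R,\epsilon)$ with $\|f\|_{\mathcal B}$'' and does not carry out the explicit $t_0$-substitution that produces the $\epsilon^{-16-192\delta}$ exponent.
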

When $\epsilon$ is small, we essentially require $\epsilon^{-10}$ samples to learn $p_0$ to accuracy $\epsilon$ (up to the prefactors) - this is a significant improvement to classical sample complexity bounds in high dimensions, wherein the rate typically tends to zero as $d \rightarrow \infty.$ We emphasize that the significance of our contribution is that the \textit{rate} of the sample complexity is independent of dimension, and we leave it as an open direction whether all prefactors can be improved to depend only polynomially on $d$.
\vspace{2mm}

\vspace{2mm}
\textbf{Gaussian mixtures:} Distributions that describe real-world data are often highly \textit{multimodal}, and a natural model for such distributions is the Gaussian mixture model; we assume the initial density to be of the form
$$ p_0 = \frac{1}{2} \left(\frac{1}{Z_1} \exp\left( -\frac{\|x-x_1\|^2}{2\sigma_{\min}^2} \right) + \frac{1}{Z_2} \exp \left( -\frac{\|x-x_2\|^2}{\sigma_{\max}^2} \right) \right),
$$
where $0 < \sigma_{\min}^2 \leq \sigma_{\max}^2$ are the bandwidths and $x_1, x_2 \in \R^d$ are the modes of the distribution. The results that follow can easily be adapted to mixtures with more than two components and with arbitrary weights, but we keep the setting as simple as possible to illustrate the results. Due to the growth condition imposed in Assumption \ref{datadistr2}, our theory cannot be applied to Gaussian mixtures with arbitrarily small bandwidths; this is discussed further in Appendix \ref{appendixdistributionestimation}. We prove the following distribution estimation result for Gaussian mixtures.
\begin{prop}\label{gaussmixtureestimation}[Distribution estimation for Gaussian mixture]
    Given $\epsilon > 0$, set $R_{\epsilon} = \sqrt{d+\frac{1}{1-c(\alpha,\beta)}\log \left( \frac{(1+2\alpha)^d}{\epsilon t^2}\right)}$ and $\tilde{R}_{\epsilon} = \sqrt{d-\log(t^6 \epsilon^4)}$. Let $p_0(x) = \frac{1}{2} \left(\frac{1}{Z_1} e^{-\frac{\|x-x_1\|^2}{2\sigma_{\min}^2}} + \frac{1}{Z_2} e^{-\frac{\|x-x_2\|^2}{2\sigma_{\max}^2}} \right)$ be a mixture of two Gaussians, and fix $\delta \ll 1$. Assume that the bandwidths $\sigma_{\min}^2, \sigma_{\max}^2$ satisfy $c(\alpha,\beta) = \frac{4(\alpha+\beta)}{1-2\beta} < 1$, where $\alpha$ and $\beta$ are as defined in Assumption \ref{datadistr2}. Then there exists an $\epsilon_0$ (depending on $\delta$) such that for any $\epsilon \leq \epsilon_0$, the distribution $\widehat{p}$ learned by the SGM satisfies 
    $$ TV(\widehat{p}, p_0) = O\left((1+2\alpha)^d \epsilon^{1-c(\alpha,\beta)} \right),
    $$
    provided the number of samples $N$ satisfies $N \geq \max(N_{\epsilon,1}, N_{\epsilon,2})$, where
    $$ N_{\epsilon,1} = \Omega \left( 2^{2 L_f+10} d^2 t_0^{-6} (1+2\alpha)^{12d} \epsilon^{-4} \cdot \sup_{\|x\| \leq R_{\epsilon}} p_0^{-4}(x) \right)
    $$
    and $$N_{\epsilon,2} =\Omega \left( 2^{2L_f+10} (1+2\alpha)^{12d} t^{-6-72c(\alpha,\beta)} \epsilon^{-4-48c(\alpha,\beta)} \cdot \sup_{\|x\| \leq \tilde{R}_{\epsilon}} p_0^{-4}(x) \right).$$
    As an example, if $\sigma_{\min}^2 = \sigma_{\max}^2 = 1$, then we have $$TV(\widehat{p},p_0) = O(\epsilon)$$ provided the number of samples satisfies
    $$ N = \Omega \left(2^{2L_f+10} (1+\delta)^{12d} e^{d/2} \epsilon^{-24 - 768\delta} \right).
    $$

    %$$ N = \Omega \left( 2^{14} d^6 \sigma_{\min}^4 \max_{\|x\| \leq \sqrt{d + \log(\epsilon^{-1})}} p_0^{-4}(x) \cdot  \epsilon^{-16+4 c(\alpha,\beta)} \right).
    %$$
    %As an example, if $\sigma_{\min}^2 = \sigma_{\max}^2 = 1$, then we get
    %$$ TV(\widehat{p},p_0) = O\left((1+\delta)^d \epsilon^{1-\frac{4\delta}{1-\delta}} \right)
    %$$
    %if the number of samples satisfies
    %$$ N = \Omega \left(2^{14} d^6 e^{2d} \epsilon^{-20} \right).
    %$$
\end{prop}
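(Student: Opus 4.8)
The plan is to derive Proposition~\ref{gaussmixtureestimation} from the general distribution-estimation bound, Proposition~\ref{distributionestimationmain}, by verifying that the two-component Gaussian mixture $p_0$ satisfies Assumptions~\ref{datadistr2} and \ref{approxassump} and then substituting the mixture-specific values of $\alpha,\beta$, $M_{\beta}(f)$, $\eta(\epsilon,R)$ and $\sup_{\|x\|\le R}p_0^{-1}$ into the sample-complexity thresholds of Propositions~\ref{informalgeneralization}--\ref{distributionestimationmain}. Writing $f(x)=\tfrac{\|x\|^2}{2}+\log\!\big(\tfrac{1}{2Z_1}e^{-\|x-x_1\|^2/(2\sigma_{\min}^2)}+\tfrac{1}{2Z_2}e^{-\|x-x_2\|^2/(2\sigma_{\max}^2)}\big)$ up to the additive constant absorbed into the normalizer $Z$, the structural point is that $f$ is a log-sum-exp of two quadratics plus a quadratic. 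Factoring out the component with the larger variance and using $\log(a+b)=\log b+\log(1+a/b)$ gives, as $\|x\|\to\infty$,
\[
 \Big(\tfrac12-\tfrac{1}{2\sigma_{\min}^2}\Big)\|x\|^2+O(\|x\|)\ \le\ f(x)\ \le\ \Big(\tfrac12-\tfrac{1}{2\sigma_{\max}^2}\Big)\|x\|^2+O(\|x\|),
\]
which is the growth bound of Assumption~\ref{datadistr2} with $\alpha=\tfrac{(1-\sigma_{\min}^2)_+}{2\sigma_{\min}^2}+o(1)$ and $\beta=\tfrac{(\sigma_{\max}^2-1)_+}{2\sigma_{\max}^2}+o(1)$; the bandwidth hypothesis is precisely $c(\alpha,\beta)<1$. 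Smoothness of $f$ is immediate, $\nabla f$ has at most linear growth (the leading quadratic terms of the competing exponents cancel in the gradient of the log-sum-exp), and the effective normalizer is comparable to that of a Gaussian of variance $\max(1,\sigma_{\max}^2)$, so condition~4 holds with a constant polynomial in $d$.

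Next I would verify Assumption~\ref{approxassump}. On $\{\|x\|\le R\}$ I would build $\phi_f^{R,\epsilon}$ as a composition of (i) the affine/quadratic maps $x\mapsto(\langle x,x_i\rangle,\|x_i\|^2,\|x\|^2)$, the only nontrivial piece being $\|x\|^2$ restricted to a ball, which is approximated by a shallow ReLU network of path norm $\mathrm{poly}(R,d)\cdot(\text{target accuracy})^{-1}$; and (ii) the scalar map $(u_1,u_2,v)\mapsto v/2+\log(e^{u_1}+e^{u_2})$, which is affine in $v$ and globally $1$-Lipschitz in $(u_1,u_2)$, hence approximable on a bounded box by a ReLU network of dimension-free path norm via the standard direct approximation theorem. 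Using the sum and composition rules for the path norm yields $\eta(\epsilon,R)=\mathrm{poly}(R,1/\epsilon,\|x_1\|,\|x_2\|,1/\sigma_{\min}^2,d)$, which grows only mildly as $\epsilon\to0$. One also needs $M_{\beta}(f)=\int|f(x/(1-2\beta))|^2\gamma_d(dx)$; since $|f(x)|\lesssim\|x\|^2$ with a fixed coefficient, $M_{\beta}(f)\lesssim\mathrm{poly}(d)(1-2\beta)^{-4}$, so the requirement $t_0\le M_{\beta}(f)^{-2}\epsilon^2$ of Proposition~\ref{distributionestimationmain} is mild and only affects prefactors.

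With the assumptions in place, Proposition~\ref{distributionestimationmain} combined with the score-approximation rate of Proposition~1 --- which turns an $f$-approximation accuracy $\epsilon$ into an $L^2(p_t)$ score error of order $(1-e^{-2t})^{-1}(1+2\alpha)^d\epsilon^{1-c(\alpha,\beta)}$, up to the lower-order branch of that bound --- gives $TV(\widehat{p},p_0)=O\big((1+2\alpha)^d\epsilon^{1-c(\alpha,\beta)}\big)$. The two thresholds $N_{\epsilon,1},N_{\epsilon,2}$ are the two branches of $N_{\epsilon,t_0}$ in Proposition~\ref{informalgeneralization} with $t_0\asymp\epsilon^2$, where, because the mixture density is explicit, the score network is built directly from \eqref{sc} rather than from $\phi_f$; its path norm is then governed by how small $G_t$ --- equivalently $p_0$ --- can be on the relevant ball, so the abstract factor $\eta^4$ is replaced by $\sup_{\|x\|\le R_\epsilon}p_0^{-4}(x)$, resp.\ $\sup_{\|x\|\le\tilde R_\epsilon}p_0^{-4}(x)$. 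Finally, for $\sigma_{\min}^2=\sigma_{\max}^2=1$ the quadratic terms cancel exactly and $f(x)=\log\!\big(e^{\langle x,x_1\rangle-\|x_1\|^2/2}+e^{\langle x,x_2\rangle-\|x_2\|^2/2}\big)+\mathrm{const}$, a log-sum-exp of affine functions, so $f$ has only linear growth; one may take $\alpha,\beta$ (hence $c(\alpha,\beta)$) at most $\delta$, the prefactor $(1+2\alpha)^d$ becomes $(1+\delta)^{O(d)}$, and bounding $\sup_{\|x\|\le\tilde R_\epsilon}p_0^{-1}(x)\lesssim(2\pi)^{d/2}e^{\tilde R_\epsilon^2/2+O(\tilde R_\epsilon)}$ over a ball of radius $\tilde R_\epsilon\asymp\sqrt d$ contributes the $e^{d/2}$ factor; after rescaling the accuracy so that the TV error reads $O(\epsilon)$, the factors $t_0^{-6}\asymp\epsilon^{-12}$, $\epsilon^{-4-48c(\alpha,\beta)}$, $\sup p_0^{-4}$ and the rescaling Jacobian combine to the stated $\epsilon^{-24-768\delta}$.

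The step I expect to be the main obstacle is the complexity bookkeeping underlying the last two paragraphs: controlling the path norm of the composed network that approximates $f$ --- the delicate building block being $\|x\|^2$, whose ReLU approximation on $\{\|x\|\le R\}$ has path norm growing with both $R$ and the target accuracy --- and then propagating this together with the explicit density through the score formula \eqref{sc} (in particular through the division by $G_t$, which is exactly what forces the $\sup_{\|x\|\le R}p_0^{-1}$ factors) into Proposition~\ref{informalgeneralization}, all while keeping every dimension dependence either polynomial in $d$ or of the explicit exponential form $(1+2\alpha)^d$, $e^{d/2}$ appearing in the statement.
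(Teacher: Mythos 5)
Your overall strategy is the same as the paper's: verify the quadratic growth condition of Assumption \ref{datadistr2} with $\alpha,\beta$ tied to $\sigma_{\min},\sigma_{\max}$, verify Assumption \ref{approxassump}, and then substitute into the general thresholds of Propositions \ref{informalgeneralization}--\ref{distributionestimationmain}. Your treatment of Assumption \ref{datadistr2} matches the paper's almost line for line. Where you genuinely diverge is in verifying Assumption \ref{approxassump}, i.e.\ the content of Lemma \ref{approxmixture}. You propose the ``log-sum-exp'' decomposition: approximate the inner quadratics and $\|x\|^2$, then compose with the $1$-Lipschitz scalar map $(u_1,u_2,v)\mapsto v/2+\log(e^{u_1}+e^{u_2})$. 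The paper instead writes $f = \|x\|^2/2 + \log p_0 + \textrm{const}$, approximates $p_0$ itself as a spectral Barron function with path norm $O(d\sigma_{\min}^{-1})$, and composes with a ReLU approximation of $\log$ on the image $[\min_{B_R}p_0,\max_{B_R}p_0]$. The log approximant has path norm $O\bigl((\min_{B_R}p_0)^{-1}\bigr)$, which is precisely where the factor $\sup_{\|x\|\le R}p_0^{-1}$ --- and hence $\sup p_0^{-4}$ in the sample-complexity thresholds --- comes from. Your decomposition buys a tighter $\eta$ (polynomial in $R,d,\sigma_{\min}^{-2},\|x_i\|$, with no $\sup p_0^{-1}$ blow-up), since the outer log-sum-exp is globally $1$-Lipschitz rather than having an $a^{-1}$ singularity near the lower end of its domain.

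There is, however, a genuine gap in how you connect this to the statement. Having established a cleaner $\eta$, you then pivot to a different story --- ``the score network is built directly from \eqref{sc} rather than from $\phi_f$; its path norm is then governed by how small $G_t$ can be'' --- to explain the $\sup_{\|x\|\le R}p_0^{-4}$ factors in $N_{\epsilon,1},N_{\epsilon,2}$. This is not how the paper (or your own earlier paragraph) produces those factors, and you do not carry it through: in the general framework the path norm of the quotient network $\phi_{\mathrm{quot}}(\phi_F,\phi_G)$ is already controlled in Lemma \ref{approxratio} via the $G_t$-lower bound of Lemma \ref{fggrowth}, which is expressed through $(1+2\alpha)^{-d/2}e^{-\alpha R^2}$, not through $\sup p_0^{-1}$; the only place $\sup p_0^{-1}$ enters the paper's proof is through $\eta$ via Lemma \ref{approxmixture}. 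So your proposal ends up with two incompatible half-arguments: a log-sum-exp route that would give a \emph{smaller} $\eta$ (and hence a bound that does not literally read as in the statement), and a hand-waved ``score network built directly'' route that is not developed. To make your route self-contained you would either need to prove a quantitative path-norm bound for the ReLU approximation of the two-variable log-sum-exp on $[-M,M]^2$ and then track its (different, and presumably smaller) $\eta$ through Proposition \ref{informalgeneralization}, or else reproduce Lemma \ref{approxmixture}'s $\log\circ p_0$ composition as the paper does.

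One smaller point: you describe the outer map as ``approximable on a bounded box by a ReLU network of dimension-free path norm via the standard direct approximation theorem'' for Lipschitz functions. Lipschitz-ness alone does not bound the Barron/path norm; what saves you here is that the log-sum-exp is $C^\infty$ with all derivatives bounded and the input is two-dimensional, so a spectral-Barron or integral-representation bound (as in Lemma \ref{approxbuildingblocks}) applies with path norm polynomial in the box size $M$. That justification should be made explicit if this route is pursued.
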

The details of the proof are presented in Appendix \ref{appendixdistributionestimation}. One technical detail is that we need to be able to approximate the log density by a ReLU neural network so that Assumption \ref{approxassump} is satisfied. Unlike in the previous example, the log-likelihood is \textit{not} a Barron function. However, it can be shown that for any $R$, the restriction of the log-likelihood to $B_R$ can be represented by a composition of two Barron functions, and from this it follows that the log-likelihood can be locally approximated by a ReLU network with two hidden layers.

\section{Conclusion}
In this paper, we derived distribution estimation bounds for SGMs, applied to a family of sub-Gaussian densities parameterized by Barron functions. The highlight of our main result is that the sample complexity independent of the dimension. An important message of this work is that, for a data distribution of the form in Assumption \ref{datadistr2}, a low-complexity structure of the log-likelihood $f$ induces a similar low-complexity structure of the score function. In particular, the score function can be approximated in $L^2$ by a neural network without the curse of dimensionality. Some recent works \citep{oko2023diffusion, chen2023score} have derived distribution estimation bounds under assumptions of \textit{low-dimensionality} on the data; we chose to investigate an approximation-theoretic notion of 'low-complexity', and thus our results are a complement to these existing works.

We conclude by mentioning some potential directions for future research. First, we wonder whether similar results could be achieved if we relax some of our assumptions on the data distribution; for instance, we would like to extend the results to the case where the log-likelihood is allowed to decay arbitrarily quickly. Second, it is not clear whether our estimation rate for the class of densities considered is sharp, and thus obtaining lower bounds for sampling from such densities is an interesting open problem. We conjecture that our generalization error bound can be improved using more refined techniques such as local Rademacher complexities. Finally, obtaining training guarantees of score-based generative models remains an important open problem, and another natural direction for future work would be to study the gradient flow/gradient descent dynamics of score matching under similar assumptions on the target distribution.

\section{Acknowledgements}
Frank Cole and Yulong Lu thank the support from the National Science Foundation through the award DMS-2343135.

\bibliography{refs}

\newpage
\appendix
\addcontentsline{toc}{section}{Appendix} % Add the appendix text to the document TOC
\part{Appendix} % Start the appendix part
{\parttoc} % Insert the appendix TOC

\section{Properties of the score function and process density}
Let us recall the setup of SGMs. We are given samples from a high-dimensional probability distribution $p_0$, and we wish to learn additional samples. We define the forward process $X_t$ as the solution to the SDE
$$
    \begin{cases}
        dX_t = -X_t dt + \sqrt{2} dW_t, \; 0 \leq t \leq T, \\
        X_0 \sim p_0,
    \end{cases}
$$
and we note that the marginal distribution $p_t$ of $X_t$ at time $t$ quickly approaches the standard normal distribution $\gamma_d$. The reverse process $\bar{X}_t = X_{T-t}$ happens to satisfy the SDE
\begin{equation}\label{revsde}
    \begin{cases}
        d\bar{X}_t = (\bar{X}_t + 2 \nabla_x \log p_{T-t}) dt + \sqrt{2} dW_t, \; 0 \leq t \leq T, \\
        \bar{X}_0 = X_T,
    \end{cases}
\end{equation}
and so to sample the data distribution $p_0$, we run the SDE in equation \ref{revsde}, but with $\bar{X}_0$ as a standard normal and with the score function $\nabla_x \log p_{T-t}$ replaced by an empirical estimator $\hat{\textbf{s}}(t,x)$. This is made possible by a technique known as score matching \citep{hyvarinen2005estimation,vincent2011connection}, which frames the score estimation as a supervised learning problem.

The key assumption of our analysis is that the data distribution $p_0$ is proportional to $e^{-\frac{\|x\|^2}{2} + f(x)}$, where $f$ has 'low-complexity' in the sense of Assumptions \ref{datadistr2} and \ref{approxassump}. Under this assumption, we can explicitly compute the score function and derive sub-Gaussian bounds on the forward process density.

\begin{lemma}\label{score}
    Let $\gamma_d(du)$ denote the standard Gaussian probability measure on $\R^d$. Then under Assumption \ref{datadistr2}, the $j^{\textrm{th}}$ component of the score function of the diffusion process is given by
    \begin{align}\label{sco}
        (\nabla_x \log p_t(x))_j &= \frac{1}{1-e^{-2t}} \left( -x_j + e^{-t} \frac{F_t^j(x)}{G_t(x)} \right),
    \end{align}
    where $F_t^j(x) = \int_{\R^d} (e^{-t} x_j + \sqrt{1-e^{-2t}}u_j) e^{f(e^{-t}x + \sqrt{1-e^{-2t}}u)} \gamma_d(du)$ and $G_t(x) = \int e^{f(e^{-t}x + \sqrt{1-e^{-2t}}u)} \gamma_d(du)$.
\end{lemma}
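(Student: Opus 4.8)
The plan is to compute the density $p_t$ explicitly using the representation from Equation \ref{pt}, substitute the assumed form of $p_0$, and then differentiate the logarithm. First I would start from the conditional density: since $X_t \mid X_0 = y$ is Gaussian with mean $e^{-t}y$ and variance $1-e^{-2t}$, we have $p_t(x) = \int \Psi_t(x|y) \, p_0(y) \, dy$, where $\Psi_t(x|y)$ is the stated Gaussian kernel. Plugging in $p_0(y) = \frac{1}{Z}\exp(-\|y\|^2/2 + f(y))$ and combining the two Gaussian exponents in $y$ (a completion of the square in $y$), the cross terms in $x$ and $y$ recombine; the cleanest route is the change of variables $y = e^{-t}x + \sqrt{1-e^{-2t}}\,u$, under which the Gaussian factor $e^{-\|y\|^2/2}\,dy$ together with the conditional kernel collapses (up to an $x$-dependent but $u$-independent prefactor) into the standard Gaussian measure $\gamma_d(du)$ times $\exp(-\|x\|^2/2)$ times a constant. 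This yields
\begin{equation*}
p_t(x) = c_{d,t}\, e^{-\|x\|^2/(2(1-e^{-2t})) \cdot (\text{something})} \cdot G_t(x),
\end{equation*}
or more precisely $p_t(x)$ proportional to $e^{-\|x\|^2/2}\, G_t(x)$ with $G_t(x) = \int e^{f(e^{-t}x+\sqrt{1-e^{-2t}}u)}\gamma_d(du)$; I need to track the exact $x$-dependent prefactor carefully, which is where the $\frac{1}{1-e^{-2t}}$ factors originate.

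Next I would take $\log p_t(x)$ and differentiate in $x_j$. The term coming from the Gaussian prefactor contributes the $-x_j/(1-e^{-2t})$ piece (after the bookkeeping on the prefactor is done correctly), and differentiating $\log G_t(x)$ gives $\frac{\partial_{x_j} G_t(x)}{G_t(x)}$. Differentiating under the integral sign, $\partial_{x_j}$ of $e^{f(e^{-t}x+\sqrt{1-e^{-2t}}u)}$ brings down $e^{-t}\, \partial_j f(\cdots)$; then an integration by parts in $u_j$ against the Gaussian measure (using $\partial_{u_j}\gamma_d(du) = -u_j\,\gamma_d(du)$ in the weak sense, i.e. Gaussian integration by parts / Stein's identity) converts the $\partial_j f$ into a factor of $u_j$, producing exactly the numerator $F_t^j(x) = \int (e^{-t}x_j + \sqrt{1-e^{-2t}}u_j)e^{f(\cdots)}\gamma_d(du)$. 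Assembling the pieces gives Equation \ref{sco}.

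The main obstacle I expect is the careful bookkeeping of the $x$-dependent prefactor when passing from Equation \ref{pt} to the $e^{-\|x\|^2/2}G_t(x)$ form — in particular making sure the coefficient of $\|x\|^2$ and the $\frac{1}{1-e^{-2t}}$ normalizations come out exactly right, since a sloppy completion of the square easily produces spurious factors. A secondary technical point is justifying differentiation under the integral sign and the Gaussian integration by parts: this requires knowing that $e^{f}$ and $|\nabla f| e^{f}$ are integrable against the relevant Gaussians and that the integrals are finite and smooth in $x$, which is guaranteed by the growth bounds in Assumption \ref{datadistr2} (the condition $c(\alpha,\beta)<1$ ensuring sub-Gaussian tails, continuous differentiability of $f$, and at-most-polynomial growth of $\sup_{\|x\|\le R}\|\nabla f(x)\|$). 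Once these integrability facts are in hand, the computation is routine; I would state the needed domination bound as a preliminary observation and then proceed with the change of variables and integration by parts.
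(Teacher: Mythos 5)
Your argument is correct and yields exactly Equation~\ref{sco}, but it takes a genuinely different route from the paper's. The paper differentiates $p_t$ \emph{before} the change of variables: starting from $p_t(x)=\frac{1}{Z_t}\int e^{-\|x-e^{-t}y\|^2/(2(1-e^{-2t}))}e^{-\|y\|^2/2+f(y)}\,dy$, the $x_j$-derivative of the Gaussian kernel brings down $-(x_j-e^{-t}y_j)/(1-e^{-2t})$ directly, so the factor $y_j$ (which becomes $e^{-t}x_j+\sqrt{1-e^{-2t}}\,u_j$ after substitution) appears with no integration by parts; the paper then completes the square to swap the roles of $x$ and $y$, cancels the common $e^{-\|x\|^2/2}$, and changes variables. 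You instead change variables \emph{first} to obtain $p_t(x)=\tfrac{1}{Z}e^{-\|x\|^2/2}G_t(x)$, differentiate $\log p_t$, and use Gaussian integration by parts (Stein's identity) to convert $\partial_j f$ into a $u_j$ factor. Both are clean; the paper's route is slightly more elementary because the needed $y_j$ factor falls out of the kernel derivative automatically, avoiding the Stein step and its attendant integrability justification, whereas your route makes the relative-density structure $p_t\propto e^{-\|x\|^2/2}G_t(x)$ explicit up front, which is conceptually illuminating. One small bookkeeping correction to your narrative: the Gaussian prefactor $e^{-\|x\|^2/2}$ contributes just $-x_j$ (not $-x_j/(1-e^{-2t})$) to $\partial_j\log p_t$; the overall $\tfrac{1}{1-e^{-2t}}$ normalization only emerges after you write $\int u_j e^{f(\cdot)}\gamma_d(du)=\bigl(F_t^j(x)-e^{-t}x_jG_t(x)\bigr)/\sqrt{1-e^{-2t}}$ and recombine, i.e. it comes from the algebraic repackaging of $\nabla\log G_t$ into the $F_t^j/G_t$ form, not from the prefactor alone. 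This is exactly the kind of ``spurious factor'' hazard you flagged, and it resolves correctly if you track it.
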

\begin{proof}[Proof of Lemma \ref{score}]
    The forward process density is given by
    \begin{align*}
        p_t(x) = \frac{1}{Z_t}\int e^{-\frac{-\|x-e^{-t}y\|^2}{2(1-e^{-2t})}} e^{-\|y\|^2/2 + f(y)} dy,
    \end{align*}
where $Z_t$ is the normalization constant. We therefore have
\begin{align*}
    (\nabla_x p_t(x))_j = \frac{1}{Z_t(1-e^{-2t})} \left( -x_j p_t(x) + e^{-t} \int y_j e^{-\frac{-\|x-e^{-t}y\|^2}{2(1-e^{-2t})}} e^{-\|y\|^2/2 + f(y)} dy \right)
\end{align*}
and thus
\begin{align*}
    (\nabla_x \log p_t(x))_j = \frac{1}{1-e^{-2t}} \left( -x_j  + e^{-t} \frac{\int y_j e^{-\frac{-\|x-e^{-t}y\|^2}{2(1-e^{-2t})}} e^{-\|y\|^2/2 + f(y)} dy}{\int e^{-\frac{-\|x-e^{-t}y\|^2}{2(1-e^{-2t})}} e^{-\|y\|^2/2 + f(y)} dy} \right).
\end{align*}
By completing the square, we have
\begin{align*}
    e^{-\frac{\|x-e^{-t}y\|^2}{2(1-e^{-2t})}} e^{-\|y\|^2/2} = e^{-\frac{\|y-e^{-t}x\|^2}{2(1-e^{-2t})}} e^{-\|x\|^2/2},
\end{align*}
and therefore, after cancellation and an appropriate change of variables, we arrive at
\begin{equation}
\begin{aligned}
    (\nabla_x \log p_t(x))_j &= \frac{1}{1-e^{-2t}} \left( -x_j  + e^{-t} \frac{\int y_j e^{-\frac{-\|y-e^{-t}x\|^2}{2(1-e^{-2t})}} e^{-\|x\|^2/2 + f(y)} dy}{\int e^{-\frac{-\|y-e^{-t}x\|^2}{2(1-e^{-2t})}} e^{-\|x\|^2/2 + f(y)} dy} \right) \\
    &=  \frac{1}{1-e^{-2t}} \left( -x_j  + e^{-t} \frac{\int y_j e^{-\frac{-\|y-e^{-t}x\|^2}{2(1-e^{-2t})}} e^{f(y)} dy}{\int e^{-\frac{-\|y-e^{-t}x\|^2}{2(1-e^{-2t})}} e^{f(y)} dy} \right) \\
    &=  \frac{1}{1-e^{-2t}} \left( -x_j  + e^{-t} \frac{\int (e^{-t}x_j+\sqrt{1-e^{-2t}}u_j)e^{f(e^{-t}x + \sqrt{1-e^{-2t}}u)} \gamma_d(du)}{\int e^{f(e^{-t}x + \sqrt{1-e^{-2t}}u)} \gamma_d(du)} \right) \\
    &:=  \frac{1}{1-e^{-2t}} \left( -x_j  + e^{-t} \frac{F_t^j(x)}{G_t(x)} \right).
\end{aligned}
\end{equation}
\end{proof}

The following pointwise sub-Gaussian bounds are used throughout this work.
\begin{prop}\label{subgauss}
    Under assumption \ref{datadistr2},for all $t > 0$, $\|x\| \geq r_f$,
    $$ p_t(x) \lesssim \left( 2\pi \left( 1-2 \beta \right)^{-1} \right)^{-d/2} e^{-\frac{\left(1-2 \beta \right)\|x\|^2}{2}}
    $$
\end{prop}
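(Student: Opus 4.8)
The plan is to reduce the claim, via the explicit formula for the time-$t$ density, to the evaluation of a single Gaussian integral. Starting from \eqref{pt} and completing the square in the exponent exactly as in the proof of Lemma \ref{score} — using $\exp\!\big(-\tfrac{\|x-e^{-t}y\|^2}{2(1-e^{-2t})}\big)e^{-\|y\|^2/2} = \exp\!\big(-\tfrac{\|y-e^{-t}x\|^2}{2(1-e^{-2t})}\big)e^{-\|x\|^2/2}$ — and then substituting $y = e^{-t}x + \sqrt{1-e^{-2t}}\,u$ so that the Gaussian kernel becomes $\gamma_d(du)$ and the normalization $Z_t$ cancels, one obtains the clean identity
$$p_t(x) = \frac{1}{Z}\,e^{-\|x\|^2/2}\,G_t(x),\qquad G_t(x) = \int_{\R^d} e^{f(e^{-t}x+\sqrt{1-e^{-2t}}\,u)}\,\gamma_d(du).$$
So it suffices to bound the scalar quantity $G_t(x)$ and then invoke Assumption \ref{datadistr2} (the bound $(2\pi)^{d/2}/Z \le C$) to absorb $1/Z$ into a dimension-polynomial constant times $(2\pi)^{-d/2}$.

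To bound $G_t$, I would use the upper growth condition of Assumption \ref{datadistr2}. Since $f$ is continuous, $\sup_{\|z\|\le r_f} f(z)$ is finite, so together with $f(z)\le\beta\|z\|^2$ for $\|z\|\ge r_f$ there is a finite constant $C_f$ with $f(z)\le \beta\|z\|^2 + C_f$ for every $z\in\R^d$. Hence
$$G_t(x)\ \le\ e^{C_f}\int_{\R^d}\exp\!\big(\beta\,\|e^{-t}x+\sqrt{1-e^{-2t}}\,u\|^2\big)\,\gamma_d(du).$$
This is a genuine Gaussian integral in $u$: expanding $\|e^{-t}x+\sqrt{1-e^{-2t}}\,u\|^2$ and completing the square in $u$, it converges because $2\beta(1-e^{-2t})\le 2\beta<1$ (this is exactly where $\beta\ll 1$ enters), and a direct evaluation yields, writing $\lambda_t := 1-2\beta(1-e^{-2t})$,
$$G_t(x)\ \le\ e^{C_f}\,\lambda_t^{-d/2}\,\exp\!\Big(\tfrac{\beta e^{-2t}}{\lambda_t}\,\|x\|^2\Big).$$

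The final step is purely algebraic, using $0<e^{-2t}<1$. Writing $\lambda_t = (1-2\beta) + 2\beta e^{-2t}$ shows $1-2\beta<\lambda_t<1$, hence $\lambda_t^{-d/2}\le (1-2\beta)^{-d/2}$; and $e^{-2t}<1$ gives $e^{-2t}(1-2\beta)<1-2\beta$, i.e. $e^{-2t}<\lambda_t$, so $\tfrac{\beta e^{-2t}}{\lambda_t}<\beta$. Therefore
$$p_t(x) = \frac{1}{Z}e^{-\|x\|^2/2}G_t(x)\ \le\ \frac{e^{C_f}}{Z}\,(1-2\beta)^{-d/2}\exp\!\Big(-\tfrac{1-2\beta}{2}\|x\|^2\Big),$$
and $(2\pi)^{d/2}/Z\le C$ turns this into the claimed pointwise sub-Gaussian bound (in fact the argument never uses $\|x\|\ge r_f$, since the bound is valid for all $x$).

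The proof is essentially bookkeeping, so there is no single hard step; the two points that need care are (i) the integrability threshold $2\beta(1-e^{-2t})<1$ for the Gaussian integral — the reason the hypothesis $\beta\ll 1$ is needed rather than just $\beta$ being an arbitrary positive number — and (ii) verifying that $\beta e^{-2t}/\lambda_t$ stays strictly below $\beta$ uniformly over $t>0$, which is what forces the tail exponent to degrade from $\tfrac12\|x\|^2$ to the $\tfrac{1-2\beta}{2}\|x\|^2$ appearing in the statement; both reduce to the inequality $e^{-2t}<1$.
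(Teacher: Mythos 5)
Your proof is correct, and it takes the same overall route as the paper (complete the square, change variables to write $p_t(x) = Z^{-1}e^{-\|x\|^2/2}G_t(x)$, then bound $G_t$ by a Gaussian integral), with two deviations worth flagging. First, you bound $G_t$ by computing the Gaussian integral exactly after completing the square in $u$, obtaining $\lambda_t^{-d/2}\exp(\beta e^{-2t}\|x\|^2/\lambda_t)$ with $\lambda_t = 1-2\beta(1-e^{-2t})$; the paper instead uses the cruder pointwise bound $\|\sqrt{1-e^{-2t}}\,u+e^{-t}x\|^2\le \|u\|^2+\|x\|^2$ (which holds since $e^{-2t}+(1-e^{-2t})=1$ and Cauchy--Schwarz) and then integrates $e^{\beta\|u\|^2}$ against $\gamma_d$ directly to get $(1-2\beta)^{-d/2}e^{\beta\|x\|^2}$. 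Your version is more work but strictly sharper (the exponent $\beta e^{-2t}/\lambda_t < \beta$), although the sharpening evaporates in the final statement; the paper's shortcut is faster and gives the same constant. Second, and more substantively, you replace the local growth condition $f(z)\le\beta\|z\|^2$ by a \emph{global} bound $f(z)\le\beta\|z\|^2+C_f$ with $C_f=\sup_{\|z\|\le r_f}f(z)$. This is a genuine improvement in rigor: in the paper's argument, $f$ is evaluated at $z = e^{-t}x+\sqrt{1-e^{-2t}}\,u$, and even when $\|x\|\ge r_f$ the integration variable $u$ ranges over all of $\R^d$, so $\|z\|$ can dip below $r_f$ and the inequality $f(z)\le\beta\|z\|^2$ need not apply there. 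The paper silently uses it anyway; your $C_f$ patch is the correct fix, and it also explains (as you note) why the hypothesis $\|x\|\ge r_f$ is not actually needed once the constant is absorbed into $\lesssim$.
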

\begin{proof}
By completing the square as in Lemma \ref{score}, we have
    $$
    p_t(x) = \frac{1}{Z} (2\pi(1-e^{-2t}))^{-d/2} e^{-\|x\|^2/2} \int_{\R^d} e^{-\frac{\|x-e^{-t}y\|^2}{2(1-e^{-2t})}} e^{f(y)} dy.
    $$
    For $\|x| \leq r_f$, we then use the quadratic growth and a change of variables to bound $p_t$:
    \begin{align*}
        p_t(x) &= \frac{1}{Z} e^{-\|x\|^2/2} \int_{\R^d} e^{f(\sqrt{1-e^{-2t}}u + e^{-t}x)} \gamma_d(du) \\
        &\leq \frac{1}{Z} e^{-\|x\|^2/2} \int_{\R^d} e^{\beta\|\sqrt{1-e^{-2t}}u+e^{-t}x\|^2} \gamma_d(du) \\
        &\leq \frac{1}{Z} e^{-\|x\|^2/2} \int_{\R^d} e^{\beta \left(\|u\|^2+\|x\|^2 \right)} \gamma_d(du)\\ 
        &= \frac{\left( 1-2 \beta \right)^{-d/2}}{Z}e^{-\frac{(1-2\beta)\|x\|^2}{2}} \\
        &\lesssim \left( 2\pi \left( 1-2 \beta \right)^{-1} \right)^{-d/2} e^{-\frac{\left(1-2 \beta \right)\|x\|^2}{2}}
        \end{align*}
\end{proof}

The following lemma control the growth of $F_t^j$ and $G_t$.
\begin{lemma}\label{fggrowth}
    Let $F_t^j$ and $G_t$ be as defined in Lemma \ref{score}. Let $R \geq \max(r_f, \sqrt{\frac{1}{\beta} \sup_{\|x\| \leq r_f} |f(x)|})$. Then
    $$ \sup_{\|x\| \leq R} |F_t^j(x)| = O \left( \left(1-2\beta \right)^{-d/2} e^{\beta R^2} \right)
    $$
    and for $\|x\| \leq R, $$$
    \left(1+2\alpha \right)^{-d/2} e^{-\alpha R^2} \leq G_t(x) \leq \left(1-2\beta \right)^{-d/2} e^{-\beta R^2}.
    $$
\end{lemma}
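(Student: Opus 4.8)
Both $G_t(x)$ and $F_t^j(x)$ are Gaussian averages of $e^{f(y(u))}$ over $u\sim\gamma_d$, where I write $y(u):=e^{-t}x+\sqrt{1-e^{-2t}}\,u$; the quantity $F_t^j$ carries in addition the linear weight $y(u)_j = e^{-t}x_j+\sqrt{1-e^{-2t}}u_j$. The plan is to reduce all three inequalities to the elementary Gaussian moment identity $\int_{\R^d}e^{c\|u\|^2}\gamma_d(du)=(1-2c)^{-d/2}$, valid for $c<1/2$ (which applies here since $\beta\ll 1$), following the same template as the proof of Proposition~\ref{subgauss}.

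\textbf{Step 1 (a global growth bound).} First I would upgrade the growth hypothesis of Assumption~\ref{datadistr2}, which only controls $f$ on $\{\|y\|\ge r_f\}$, to a bound valid on all of $\R^d$: for every $y$,
\begin{equation*}
  -\alpha\|y\|^2-\beta R^2 \;\le\; f(y) \;\le\; \beta\|y\|^2+\beta R^2 .
\end{equation*}
On $\{\|y\|\ge r_f\}$ this is just $-\alpha\|y\|^2\le f(y)\le\beta\|y\|^2$ with harmless extra slack; on $\{\|y\|<r_f\}$ it follows from $|f(y)|\le\sup_{\|z\|\le r_f}|f(z)|\le\beta R^2$, which is exactly what the hypothesis $R\ge\sqrt{\beta^{-1}\sup_{\|x\|\le r_f}|f(x)|}$ provides. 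This is the only place that hypothesis enters, and it is precisely what allows the otherwise uncontrolled contribution of the ball $\{\|y\|\le r_f\}$ to be absorbed into an exponential-in-$R^2$ prefactor.

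\textbf{Step 2 (the two estimates).} For $G_t$, I would substitute the two-sided bound of Step 1 into $G_t(x)=\int e^{f(y(u))}\gamma_d(du)$, reducing matters to evaluating $\int_{\R^d}e^{\pm c\|y(u)\|^2}\gamma_d(du)$. Since $y(u)$ is an affine image of the standard Gaussian with mean $e^{-t}x$ (of norm $\le R$) and covariance $(1-e^{-2t})I$ with $1-e^{-2t}\le 1$, completing the square gives the closed form $\big(1\mp 2c(1-e^{-2t})\big)^{-d/2}\exp\!\big(\pm\,c\,e^{-2t}\|x\|^2/(1\mp 2c(1-e^{-2t}))\big)$, whose prefactor lies between $(1+2\alpha)^{-d/2}$ and $(1-2\beta)^{-d/2}$ and whose exponent is controlled by $c\|x\|^2\le cR^2$; recombining with the $e^{\pm\beta R^2}$ factor from Step 1 and simplifying yields the two inequalities for $G_t$ stated in the lemma. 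For $\sup_{\|x\|\le R}|F_t^j(x)|$ the only new ingredient is the linear weight: I would bound $|F_t^j(x)|\le\int|y(u)_j|e^{f(y(u))}\gamma_d(du)$, insert $f(y(u))\le\beta\|y(u)\|^2+\beta R^2$, and dispose of $|y(u)_j|$ by Cauchy--Schwarz, $\int|y(u)_j|e^{\beta\|y(u)\|^2}\gamma_d(du)\le\big(\int y(u)_j^2\gamma_d(du)\big)^{1/2}\big(\int e^{2\beta\|y(u)\|^2}\gamma_d(du)\big)^{1/2}$, where the first factor is $e^{-2t}x_j^2+(1-e^{-2t})\le R^2+1$ and the second is again a Gaussian moment, finite because $2\beta<1/2$. (Alternatively, use $|a|\le\tfrac1{2\epsilon}e^{\epsilon a^2}$ and absorb the factor into the exponent.) Taking the supremum over $\|x\|\le R$ gives $\sup_{\|x\|\le R}|F_t^j(x)|=O\big((1-2\beta)^{-d/2}e^{\beta R^2}\big)$.

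\textbf{Main obstacle.} The one genuinely delicate point is Step 1: the quadratic growth assumption says nothing about $f$ inside $\{\|y\|\le r_f\}$, and a naive estimate there would spoil both the dimension dependence (the prefactor) and the $R$-dependence (the exponent); the lower bound on $R$ is calibrated exactly so that $|f|\le\beta R^2$ on that ball. The rest is the closed-form evaluation of affine-Gaussian integrals together with the factor-of-two bookkeeping needed to cast the exponents in the stated form — in particular, completing the square in $u$ (rather than using the cruder $\|y(u)\|^2\le 2\|u\|^2+2\|x\|^2$) is what produces the sharp prefactors $(1-2\beta)^{-d/2}$ and $(1+2\alpha)^{-d/2}$ instead of $(1-4\beta)^{-d/2}$ and $(1+4\alpha)^{-d/2}$.
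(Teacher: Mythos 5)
Your argument is correct and follows the paper's strategy closely: reduce to Gaussian moment integrals via the two-sided quadratic growth bound, and use the hypothesis $R \geq \sqrt{\beta^{-1}\sup_{\|z\|\leq r_f}|f(z)|}$ to absorb the small ball $\{\|y\|<r_f\}$ — a step the paper's proof in fact glosses over (it applies the growth bound to all of $\R^d$ without comment), so your explicit Step~1 is a genuine improvement in rigor. The one place you diverge from the paper is the treatment of the Gaussian integral: you complete the square in $u$, asserting in your final paragraph that this is what avoids a factor-of-two loss, but the paper uses a simpler device — since $e^{-2t}+(1-e^{-2t})=1$, the triangle and Cauchy--Schwarz inequalities give $\|e^{-t}x+\sqrt{1-e^{-2t}}\,u\|^2\le\|x\|^2+\|u\|^2$ with no loss at all, and then $\int e^{\beta(\|x\|^2+\|u\|^2)}\gamma_d(du)=e^{\beta\|x\|^2}(1-2\beta)^{-d/2}$ already yields the sharp prefactor. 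Your stated dichotomy (complete the square vs.\ crude doubling) omits this cleaner middle option, which is what the paper actually uses. For $F_t^j$, the paper similarly bounds $|y(u)_j|\le\|x\|+\|u\|$ and integrates the two pieces separately rather than applying Cauchy--Schwarz; both are fine, though your route gives a prefactor $(1-4\beta)^{-d/4}$ rather than $(1-2\beta)^{-d/2}$, which is immaterial since $\beta\ll 1$. Finally, note the stated upper bound on $G_t$ carries a sign typo — $e^{-\beta R^2}$ should read $e^{+\beta R^2}$, as both your derivation and the paper's in fact produce (and as must be the case, since $G_t\ge 1$ whenever $f\ge 0$).
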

\begin{proof}
For the lower bound on $G_t$, for $\|x\| \leq R$, $R$ sufficiently large, we have
\begin{align*}
    G_t(x) &= \int_{\R^d} e^{f(e^{-t}x + \sqrt{1-e^{-2t}}u)} \gamma_d(du) \\
    &\geq \int_{\R^d} e^{-\alpha \left\|e^{-t}x + \sqrt{1-e^{-2t}}u\right\|^2} \gamma_d(du) \\
    &\geq \int_{\R^d} e^{-\alpha \left( \|x\|^2 + \|u\|^2 \right)} \gamma_d(du) \\
    &\geq \left(1+2\alpha\right)^{-d/2} e^{-\alpha R^2}.
\end{align*}
The upper bound on $G_t$ is proven similarly. For the upper bound on $F_t^j$, the proof is similar: we have for $\|x\| \geq R$, 
\begin{align*}
    |F_t^j(x)| &= \left| \int_{\R^d} \left( e^{-t}x_j + \sqrt{1-e^{-2t}}u_j \right) e^{f(e^{-t}x+\sqrt{1-e^{-2t}}u)} \gamma_d(du) \right| \\
    &\leq \int_{\R^d} \left( \|x\| + \|u\| \right) e^{\beta \left( \|x\|^2 + \|u\|^2 \right)} \gamma_d(du) \\
    &\leq e^{\beta R^2} \left(\int_{\R^d} e^{\frac{C_f\|u\|^2}{2}} \gamma_d(du) + \int_{\R^d} \|u\| e^{\beta \|u\|^2} \gamma_d(du) \right) \\
    &= O \left( \left(1-2 \beta \right)^{-d/2} e^{\beta R^2} \right)
\end{align*}
\end{proof}

\section{High-level proof sketch}
Before delving into the details of the proof, we give an overview of the proof technique.

\subsection{Proof overview of approximation error bound}
Recall that the $j^{\textrm{th}}$ component of the score function takes the form $(t,x) \mapsto \frac{1}{1-e^{-2t}} \left( -x_j + e^{-t} \frac{F_t^j(x)}{G_t(x)} \right)$ where $F_t^j(x) = \int (e^{-t}x_j + \sqrt{1-e^{-2t}}u_j)e^{f(e^{-t}x+\sqrt{1-e^{-2t}}u)} \gamma_d(du)$ and $G_t(x) = \int e^{f(e^{-t}x+\sqrt{1-e^{-2t}}u)} \gamma_d(du)$. We break our approximation argument into two main steps.

\textbf{Step 1-  approximation of $\frac{F_t^j(x)}{G_t^j(x)}$ on the ball $B_R$}: The first step of the proof is to approximate function $\frac{F_t^j(x)}{G_t^j(x)}$ on a bounded domain. For the function $F_t^j$, note that the integrand can be viewed as a composition of three simple functions, namely the function $x \mapsto f(x)$, the one-dimensional exponential map $x \mapsto e^{x}$, and the two-dimensional product map $(x,y) \mapsto xy$. By assumption, $f$ can be approximated by a shallow neural network $\phi_f$ with low path-norm. It is well known that the latter two maps can be approximated on bounded domains by a shallow neural networks $\phi_{exp}$ and $\phi_{prod}$; see the Appendix for details. We note that the complexity of the neural network needed to approximate the exponential map $x \mapsto e^x$ on the interval $[-C,C]$ grows exponentially with $C$. However, the fast tail decay of the data distribution ensures that we can restrict attention to an interval which is not too large and still achieve good approximation bounds.

In turn, this allows us to approximate the integrand in the definition of $F_t^j$ by a deep neural network $x \mapsto \phi_{prod}(x_j, \phi_{\exp}(\phi_f(x)))$. We then discretize the integral with respect to $\gamma_d(du)$ using a Monte Carlo sampling argument, and the resulting neural network is
\begin{align*}
    \Phi_{F,t}^j(x) = \frac{1}{m} \sum_{i=1}^{m}  \phi_{prod}(e^{-t}x_j + \sqrt{1-e^{-2t}} (u_i)_j, \phi_{\exp}(\phi_f(e^{-t}x + \sqrt{1-e^{-2t}}u_i))),
\end{align*}
for Monte Carlo sample points $\{u_i\}_{i=1}^{m}$. The procedure for $G_t(x)$ works very similarly, since $G_t(x) = \int e^{f(e^{-t}x + \sqrt{1-e^{-2t}}u)}\gamma_d(du).$ This gives us neural networks $\Phi_{F,t}^j$ and $\Phi_{G,t}$ that approximate $F_t^j$ and $G_t$ on the ball $B_R$. By approximating the quotient map $(x,y) \mapsto \frac{x}{y}$ by another shallow network $\phi_{quot}$, we obtain a deep neural network approximation $\Phi_t^j := \phi_{quot}(\Phi_{F,t}^j(x),\Phi_{G,t}(x))$ for $\frac{F_t^j(x)}{G_t^j(x)}$, valid on $B_R$. 

Here, it is crucial that our approximation for the individual functions $F_t^j, G_t$ is in the sup norm; this ensures that the neural network approximation to $G_t$ is bounded away from zero (since $G_t$ itself is bounded away from 0), which in turn allows us to control the approximation error of the quotient map $(x,y) \mapsto \frac{x}{y}$.

\textbf{Step 2 - approximation on the unbounded domain $L^2(p_t)$, for fixed $t$}: The approximation metric we care about is ultimately not the uniform metric on $B_R$, but the $L^2(p_t)$ metric on all of $\R^d$. To deal with the unbounded approximation domain, we bound the tail of the density $p_t$ and use a truncation argument; in particular, $p_t$ is sub-Gaussian by Lemma \ref{subgauss}, so that the truncation error depends mildly on the radius of the ball $B_R$ from Step 1. By choosing the optimal $R$, this gives an approximation of $\frac{F_t^j}{G_t}$ in $L^2(p_t)$ for a fixed time, thus completing the second step of the proof.

\subsection{Proof overview of generalization}
Recall that our goal is, for each $t$, to bound the population risk $\mathcal{R}^t(\cdot)$ at the minimizer of the empirical risk over a class of neural networks (to be specified in the detailed proof). For technical reasons, we work with the minimizer $\widehat{\textbf{s}}$ of a truncated version $\widehat{\mathcal{R}}_R^{t}$ of the empirical risk, where the data is assumed to be uniformly bounded along the forward process. However, we choose the truncation radius $R$ large enough so that the error incurred by this step is marginal. If we define $\mathcal{R}^t_R$ as the corresponding $R$-truncated version of the population risk, then the generalization error can be decomposed as
$$ \mathcal{R}^t(\widehat{\textbf{s}}) = \underbrace{\left(\mathcal{R}^t(\widehat{\textbf{s}}) - \mathcal{R}^t_R(\widehat{\textbf{s}}) \right)}_{\textrm{truncation error}} + \underbrace{\left( \mathcal{R}^t_R(\widehat{\textbf{s}}) - \mathcal{R}^t_R(\textbf{s}^{\ast}_R) \right)}_{\textrm{generalization error}} + \underbrace{\left(\mathcal{R}^t_R(\textbf{s}^{\ast}_R) - \mathcal{R}^t(\textbf{s}^{\ast}) \right)}_{\leq 0} + \underbrace{\mathcal{R}^t(\textbf{s}^{\ast})}_{\textrm{approximation error}}.
$$
Here, $\textbf{s}^{\ast}$ is the minimizer of the population risk over the hypothesis class and $\textbf{s}^{\ast}_R$ is the minimizer of the $R$-truncated risk. The first term represents the error we create from working with the truncated risk rather than the true risk, and we bound it using existing large deviation bounds on the OU process from \citet{oko2023diffusion}. Term II represents the generalization error of the truncated risk. In order to bound the Rademacher complexities of some relevant function classes, we need the individual loss function $\ell^t$ to have certain properties (such as boundedness and Lipschitz-continuity). Unfortunately, the loss fails to have these properties fail for the loss $\ell$, but they do hold for it's truncated counterpart $\ell^t_R$, and this is our primary motivation for working with the truncated risk rather than the true risk. In turn, it allows us to apply existing generalization results for neural networks with bounded complexity to our setting. 

Term 3 is actually non-negative, because $\mathcal{R}^t_R(\cdot) \leq \mathcal{R}^t(\cdot)$ for any $R$, and hence $\min \mathcal{R}^t_R(\cdot) \leq \min \mathcal{R}^t(\cdot).$ Term 4 is the approximation error, and thus it will be of order $\epsilon^2$, provided the complexity of the hypothesis class is scaled as a suitable function of $\epsilon$. The proof concludes by balancing the truncation radius $R$ and sample size $N$ as suitable functions of $\epsilon.$

\subsection{Proof overview of distribution estimation}
There are several existing works that bound the distribution estimation error of score-based generative models in terms of the generalization error of the score function. We apply results from \citet{benton2023linear} which state that if the score generalization error is $O(\epsilon^2)$ then, provided the parameters of the sampling algorithm are chosen accordingly and the reverse process is stopped at time $T-t_0$, the learned distribution $\widehat{p}$ satisfies $TV(\widehat{p},p_{t_0}) = O(\epsilon)$, where $p_{t_0}$ is the distribution of the forward process at time $t_0 \ll 1$. All that remains is to bound $TV(p_{t_0},p_t)$ and choose $t_0$ as an appropriate function of $\epsilon$. For this, Lemma \ref{KLderivative} shows that $TV(p_{t_0},p_0) = O(\sqrt{t_0}).$

\section{Proofs for approximation}
The following lemma is a general approximation error bound in the sup norm for functions defined by certain integral representations. We use it to approximate the functions $F_t^j$ and $G_t$, since they admit natural integral representations. The proof adapts the proof of Theorem 1 in \citet{klusowski2018approximation}. Though the proof idea seems well known, we have not found the general version of the result in the literature, so we state it here in case it may be of interest to others.
\begin{lemma}\label{empproc}
    Let $g: \R^d \times \R^p \rightarrow \R$ be a function such that for all $K,R > 0,$
    \begin{enumerate}
        \item $L_{K,R} := \sup_{\|\theta\| \leq K} \|g(\cdot, \theta) \|_{Lip(B_R)} < \infty$,
        \item $C_{K,R} := \sup_{\|\theta\| \leq K , \; \|x\| \leq R}|g(x,\theta)| < \infty.$
    \end{enumerate}
    Here, $\| \cdot \|_{Lip(B_R)}$ is the Lipschitz constant of a function when restricted to the ball of radius $R$. Let $f: \R^d \rightarrow \R$ be a function of the form
    $$ f(x) = \int_{\|\theta\| \leq K} g(x,\theta) \mu(d\theta),
    $$
    where $\mu$ is a Radon measure on $\{\|\theta\| \leq K \}$. Then, for any $R > 0$, there exist $(a_i,\theta_i)_{i=1}^{n}$ with $a_i \in \{\pm 1\}$ and $\|\theta_i\| \leq K$ such that
    $$ \sup_{\|x\| \leq R} \left| f(x) - \frac{\|\mu\|_{TV}}{m} \sum_{i=1}^{m} a_i g(x,\theta_i) \right| \leq \frac{24 \|\mu\|_{TV}}{\sqrt{m}} \cdot C_{K,R} \cdot \sqrt{d \log(m L_{K,R} R)},
    $$
\end{lemma}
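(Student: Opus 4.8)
The plan is to combine a Monte Carlo / empirical process argument with a symmetrization step, following the template of Theorem~1 in \citet{klusowski2018approximation} but tracking the Lipschitz and sup-norm moduli through the chaining bound. First I would normalize: write $\nu = \mu / \|\mu\|_{TV}$, a probability measure supported on $\{\|\theta\|\leq K\}$, so that $f(x) = \|\mu\|_{TV}\, \E_{\theta \sim \nu}[g(x,\theta)]$. Draw $\theta_1,\dots,\theta_m$ i.i.d.\ from $\nu$ and consider the empirical average $\widehat{f}_m(x) := \frac{\|\mu\|_{TV}}{m}\sum_{i=1}^m g(x,\theta_i)$. The goal is to show that with positive probability $\sup_{\|x\|\leq R}|f(x) - \widehat f_m(x)|$ is below the stated bound; then such a realization $(\theta_i)$ exists, and the signs $a_i$ enter only when one wants to absorb the sign of $g$ — here I would simply take $a_i = 1$, or, if the statement insists on $a_i \in \{\pm1\}$, note that replacing $g(x,\theta_i)$ by $a_i g(x,\theta_i)$ with $a_i$ the sign of some reference value is harmless since the bound is stated in absolute value and $C_{K,R}$ bounds $|g|$ regardless of sign.

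The core estimate is a bound on $\E \sup_{\|x\|\leq R} |f(x) - \widehat f_m(x)|$ via symmetrization and Dudley's entropy integral. By the standard symmetrization inequality this expectation is at most $2\,\E \sup_{\|x\|\leq R} \left| \frac{\|\mu\|_{TV}}{m}\sum_{i=1}^m \varepsilon_i g(x,\theta_i)\right|$ with $\varepsilon_i$ Rademacher. Conditionally on $(\theta_i)$, the process $x \mapsto \frac{1}{m}\sum \varepsilon_i g(x,\theta_i)$ is sub-Gaussian with respect to the metric $\rho(x,x') = \frac{1}{\sqrt m}\big(\frac{1}{m}\sum_i (g(x,\theta_i)-g(x',\theta_i))^2\big)^{1/2}$, which by assumption~(1) is dominated by $\frac{L_{K,R}}{\sqrt m}\|x-x'\|$. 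Dudley's integral over $B_R \subset \R^d$ then gives a bound of order $\frac{C_{K,R}}{\sqrt m}\sqrt{d \log(m L_{K,R} R)}$ (the $C_{K,R}$ controlling the diameter of the process and providing the truncation of the entropy integral, the $\sqrt d$ from the metric entropy of $B_R$ in $\R^d$, and the logarithm from the ratio of the largest to smallest relevant scales, $L_{K,R}R$ versus $C_{K,R}/\sqrt m$-type quantities). Multiplying back by $\|\mu\|_{TV}$ and chasing the absolute constants through Dudley's bound should yield the constant $24$ (or something no larger, after which one just writes $24$). Finally, since the expectation of the supremum is at most the claimed right-hand side, there exists at least one realization of $(\theta_i)_{i=1}^m$ attaining it, which proves the lemma.

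The main obstacle I anticipate is bookkeeping rather than conceptual: getting the explicit constant and the precise form of the logarithmic factor $\sqrt{d\log(m L_{K,R}R)}$ out of the entropy integral. One has to choose the covering scale carefully — covering $B_R$ at scale $\delta$ costs $(3R/\delta)^d$ balls, the sub-Gaussian increments have scale $L_{K,R}\delta/\sqrt m$, and one integrates from a lower cutoff (below which the chaining contributes a lower-order term controlled by $C_{K,R}/\sqrt m$) up to the diameter $\lesssim C_{K,R}/\sqrt m$. Balancing these and discharging the stray constants into the single factor $24$ is the fiddly part; the sub-Gaussianity, symmetrization, and the "expectation $\Rightarrow$ existence" steps are all routine. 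A secondary point to handle cleanly is the role of $K$: it enters only through the restriction $\|\theta_i\|\leq K$ (automatic since $\nu$ is supported there) and through the suprema defining $L_{K,R}, C_{K,R}$, so no extra work is needed beyond invoking hypotheses~(1)–(2).
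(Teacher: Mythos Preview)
Your overall strategy---normalize, sample i.i.d.\ from a probability measure, bound the expected sup via Dudley's entropy integral using the Lipschitz control on the index set $B_R$, then pass from expectation to existence---is exactly the paper's approach. The paper phrases the chaining step as ``Dudley's theorem for empirical processes'' applied directly to the $L^2(\widehat\mu)$ metric rather than going through an explicit symmetrization, but this is cosmetic.

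There is one point you have genuinely misread, though: the signs $a_i$. The statement allows $\mu$ to be a \emph{signed} Radon measure, so $\nu := \mu/\|\mu\|_{TV}$ is not a probability measure and you cannot sample from it. The $a_i\in\{\pm1\}$ are not there to ``absorb the sign of $g$''; they come from the Hahn--Jordan decomposition of $\mu$. The paper writes $\mu = \mu_+ - \mu_-$ and defines a probability measure $\tilde\mu$ on $\{\pm1\}\times\{\|\theta\|\le K\}$ by $\tilde\mu(\{+1\}\times A)=\mu_+(A)/\|\mu\|_{TV}$ and $\tilde\mu(\{-1\}\times A)=\mu_-(A)/\|\mu\|_{TV}$, so that $f(x)=\|\mu\|_{TV}\,\E_{(a,\theta)\sim\tilde\mu}[a\,g(x,\theta)]$. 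Sampling $(a_i,\theta_i)$ i.i.d.\ from $\tilde\mu$ then makes the signed empirical average the right unbiased estimator. If you instead set $a_i\equiv 1$ and sample from $|\mu|/\|\mu\|_{TV}$, you would be approximating $\int g(x,\theta)\,|\mu|(d\theta)$, not $f$. Once you make this correction, the rest of your argument goes through unchanged and matches the paper.
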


\begin{proof}
    Notice that by normalizing $\mu$ and decomposing it into positive an negative parts, we can write
    $$ f(x) = \|\mu\|_{TV} \int_{\{\pm1\} \times \{\|\theta\| \leq K\}} a g(x,\theta) \tilde{\mu}(da,d\theta),
    $$
    where $\tilde{\mu}$ is a probability measure. Let $(a_i,\theta)_{i=1}^{m}$ be an i.i.d. sample from $\tilde{\mu}^m$, and define $f_m(x;\Theta) = \frac{\|\mu\|_{TV}}{m} \sum_{i=1}^{m} a_i g(x,\theta_i)$. We can view $f_m(x;\Theta)$ as an empirical process on the parameter space $\Theta = (a_i,\theta_i)_{i=1}^{m}$ indexed by $x \in B_R$. Let $\widehat{\mu}$ denote the empirical measure associated to the samples, and notice that for any $x, x' \in B_R$, it holds that
    $$ \|x-x'\|_{L^2(\widehat{\mu})}^2 \leq \frac{1}{m} \sum_{i=1}^{m} |g(x,\theta_i) - g(x',\theta_i)|^2 \leq L_{K,R}^2 \|x-x'\|^2.
$$
This proves that the covering number of $B_R$ under the $L^2(\widehat{\mu})$ norm is $O(L_{K,R}(R/\epsilon)^{d})$. By Dudley's Theorem for empirical processes, it holds that
$$ \E \sup_{\|x\| \leq R} |f(x) - f_m(x;\Theta)| \leq \frac{24 \|\mu\|_{TV}}{\sqrt{m}} \inf_{0 \leq t \leq \delta/2} \int_{t}^{\delta/2} \sqrt{\log \mathcal{N}(B_R, \| \cdot \|_{L^2(\widehat{\mu}})}, \epsilon) d\epsilon,
$$
where $\delta = \sup_{\|x\| \leq R} \|x\|_{L^2(\widehat{\mu})}$. We can bound $\delta$ by $C_{K,R}$, and by choosing $t = O(1/m)$ in the infimum in the bound from Dudley's theorem, we find that
\begin{align*}
     \E \sup_{\|x\| \leq R} |f(x) - f_m(x;\Theta)| &\leq \frac{24 \|\mu\|_{TV}}{\sqrt{m}} \inf_{0 \leq t \leq C_{K,R}} \int_{t}^{C_{K,R}} \sqrt{\log(L_{K,R})+d\log(R/t)} dt \\
     &= O \left( \frac{\|\mu\|_{TV}}{\sqrt{m}} \cdot C_{K,R} \cdot \sqrt{\log(L_{K,R}) + \log(Rm)}  \right) \\
\end{align*}
\end{proof}

The following lemmas prove that the functions $F_t^j$ and $G_t$ can be well-approximated by neural networks. An important feature of the result is that the approximation is done uniformly on the ball of radius $R$, as opposed to in $L^2$. The main idea of the proof is to discretize the Gaussian integrals defining $F_t^j$ and $G_t$ via Lemma \ref{empproc}, and then exploit the compositional structure of the integrands.

\begin{lemma}\label{approxonballg}
    For any $t \in (0,\infty)$ and $1 \leq j \leq d$, let $G_t(x)$ be as defined in Lemma \ref{score}. Then for any $\epsilon > 0$ and any $R \geq r_f,$ there exists a neural network $\phi_{t,G}(x)$ such that
    $$ \sup_{\|x\| \leq R} \left| \phi_{t,G}(x) - G_t(x) \right| = O \left(e^{\beta R^2} \left(\epsilon + (1-2\beta)^{-d/2} e^{-\frac{(1-2\beta)R^2}{2}}\right) \right), 
    $$
    In addition, it holds that $\|\phi_{G,t}\|_{\textrm{path}} = O\left( e^{\beta R^2} \cdot \eta (R,\epsilon)\right)$.
\end{lemma}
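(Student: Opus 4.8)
The plan is to express $G_t$ as a Gaussian integral of a composition of elementary maps, discretize that integral by Monte Carlo sampling via Lemma~\ref{empproc}, replace each elementary map by a shallow ReLU network, and finally assemble the pieces into a single deep network whose path norm is controlled through composition rules. Throughout, fix $R \ge r_f$, enlarging $R$ if necessary so that $\beta R^2 \ge \sup_{\|y\| \le r_f}|f(y)|$ (exactly as in Lemma~\ref{fggrowth}); then $e^{f(e^{-t}x + \sqrt{1-e^{-2t}}u)} \le e^{\beta R^2} e^{\beta\|u\|^2}$ whenever $\|x\| \le R$, a bound used repeatedly below. First I would truncate: write $G_t(x) = \int_{\|u\| \le R} e^{f(e^{-t}x + \sqrt{1-e^{-2t}}u)}\,\gamma_d(du) + \mathcal{E}_t(x)$, where by the displayed bound and a standard Gaussian tail estimate $\sup_{\|x\| \le R}|\mathcal{E}_t(x)| \le e^{\beta R^2}\int_{\|u\|>R} e^{\beta\|u\|^2}\,\gamma_d(du) \lesssim e^{\beta R^2}(1-2\beta)^{-d/2}e^{-(1-2\beta)R^2/2}$, matching the second error term in the statement.

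Next I would discretize the truncated integral with Lemma~\ref{empproc}, applied to $g(x,u) = e^{f(e^{-t}x + \sqrt{1-e^{-2t}}u)}$ and $\mu$ equal to $\gamma_d$ restricted to $\{\|u\| \le R\}$, so $\|\mu\|_{TV} \le 1$. The boundedness hypothesis holds with $\log C_{R,R} = O(\beta R^2)$, and the Lipschitz hypothesis follows from $\nabla_x g(x,u) = e^{-t}\nabla f(e^{-t}x + \sqrt{1-e^{-2t}}u)\,g(x,u)$ together with the at-most-polynomial growth of $\nabla f$ (Assumption~\ref{datadistr2}), giving $\log L_{R,R} = O(R^2 + \log R)$. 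Lemma~\ref{empproc} then produces points $u_1,\dots,u_m \in B_R$ and signs $a_i \in \{\pm 1\}$ with
\[
\sup_{\|x\| \le R}\left| \int_{\|u\| \le R} g(x,u)\,\gamma_d(du) - \frac{\|\mu\|_{TV}}{m}\sum_{i=1}^m a_i\,g(x,u_i) \right| \lesssim \frac{e^{O(\beta R^2)}}{\sqrt m}\sqrt{d\log(m L_{R,R} R)},
\]
and I would take $m$ polynomially large in $\epsilon^{-1}$, $d$, and $e^{\beta R^2}$ so that this is $\lesssim e^{\beta R^2}\epsilon$.

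It remains to realize the discretized sum by a network. Each summand $g(x,u_i) = \exp\!\big(f(e^{-t}x + \sqrt{1-e^{-2t}}u_i)\big)$ is a composition of the affine map $A_i : x \mapsto e^{-t}x + \sqrt{1-e^{-2t}}u_i$ (sending $B_R$ into $B_{2R}$), the map $f$, and the scalar exponential restricted to a bounded interval whose upper endpoint is $O(\beta R^2)$ — since $f \le \beta\|\cdot\|^2$ there — so that the Lipschitz constant of $\exp$ on this interval is $e^{O(\beta R^2)}$ (the lower endpoint, of order $-\alpha R^2$, contributing no amplification). I would replace $f$ by the network $\phi_f = \phi_f^{2R,\epsilon}$ of Assumption~\ref{approxassump}, with $\sup_{\|y\| \le 2R}|f(y) - \phi_f(y)| \le 2R\epsilon$ and $\|\phi_f\|_{\textrm{path}} = \eta(\epsilon, 2R)$, and replace the exponential on this interval by a shallow ReLU network $\phi_{\exp}$ with controllable sup error and $\|\phi_{\exp}\|_{\textrm{path}} = e^{O(\beta R^2)}$ (using the standard elementary-function approximation estimates collected in the appendix). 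Setting $\psi_i := \phi_{\exp}\circ\phi_f\circ A_i$ and $\phi_{t,G} := \frac{\|\mu\|_{TV}}{m}\sum_{i=1}^m a_i\,\psi_i$ gives the candidate network.

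To finish I would aggregate the three error sources — the tail error, the Monte Carlo error, and $\frac1m\sum_i|g(x,u_i) - \psi_i(x)|$ — splitting the last summand as $|\exp(f(A_ix)) - \exp(\phi_f(A_ix))| + |\exp(\phi_f(A_ix)) - \phi_{\exp}(\phi_f(A_ix))|$; since $f$ and $\phi_f$ both land in the interval above on $B_{2R}$, the first piece is at most $e^{O(\beta R^2)}\cdot 2R\epsilon$ and the second is the controllable $\phi_{\exp}$-error, also $e^{O(\beta R^2)}\epsilon$; averaging over $i$ preserves these bounds and yields the claimed sup-norm estimate. For the path norm, submultiplicativity under composition (up to the operator norm of $A_i$ and polynomial bookkeeping factors) gives $\|\psi_i\|_{\textrm{path}} = e^{O(\beta R^2)}\eta(\epsilon, 2R)$, and since $\phi_{t,G}$ is an average of the $\psi_i$ with coefficients of modulus $\le 1/m$, $\|\phi_{t,G}\|_{\textrm{path}} = e^{O(\beta R^2)}\eta(\epsilon, 2R)$, which is the asserted $O(e^{\beta R^2}\eta(R,\epsilon))$ up to the constant in the exponent and the harmless factor $2$ in the radius. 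The hardest part is Step~2 together with this final error aggregation: one must verify the Lipschitz hypothesis of Lemma~\ref{empproc} for an integrand that itself grows exponentially, and simultaneously control how the $O(R\epsilon)$ error in approximating $f$ is amplified by the Lipschitz constant $e^{O(\beta R^2)}$ of $\exp$ on the relevant range; balancing this amplification against the tail and Monte Carlo errors is exactly what produces the stated $e^{\beta R^2}$ prefactor.
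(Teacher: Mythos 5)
Your proposal is correct and follows essentially the same route as the paper: truncate the Gaussian integral at radius $R$, control the tail via the quadratic growth bound on $f$, discretize the local piece by Lemma~\ref{empproc} with $g(x,u)=\exp(f(e^{-t}x+\sqrt{1-e^{-2t}}u))$ and $\mu=\gamma_d|_{B_R}$, and replace $e^{f(\cdot)}$ by the composite network $\phi_{\exp}\circ\phi_f$ (which is exactly what the paper's Lemma~\ref{approxef} packages for you). The only cosmetic differences are that you unwind that helper lemma inline and work on $B_{2R}$ rather than the paper's $B_{\sqrt{2}R}$; the path-norm bookkeeping and the choice of $m$ polynomial in $\epsilon^{-1}$, $d$, and $e^{\beta R^2}$ match the paper's $m=\Omega(\epsilon^{-2}de^{2\beta R^2})$ up to low-order factors, and you tolerate the same absorption of a stray polynomial-in-$R$ factor into the $O(\cdot)$ that the paper does.
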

\begin{proof}
    Recall that
    $$ G_t(x) := \int e^{f(e^{-t}x+\sqrt{1-e^{-2t}}u)} \gamma_d(du).
    $$
    Fix $R > 0$. We break $G_t$ into two parts:
    \begin{align*}
        G_t(x) &= \int e^{f(e^{-t}x+\sqrt{1-e^{-2t}}u)} \gamma_{d,R}(du) + \int_{\|u\| > R}  e^{f(e^{-t}x+\sqrt{1-e^{-2t}}u)} \gamma_d(du) \\
        &:= G_{t,R}(x) + \mathcal{E}_{G,R}(x),
    \end{align*}
    where $\gamma_{d,R}(du) = \mathbb{I}_{\|u\| \leq R}(u) \gamma_d(du)$. To proceed, we approximate the local part $G_{t,R}(x)$ of $G_t(x)$ and then control the error term $\mathcal{E}_{G,R}(x)$ using the tail decay of the data distribution. We apply Lemma \ref{empproc} with $g(x,u) = \exp(f(e^{-t}x + \sqrt{1-e^{-2t}}u))$ and $\mu(du) = \gamma_{d,R} (du)$. In this case, we have $\|\mu\|_{TV} \leq 1$, 
    $$ \sup_{\|x\| \leq R, \|u\| \leq R} g(x,u) \leq e^{2 \beta R^2} \; \; \; (\textrm{by the choice of $R$}),
    $$
    and
    $$ \sup_{\|u\| \leq R} \|g(\cdot,u) \|_{Lip(B_{R})} \leq \|e^f\|_{Lip(B_{\sqrt{2}R})} \leq e^{2 \beta R^2} \sup_{\|x\| \leq R} \|\nabla f(x)\| := e^{2 \beta R^2} D_{f,R}.
    $$
    The conclusion of Lemma \ref{empproc} states that there exist $u_1, \dots, u_m$ with $\sup_{1 \leq i \leq m} \|u_i\| \leq R$ such that
    \begin{align*}
        \sup_{\|x\| \leq R} \left| \int_{\R^d} e^{f(e^{-t}x + \sqrt{1-e^{-2t}}u)} \gamma_{d,R}(du) - \frac{\|\gamma_{d,R}\|_{TV}}{m} \sum_{i=1}^{m} e^{f(e^{-t}x + \sqrt{1-e^{-2t}}u_i)} \right| &= O \left(e^{2 \beta R^2} \sqrt{\frac{d}{m}} \right)\\
    \end{align*}
    Now, by Lemma \ref{approxef}, there exists a ReLU neural network $\phi_{f,exp}(x)$ such that 
    $$ \sup_{\|x\| \leq R} \left|\phi_{f,exp}(x) - e^{f(x)} \right| \lesssim e^{\beta R^2} \epsilon.
    $$
    and $\phi_{f,exp}$ satisfies $\|\phi_{f,exp}\|_{\textrm{path}} = O\left( \right).$  We define $\phi_{G,t}(x) := \frac{\|\gamma_{d,R}\|_{TV}}{m} \sum_{i=1}^{m} \phi_{f,exp}(e^{-t}x + \sqrt{1-e^{-2t}}u_i)$, which satisfies the approximation bound
    \begin{align*}
        \sup_{\|x\| \leq R} \left| \phi_{G,t}(x) - e^{f(x)} \right| &\leq  O \left(e^{2 \beta R^2} \sqrt{\frac{d }{m}} \right) + \sup_{\|x\| \leq R} \left|\phi_{f,exp}(x) - \frac{\| \gamma_{d,R}\|_{TV}}{m} \sum_{i=1}^{m} e^{f(e^{-t}x + \sqrt{1-e^{-2t}}u_i)} \right| \\
        &\leq  O \left(e^{2 \beta R^2} \sqrt{\frac{d}{m}} \right) +  \sup_{\|y\| \leq 3R} \left|\phi_{f,exp}(y) - e^{f(y)}\right| \\
        &= O \left( e^{2 \beta R^2} \sqrt{\frac{d }{m}} + e^{\beta R^2 }\epsilon \right).
    \end{align*}
    If we set $m = \Omega(\epsilon^{-2} d e^{2 \beta R^2})$, then
    $$  \sup_{\|x\| \leq R} \left| \phi_{G,t}(x) - e^{f(x)} \right| = O(e^{\beta R^2} \epsilon )
    $$
    It remains to bound the error term $\mathcal{E}_{G,R}(x)$:
    \begin{align*}
        \mathcal{E}_{G,R}(x) &= \int_{\|u\| > R} e^{f(e^{-t}x + \sqrt{1-e^{-2t}}u)} \gamma_{d}(du) \\
        &\leq \int_{\|u\| > R} e^{\beta(\|x\|^2 + \|u\|^2)} \gamma_d(du) \\
        &\leq e^{\beta R^2} \left(1-2 \beta\right)^{-d/2} e^{-\frac{(1-2\beta)R^2}{2}}.
    \end{align*}
    We conclude that
    $$ \sup_{\|x\| \leq R} \left| \phi_{t,G}(x) - G_t(x) \right| = O \left(e^{\beta R^2}\left( \epsilon +(1-2\beta)^{-d/2} e^{-\frac{(1-2\beta)R^2}{2}} \right)\right).
    $$
    To see the path norm bounds on $\phi_{G,t}$, we have
    \begin{align*} \|\phi_{G,t}\|_{\textrm{path}} &= \left\| \frac{\|\gamma_{d,R}\|_{TV}}{m} \sum_{i=1}^{m} \phi_{f,exp}(e^{-t}x + \sqrt{1-e^{-2t}}u_i) \right\|_{\textrm{path}} \\
    &\leq \frac{1}{m} \sum_{i=1}^{m} \|\phi_{f,exp}(e^{-t}x+\sqrt{1-e^{-2t}}u_i)\|_{\textrm{path}} \\
    &\leq (1+2R) \|\phi_{f,\exp}\|_{\textrm{path}} \; \; \; \textrm{(Behavior of path norm under scaling/translation)} \\
    &\leq (1+2R) \cdot e^{\beta R^2} \cdot \eta(R,\epsilon) \; \; \; \textrm{(path norm of $\phi_{f,\exp}$)} \\
    &= O \left(e^{\beta R^2} \eta(R,\epsilon) \right)
    \end{align*}
    This proves the claim.
\end{proof}

\begin{lemma}\label{approxonballf}
        For any $t \in (0,\infty)$ and $1 \leq j \leq d$, let $F_t^j(x)$ denote be as defined in Lemma \ref{score}. Then for any $\epsilon > 0$ and any $R \geq r_f,$ there exists a neural network $\phi_{t,F}^j(x)$ such that
        $$ \sup_{\|x\| \leq R} \left| \phi^j_{t,F}(x) - F^j_t(x) \right| = O \left(e^{\beta R^2} \left(\epsilon + (1-2\beta)^{-d/2} e^{-\frac{(1-2\beta)R^2}{2}}\right) \right), 
    $$
    In addition, it holds that $\|\phi^j_{F,t}\|_{\textrm{path}} = O\left( e^{3 \beta R^2} \cdot \eta (R,\epsilon)\right)$.
\end{lemma}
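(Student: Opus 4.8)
The plan is to mirror the proof of Lemma \ref{approxonballg} almost verbatim, the single new ingredient being the extra affine factor $e^{-t}x_j + \sqrt{1-e^{-2t}}u_j$ in the integrand of $F_t^j$, which forces us to approximate a product map in addition to the exponential. First I would split $F_t^j = F_{t,R}^j + \mathcal{E}_{F,R}^j$, where $F_{t,R}^j$ restricts the Gaussian integral to $\|u\| \leq R$, and bound the tail $\mathcal{E}_{F,R}^j$ exactly as $\mathcal{E}_{G,R}$ was bounded in Lemma \ref{approxonballg}: using $f(y) \leq \beta\|y\|^2$, the inequality $\|e^{-t}x + \sqrt{1-e^{-2t}}u\| \leq \sqrt{\|x\|^2+\|u\|^2}$, and a Gaussian tail estimate, this gives $\sup_{\|x\|\leq R}|\mathcal{E}_{F,R}^j(x)| = O(e^{\beta R^2}(1-2\beta)^{-d/2}e^{-(1-2\beta)R^2/2})$, which is the second term in the claimed error.

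Next I would discretize $F_{t,R}^j$ via Lemma \ref{empproc}, applied with $g(x,u) = (e^{-t}x_j + \sqrt{1-e^{-2t}}u_j)\exp(f(e^{-t}x + \sqrt{1-e^{-2t}}u))$ and $\mu = \gamma_{d,R}$. On $\{\|x\|\leq R,\ \|u\|\leq R\}$ the argument of $f$ lies in $B_{\sqrt2 R}$, so $C_{K,R} := \sup|g| \lesssim R\,e^{2\beta R^2}$, and differentiating the product gives $L_{K,R} := \sup_{\|u\|\leq R}\|g(\cdot,u)\|_{Lip(B_R)} \lesssim (1+R\,D_{f,\sqrt2 R})e^{2\beta R^2}$, where $D_{f,\rho} = \sup_{\|x\|\leq\rho}\|\nabla f(x)\|$ is polynomially bounded by Assumption \ref{datadistr2}. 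Lemma \ref{empproc} then produces points $u_1,\dots,u_m \in B_R$ with Monte Carlo error $O(R\,e^{2\beta R^2}\sqrt{(d/m)\log(mL_{K,R}R)})$. Then, following Step 1 of the approximation overview, I would replace the $i$-th integrand by the network $\phi_{prod}(e^{-t}x_j + \sqrt{1-e^{-2t}}(u_i)_j,\ \phi_{f,exp}(e^{-t}x + \sqrt{1-e^{-2t}}u_i))$, where $\phi_{f,exp}$ comes from Lemma \ref{approxef} (sup-norm accuracy $\lesssim e^{\beta R^2}\epsilon$ on $B_{3R}$, path norm $O(e^{\beta R^2}\eta(R,\epsilon))$) and $\phi_{prod}$ approximates $(a,b)\mapsto ab$ uniformly to accuracy $\epsilon$ on $[-\sqrt2 R,\sqrt2 R]\times[0,e^{2\beta R^2}]$, a box that contains all the relevant arguments. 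Writing $\phi_{t,F}^j$ for the $\|\gamma_{d,R}\|_{TV}/m$-weighted average of these networks and estimating $|ab - \phi_{prod}(a,\tilde b)| \leq |a|\,|b-\tilde b| + |a\tilde b - \phi_{prod}(a,\tilde b)|$, the triangle inequality combined with the Monte Carlo bound gives $\sup_{\|x\|\leq R}|\phi_{t,F}^j(x) - F_{t,R}^j(x)| = O(R\,e^{2\beta R^2}\sqrt{d/m} + R\,e^{\beta R^2}\epsilon)$; taking $m$ polynomially large in $\epsilon^{-1}$, $d$, and $e^{\beta R^2}$ (with the appropriate log factors) and absorbing the polynomial-in-$R$ prefactor (recall $R$ will be polylogarithmic in $d$ and $1/\epsilon$), then adding back the tail bound, finishes the error estimate.

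For the path-norm bound I would note that averaging over $i$ does not inflate the path norm, that the affine maps $x\mapsto e^{-t}x_j + \sqrt{1-e^{-2t}}(u_i)_j$ and $x \mapsto e^{-t}x + \sqrt{1-e^{-2t}}u_i$ have coefficients bounded by $1+2R$, and then apply the standard composition bound for the path norm: $\|\phi_{prod}(\mathrm{affine},\phi_{f,exp}(\mathrm{affine}))\|_{\textrm{path}}$ is controlled, up to lower-order terms, by $\mathrm{Lip}(\phi_{prod})\cdot\|\phi_{f,exp}\|_{\textrm{path}}$. Since $\mathrm{Lip}(\phi_{prod}) = O(e^{2\beta R^2})$ on the relevant box and $\|\phi_{f,exp}\|_{\textrm{path}} = O(e^{\beta R^2}\eta(R,\epsilon))$, this yields $\|\phi_{t,F}^j\|_{\textrm{path}} = O(e^{3\beta R^2}\eta(R,\epsilon))$, accounting for the extra $e^{2\beta R^2}$ factor relative to Lemma \ref{approxonballg}.

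The main obstacle I anticipate is controlling the product map. Its domain has one side of length $\sim e^{2\beta R^2}$ (the range of $e^f$ on $B_{\sqrt2 R}$), so both its uniform approximation error and its Lipschitz constant and path norm acquire factors that are powers of $e^{\beta R^2}$, and one has to verify that these do not swamp the target bounds. This is precisely why the \emph{sup-norm} (rather than $L^2$) approximation of $e^f$ from the previous step is essential: it lets us bound $|ab - \phi_{prod}(a,\tilde b)|$ pointwise through $|b-\tilde b|$, which is also the reason Lemma \ref{approxonballg} was stated with uniform guarantees. Apart from this bookkeeping, the argument is a routine adaptation of the $G_t$ proof.
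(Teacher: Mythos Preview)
Your proposal is correct and follows essentially the same route as the paper: split $F_t^j$ into a local piece plus a Gaussian tail, bound the tail via $f\le\beta\|\cdot\|^2$, discretize the local piece with Lemma~\ref{empproc}, replace each summand by $\phi_{prod}\bigl(\text{affine},\phi_{f,\exp}(\text{affine})\bigr)$, and read off the path norm from the composition. The paper merely packages your inline construction of $x\mapsto x_j e^{f(x)}$ as a separate helper lemma (Lemma~\ref{approxprodef}), whose proof is exactly your $\Phi_f^j(x)=\phi_{prod}(x_j,\phi_{f,\exp}(x))$ with the same $O(e^{2\beta R^2})$ path norm for $\phi_{prod}$ and resulting $O(e^{3\beta R^2}\eta(R,\epsilon))$ bound.
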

\begin{proof}
    Recall that
    $$ F_t^j(x) = \int_{\R^d} h^j(e^{-t}x+ \sqrt{1-e^{-2t}}u) \gamma_d(du),
    $$
    where $h^j(y) = y_j e^{f(y)}$. The proof is very similar to that of Lemma \ref{approxonballg}. Define local and global parts $F_{t,R}^j(x)$ and $\mathcal{E}_{F,R}^j(x)$ of $F$ to as in the proof of Lemma \ref{approxonballg} (but with $G_t$ replaced by $F_t^j$). Noting that $\sup_{\|u\|, \|x\| \leq R} h^j(e^{-t}x+\sqrt{1-e^{-2t}}u) \leq \sqrt{2}R e^{2\beta R^2}$ and that $h^j$ is $(R e^{2 \beta R^2} D_{f,R} + e^{2\beta R^2}) = O(e^{2\beta R^2})$-Lipschitz on $B_{\sqrt{2} R}$, we can apply Lemma \ref{empproc} to guarantee the existence of $u_1, \dots, u_m \in B_{\tilde{R}}$ such that
    \begin{align*}
        \sup_{\|x\| \leq R} \left| F_{t,R}^j(x) - \frac{\|\gamma_{R,d}\|_{TV}}{m} \sum_{i=1}^{m} h^j(e^{-t}x + \sqrt{1-e^{-2t}}u) \right| \lesssim  O \left(e^{2\beta R} \sqrt{\frac{d}{m}} \right).
    \end{align*}
    Lemma \ref{approxprodef} guarantees the existence of a neural network $\Phi_f^j(x)$ such that
$$ \sup_{\|x\| \leq R} \left| \Phi_f^j(x) - h^j(x) \right| \lesssim e^{\beta R^2} \epsilon. 
$$
If we define the ReLU network $\phi_{F,t}^j(x) := \frac{\|\gamma_{R,d}\|_{TV}}{m} \sum_{i=1}^{m} \Phi_f^j(e^{-t}x + \sqrt{1-e^{-2t}}u)$, then an application of the triangle inequality shows that
$$ \sup_{\|x\| \leq R} \left| F_{t,R}^j(x) - \phi_{F,t}^j(x) \right| = O \left( \sqrt{\frac{d}{m}}e^{2 \beta R^2} + e^{\beta R^2} \epsilon \right) = O(e^{\beta R^2} \epsilon),
$$
for $m = \Omega(\epsilon^{-2} d e^{2 \beta R^2})$. To conclude, we bound the error term $E_{F,R}(x)$ for $\|x\| \geq R$, as in Lemma \ref{approxonballg}:
\begin{align*}
    E_{F,R}(x) &= \int_{\|u\| > \tilde{R}} (e^{-t} x_j + \sqrt{1-e^{-2t}} u_j) e^{f(e^{-t}x + \sqrt{1-e^{-2t}}u)} \gamma_d(du) \\
    &\lesssim \int_{\|u\| > R} (e^{-t R + \sqrt{1-e^{-2t}}|u_j|})e^{\beta(\|x\|^2+\|u\|^2)} \\
    &= O \left(e^{\beta R^2} (1-2 \beta)^{-d/2} e^{-\frac{(1-2\beta)R^2}{2}} \right).
\end{align*}
This gives the result. The path norm bound follows a similar argument to that in Lemma \ref{approxonballg} and uses the path norm bound for the neural network approximant for $x \mapsto x_j e^{f(x)},$ proved in Lemma \ref{approxprodef}.
\end{proof}

\begin{lemma}\label{approxratio}
    Let $t \in [0,\infty)$, and let $F_t^j$ and $G_t$ be defined as in Lemma \ref{score}. Let $\epsilon > 0$ be small enough and $R > 0$ large enough. Then there exists a ReLU neural network $\phi^j_{t,F,G}$ such that
    $$ \sup_{\|x\| \leq R} \left| \phi^j_{t,F,G}(x) - \frac{F_t^j(x)}{G_t(x)} \right| = O \left(  (1+2\alpha)^{d} (1-2\beta)^{-d/2} e^{2(\alpha+\beta)R^2} \left(\epsilon + (1-2\beta)^{-d/2} e^{-\frac{(1-2\beta)R^2}{2}} \right) \right).
    $$
    In addition, we have $$\|\phi_{t,F,G}^j\|_{\textrm{path}} = O\left((1+2\alpha)^{3d} (1-2\beta)^{-3d/2} e^{6(\beta + \alpha)R^2} \eta(R,\epsilon) \right).$$

\end{lemma}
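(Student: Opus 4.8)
The plan is to realize $\phi^j_{t,F,G}$ as a composition $\phi^j_{t,F,G}(x) := \phi_{\mathrm{quot}}\big(\phi^j_{t,F}(x),\,\phi_{t,G}(x)\big)$, where $\phi^j_{t,F}$ and $\phi_{t,G}$ are the networks produced by Lemmas \ref{approxonballf} and \ref{approxonballg} and $\phi_{\mathrm{quot}}$ is a shallow ReLU network approximating the division map $(a,b)\mapsto a/b$ on a rectangle $[-A,A]\times[b_{\min},b_{\max}]$ chosen so that $(\phi^j_{t,F}(x),\phi_{t,G}(x))$ lands inside it for every $\|x\|\le R$. Such a $\phi_{\mathrm{quot}}$ is constructed in the appendix in the same spirit as the earlier networks $\phi_{\exp}$ and $\phi_{\mathrm{prod}}$; the feature to watch is that division is Lipschitz on this rectangle only with a constant that blows up like $b_{\min}^{-2}$, so both its approximation error and its path norm carry powers of $b_{\min}^{-1}$ and $A$.

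First I would fix the rectangle. By Lemma \ref{fggrowth}, for $\|x\|\le R$ we have $|F^j_t(x)| = O\big((1-2\beta)^{-d/2}e^{\beta R^2}\big)$ and $(1+2\alpha)^{-d/2}e^{-\alpha R^2}\le G_t(x)\le(1-2\beta)^{-d/2}e^{-\beta R^2}$. Combining this with the sup-norm bounds of Lemmas \ref{approxonballf} and \ref{approxonballg}, for $\epsilon$ small and $R$ large the network outputs obey $|\phi^j_{t,F}(x)|\le A := O\big((1-2\beta)^{-d/2}e^{\beta R^2}\big)$ and $b_{\min} := \tfrac12(1+2\alpha)^{-d/2}e^{-\alpha R^2}\le\phi_{t,G}(x)\le b_{\max} := O\big((1-2\beta)^{-d/2}e^{-\beta R^2}\big)$ on $B_R$. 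The lower bound here is exactly where the hypotheses ``$\epsilon$ small enough, $R$ large enough'' enter, and it is essential that Lemmas \ref{approxonballf} and \ref{approxonballg} give control in the sup norm rather than in $L^2$: otherwise $\phi_{t,G}$ could dip near (or below) $0$ and the composition with $\phi_{\mathrm{quot}}$ would be meaningless.

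Next I would split
\begin{align*}
\Big|\phi_{\mathrm{quot}}(\phi^j_{t,F},\phi_{t,G}) - \tfrac{F^j_t}{G_t}\Big|
\;\le\;
\underbrace{\Big|\phi_{\mathrm{quot}}(\phi^j_{t,F},\phi_{t,G}) - \tfrac{\phi^j_{t,F}}{\phi_{t,G}}\Big|}_{\text{(i)}}
\;+\;
\underbrace{\Big|\tfrac{\phi^j_{t,F}}{\phi_{t,G}} - \tfrac{F^j_t}{G_t}\Big|}_{\text{(ii)}}.
\end{align*}
Term (i) is the sup-norm error of $\phi_{\mathrm{quot}}$ on the rectangle, which I would take fine enough that it is dominated by the target bound. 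Term (ii) I would treat through the identity
\begin{align*}
\frac{\phi^j_{t,F}}{\phi_{t,G}} - \frac{F^j_t}{G_t}
= \frac{(\phi^j_{t,F}-F^j_t)\,G_t + F^j_t\,(G_t - \phi_{t,G})}{\phi_{t,G}\,G_t},
\end{align*}
bounding the numerator with $|\phi^j_{t,F}-F^j_t|,\,|G_t-\phi_{t,G}| = O(\mathcal E)$ where $\mathcal E := e^{\beta R^2}\big(\epsilon + (1-2\beta)^{-d/2}e^{-(1-2\beta)R^2/2}\big)$, together with $|F^j_t|\le A$ and $|G_t|\le b_{\max}$, and bounding the denominator below by $b_{\min}\cdot(1+2\alpha)^{-d/2}e^{-\alpha R^2}=\Theta\big((1+2\alpha)^{-d}e^{-2\alpha R^2}\big)$. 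Collecting factors (the $e^{\beta R^2}$ term in the numerator dominates the $e^{-\beta R^2}$ one) yields term (ii) $= O\big((1+2\alpha)^{d}(1-2\beta)^{-d/2}e^{2(\alpha+\beta)R^2}(\epsilon + (1-2\beta)^{-d/2}e^{-(1-2\beta)R^2/2})\big)$, which is precisely the asserted error.

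Finally, for the path norm I would use sub-multiplicativity under composition, $\|\phi^j_{t,F,G}\|_{\mathrm{path}} \lesssim \|\phi_{\mathrm{quot}}\|_{\mathrm{path}}\cdot\max\big(1,\|\phi^j_{t,F}\|_{\mathrm{path}},\|\phi_{t,G}\|_{\mathrm{path}}\big)$, insert $\|\phi^j_{t,F}\|_{\mathrm{path}} = O(e^{3\beta R^2}\eta(R,\epsilon))$ and $\|\phi_{t,G}\|_{\mathrm{path}} = O(e^{\beta R^2}\eta(R,\epsilon))$ from the preceding lemmas, and the bound on $\|\phi_{\mathrm{quot}}\|_{\mathrm{path}}$, which is polynomial in $A$ and $b_{\min}^{-1}$ and hence of the form $O\big((1+2\alpha)^{\Theta(d)}(1-2\beta)^{-\Theta(d)}e^{\Theta((\alpha+\beta)R^2)}\big)$ after substituting the values of $A$ and $b_{\min}$; tallying exponents gives $\|\phi^j_{t,F,G}\|_{\mathrm{path}} = O\big((1+2\alpha)^{3d}(1-2\beta)^{-3d/2}e^{6(\alpha+\beta)R^2}\eta(R,\epsilon)\big)$. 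The main obstacle, I expect, is this last bookkeeping together with the construction of $\phi_{\mathrm{quot}}$: one must track how the exploding Lipschitz constant $b_{\min}^{-2}$ of the division map propagates through both the error and the path-norm estimates, and carefully verify — using the sup-norm guarantees, not $L^2$ ones — that $\phi_{t,G}$ never escapes the region on which $\phi_{\mathrm{quot}}$ is a faithful surrogate for division.
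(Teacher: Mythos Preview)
Your proposal is correct and follows essentially the same approach as the paper: build $\phi^j_{t,F,G}=\phi_{\mathrm{quot}}(\phi^j_{t,F},\phi_{t,G})$, use Lemma~\ref{fggrowth} together with the sup-norm guarantees of Lemmas~\ref{approxonballf}--\ref{approxonballg} to confine $(\phi^j_{t,F},\phi_{t,G})$ to a rectangle with $\phi_{t,G}\ge \tfrac12(1+2\alpha)^{-d/2}e^{-\alpha R^2}$, then split the error exactly as you do and apply Lemma~\ref{approxquot} for both the accuracy and path norm of $\phi_{\mathrm{quot}}$. Your algebraic identity for term~(ii) is the same as the paper's two-term splitting $\big|\tfrac{F}{G}-\tfrac{\phi_F}{\phi_G}\big|\le |G|^{-1}|F-\phi_F|+\big|\tfrac{\phi_F}{G\phi_G}\big||G-\phi_G|$, and the path-norm bookkeeping you sketch is precisely what the paper does with $\|\phi_{\mathrm{quot}}\|_{\mathrm{path}}=O(M^2 b a^{-2})$ from Lemma~\ref{approxquot}.
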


\begin{proof}
    Throughout this proof let us denote $F_t^j(x)$ by $F(x)$, $G_t(x)$ by $G(x)$, and $\phi_{t,F,G}^j(x)$ by $\phi_{F,G}(x)$, since none of the estimates will depend on $t$ or $j$. Recall from Lemma \ref{approxonballf} and Lemma \ref{approxonballg} that there exist ReLU neural networks $\phi_F$ and $\phi_G$ which approximate $F$ and $G$ on the ball of radius $R$ to error
    $$\left(e^{\beta R^2} \left(\epsilon + (1-2\beta)^{-d/2} e^{-\frac{(1-2\beta)R^2}{2}}\right) \right)
$$
with respect to the uniform norm, and that in addition the networks satisfy $\|\phi_F\|_{\textrm{\textrm{path}}} = O(e^{3\beta R^2} \eta(R,\epsilon))$ and $\|\phi_G\|_{\textrm{\textrm{path}}}= O(e^{\beta R^2} \eta(R,\epsilon))$ The proof proceeds in two steps:
\begin{enumerate}
    \item Show that $\frac{\phi_F}{\phi_G}$ approximates $\frac{F}{G}$ on the ball of radius $R$;
    \item Show that $\frac{\phi_F}{\phi_G}$ can be approximated by a neural network $\phi_{F,G}$ on the ball of radius $R$.
\end{enumerate}
Notice that
\begin{align*}
    \sup_{\|x\| \leq R} \left| \frac{F(x)}{G(x)} - \frac{\phi_F(x)}{\phi_G(x)} \right| &\leq \sup_{\|x\| \leq R} \left| \frac{1}{G(x)} \right| \cdot \sup_{\|x|\ \leq R} \left| F(x) - \phi_F(x) \right| \\ &+ \sup_{\|x\| \leq R} \left| \frac{\phi_F(x)}{G(x) \phi_G(x)} \right| \cdot \sup_{\|x\| \leq R} \left| G(x) - \phi_G(x) \right|.
\end{align*}
By Lemma \ref{fggrowth}, we have for $\|x\| \leq R$ that $F$ and $G$ satisfy the bounds
$$ |F(x)| = O\left((1-2\beta)^{-d/2}e^{\beta R^2} \right), \; \; \; (1+2\alpha)^{-d/2} e^{-\alpha R^2} \leq G(x) \leq (1-2\beta)^{-d/2} e^{\beta R^2},
$$
It follows from the fact that $\phi_F$ and $\phi_G$ approximate $F$ and $G$ \textit{uniformly} on $B_R$ (and the choice of $R$ and $\epsilon$) that 
$$ \phi_G(x) \geq \frac{1}{2} (1+2\alpha)^{-d/2} e^{-\alpha R^2},
$$
and $\phi_F$ is bounded above by $O\left((1-2\beta)^{-d/2}e^{\beta R^2} \right)$. It follows that
\begin{align*}
     \sup_{\|x\| \leq R} \left| \frac{F(x)}{G(x)} - \frac{\phi_F(x)}{\phi_G(x)} \right| &\lesssim (1+2\alpha)^{d/2} e^{\alpha R^2}  \sup_{\|x|\ \leq R} \left| F(x) - \phi_F(x) \right| \\ &+ (1+2\alpha)^d (1-2\beta)^{-d/2} e^{(2\alpha + \beta)R^2} \sup_{\|x\| \leq R} \left| G(x) - \phi_G(x) \right| \\
     &\lesssim (1+2\alpha)^{d} (1-2\beta)^{-d/2} e^{2(\alpha+\beta)R^2} \left(\epsilon + (1-2\beta)^{-d/2} e^{-\frac{(1-2\beta)R^2}{2}} \right)
\end{align*}
and this concludes the first step of the proof. To approximate $\frac{\phi_F}{\phi_G}$ by a neural network, Lemma \ref{approxquot} states that there exists a neural network $\phi_{quot}: \R^2 \rightarrow \R$ which satisfies
$$ \sup_{x \in [-r,r], y \in [a,b]} \left| \phi_{quot}(x,y) - \frac{x}{y} \right| \lesssim \epsilon,
$$
where the path norm of $\phi_{quot}$ is $O(ba^{-2} M)$, where $M = \max(r^2,b^2a^{-4})$. We apply this lemma with $r = O\left((1-2\beta)^{-d/2}e^{\beta R^2} \right)$, $a = (1+2\alpha)^{-d/2}e^{-\alpha R^2}$ and $b = (1-2\beta)^{-d/2} e^{\beta R^2}$. We conclude that the network $\phi_{F,G}(x) := \phi_{quot}(\phi_F(x), \phi_G(x))$ satisfies
\begin{align*}
    \sup_{\|x\| \leq R} \left| \phi_{F,G}(x) - \frac{F(x)}{G(x)} \right| &\leq \sup_{\|x\| \leq R} \left| \phi_{F,G}(x) - \frac{\phi_F(x)}{\phi_G(x)} \right| + \sup_{\|x\| \leq R} \left| \frac{\phi_F(x)}{\phi_G(x)} - \frac{F(x)}{G(x)} \right| \\
    &\lesssim \epsilon + (1+2\alpha)^{d} (1-2\beta)^{-d/2} e^{2(\alpha+\beta)R^2} \left(\epsilon + (1-2\beta)^{-d/2} e^{-\frac{(1-2\beta)R^2}{2}} \right) \\
    &= O \left(  (1+2\alpha)^{d} (1-2\beta)^{-d/2} e^{2(\alpha+\beta)R^2} \left(\epsilon + (1-2\beta)^{-d/2} e^{-\frac{(1-2\beta)R^2}{2}} \right) \right)
\end{align*}
To conclude, we note that $\|\phi_{quot}\|_{\textrm{path}} = O\left((1-2\beta)^{-3d/2} (1+2\alpha)^{3d} e^{(3\beta + 6 \alpha)R^2} \right)$, and hence by the definition of the path norm,
\begin{align*} \|\phi_{F,G}\|_{\textrm{path}} &= O\left( (1-2\beta)^{-3d/2} (1+2\alpha)^{3d} e^{(3\beta + 6 \alpha)R^2} \cdot \max\left(\|\phi_F\|_{\textrm{path}}, \|\phi_{G}\|_{\textrm{path}} \right) \right) \\ &= O\left((1+2\alpha)^{3d} (1-2\beta)^{-3d/2} e^{6(\beta + \alpha)R^2} \eta(R,\epsilon) \right).
\end{align*}
\end{proof}

We are now in position to prove Proposition \ref{approximationbound}, which controls the approximation error for the function $\frac{F^j_t}{G_t}$ in the $L^2(p_t)$ norm by leveraging the tail decay of $p_t$.
\begin{prop}\label{approximationbound}
    Let $F^j_t$, $G_t$ be defined as in Lemma \ref{score}. Let $\epsilon > 0$ be small enough. Then there exists a neural network $\phi_{F,G,t}^j(x)$ such that, with $(\textbf{s}_t)_j = \mathbb{I}_{\|x\| \geq R} \frac{1}{1-e^{-2t}}\left(-x_j + e^{-t} \phi_{F,G,t}^j \right)$, we have
    $$ \int_{\R^d} \|\textbf{s}_t - \nabla_x \log p_t(x)\|^2 p_t(x) dx = O\left( \max \left( \frac{1}{(1-e^{-2t})^2} (1+2\alpha)^{2d}  \epsilon^{2(1-c(\alpha,\beta))}, \frac{1}{(1-e^{-2t})^3} \epsilon^{1/2} \right) \right),
    $$
    where $c(\alpha,\beta) := \frac{4(\alpha+\beta)}{(1-2\beta)}.$
    In addition, we have $$\|\phi_{t,F,G}^j\|_{\textrm{path}} = O\left((1+2\alpha)^{3d} \epsilon^{-3 c(\alpha,\beta)} \eta(R_0,\epsilon) \right),$$
    where $R_0 = \Theta(\sqrt{d + \log(\epsilon^{-1}}))$.
\end{prop}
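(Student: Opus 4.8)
The plan is to combine the uniform approximation of $F_t^j/G_t$ on a ball $B_R$ provided by Lemma \ref{approxratio} with a truncation argument driven by the sub-Gaussian tail bound of Proposition \ref{subgauss}, and then to optimize over $R$. First I would apply Lemma \ref{approxratio} to obtain, for each coordinate $j$, a ReLU network $\phi_{F,G,t}^j$ with $\sup_{\|x\|\leq R}|\phi_{F,G,t}^j(x)-F_t^j(x)/G_t(x)| = O((1+2\alpha)^d(1-2\beta)^{-d/2}e^{2(\alpha+\beta)R^2}(\epsilon+(1-2\beta)^{-d/2}e^{-(1-2\beta)R^2/2}))$ and path norm $O((1+2\alpha)^{3d}(1-2\beta)^{-3d/2}e^{6(\alpha+\beta)R^2}\eta(R,\epsilon))$. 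I then take the truncated field $(\textbf{s}_t)_j := \mathbb{I}_{\|x\|\leq R}\,\frac{1}{1-e^{-2t}}(-x_j+e^{-t}\phi_{F,G,t}^j(x))$ and split $\int_{\R^d}\|\textbf{s}_t-\nabla_x\log p_t\|^2 p_t\,dx$ into the integral over $B_R$ and the integral over its complement.

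On $B_R$, the explicit formula of Lemma \ref{score} gives $(\textbf{s}_t)_j-(\nabla_x\log p_t)_j = \frac{e^{-t}}{1-e^{-2t}}(\phi_{F,G,t}^j - F_t^j/G_t)$; using $e^{-t}\leq 1$, summing over the $d$ coordinates, and $\int_{B_R}p_t\leq 1$, this contribution is bounded by $O(\frac{1}{(1-e^{-2t})^2}(1+2\alpha)^{2d}(1-2\beta)^{-d}e^{4(\alpha+\beta)R^2}(\epsilon+(1-2\beta)^{-d/2}e^{-(1-2\beta)R^2/2})^2)$, i.e. the square of the Step-1 error scaled by the Lipschitz factor $(1-e^{-2t})^{-1}$ of the score in its last argument.

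On the complement the estimate vanishes, so the contribution equals $\int_{\|x\|>R}\|\nabla_x\log p_t(x)\|^2 p_t(x)\,dx$. Here I would first establish a pointwise bound $\|\nabla_x\log p_t(x)\|\lesssim \frac{1}{1-e^{-2t}}\,\mathrm{poly}(\|x\|,d)$ by writing $F_t^j(x)/G_t(x)$ as the $j$th-coordinate mean of the tilted Gaussian $q_{t,x}(y)\propto \exp(-\frac{\|y-e^{-t}x\|^2}{2(1-e^{-2t})}+f(y))$ and invoking the at-most-quadratic growth of $f$ and the polynomial growth of $\nabla f$ from Assumption \ref{datadistr2} to show that $q_{t,x}$ remains sub-Gaussian (with variance proxy $O(1-e^{-2t})$) uniformly in $x$. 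Multiplying this pointwise bound by $p_t(x)\lesssim (2\pi(1-2\beta)^{-1})^{-d/2}e^{-(1-2\beta)\|x\|^2/2}$ from Proposition \ref{subgauss} and integrating the Gaussian tail over $\|x\|>R$ produces a bound that decays like $(1-2\beta)^{-d/2}\,\mathrm{poly}(R)\,e^{-(1-2\beta)R^2/2}$, carrying a power of $(1-e^{-2t})^{-1}$ that, after the crude estimates, is $(1-e^{-2t})^{-3}$.

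Finally I would take $R=R_0$ with $R_0^2=\Theta(d+\log(1/\epsilon))$, the constant chosen so that $e^{-(1-2\beta)R_0^2/2}$ absorbs both the $(1+2\alpha)^d$-type and $(1-2\beta)^{-d/2}$-type prefactors and so that $e^{2(\alpha+\beta)R_0^2}\epsilon$ becomes $\epsilon^{1-c(\alpha,\beta)}$ with $c(\alpha,\beta)=\frac{4(\alpha+\beta)}{1-2\beta}$. This makes the interior term $O(\frac{1}{(1-e^{-2t})^2}(1+2\alpha)^{2d}\epsilon^{2(1-c(\alpha,\beta))})$ and the tail term $O(\frac{1}{(1-e^{-2t})^3}\epsilon^{1/2})$, yielding the claimed maximum, while the same substitution turns $e^{6(\alpha+\beta)R_0^2}$ into $\epsilon^{-3c(\alpha,\beta)}$ and hence gives the path-norm bound $O((1+2\alpha)^{3d}\epsilon^{-3c(\alpha,\beta)}\eta(R_0,\epsilon))$. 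I expect the main obstacle to be the tail analysis: obtaining the pointwise bound on $\|\nabla_x\log p_t\|$ with the right power of $(1-e^{-2t})^{-1}$ rests on showing that the tilted measures $q_{t,x}$ are uniformly sub-Gaussian — which is precisely where the growth hypotheses of Assumption \ref{datadistr2} enter — and the subsequent bookkeeping needed to collapse all dimension-exponential prefactors into $(1+2\alpha)^{2d}$ after fixing $R_0$ is delicate.
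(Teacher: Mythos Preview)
Your overall strategy coincides with the paper's: invoke Lemma~\ref{approxratio} on $B_R$, split the $L^2(p_t)$ error into an interior piece and a tail piece, and then choose $R=R_0$ with $R_0^2=\Theta(d+\log(1/\epsilon))$ so that $(1-2\beta)^{-d}e^{-(1-2\beta)R_0^2}=\epsilon^2$, which turns $e^{4(\alpha+\beta)R_0^2}\epsilon^2$ into $\epsilon^{2(1-c(\alpha,\beta))}$ and $e^{6(\alpha+\beta)R_0^2}$ into $\epsilon^{-3c(\alpha,\beta)}$. The interior computation and the path-norm conclusion are exactly as in the paper.

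The only substantive difference is the tail term, and here the paper takes a shorter route than you propose. Rather than attempting any pointwise control of $\nabla\log p_t$, the paper applies Cauchy--Schwarz,
\[
\int_{\|x\|>R}\|\nabla\log p_t\|^2\,p_t \;\le\; \bigl(\E_{p_t}\|\nabla\log p_t\|^4\bigr)^{1/2}\Bigl(\int_{\|x\|>R}p_t\Bigr)^{1/2},
\]
bounds the fourth moment by $O((1-e^{-2t})^{-4})$ using an existing result (Lemma~21 of \citet{chen2023improved}), and reads off the second factor from Proposition~\ref{subgauss}. This gives the $\epsilon^{1/2}$ tail immediately with no analysis of the tilted measures $q_{t,x}$.

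Your pointwise route is not wrong, but the step you flag as the main obstacle is genuinely where it can slip. Integrating by parts in the Gaussian factor of $q_{t,x}$ gives $F_t^j/G_t=e^{-t}x_j+(1-e^{-2t})\E_{q_{t,x}}[\partial_j f]$, so $\nabla\log p_t(x)=-x+e^{-t}\E_{q_{t,x}}[\nabla f]$, and a polynomial pointwise bound is in reach \emph{provided} you control $\E_{q_{t,x}}[\|\nabla f\|]$. However, the crude sandwiching you sketch---bounding $q_{t,x}$ above and below via $-\alpha\|y\|^2\le f(y)\le\beta\|y\|^2$---leaves a ratio of normalizing constants of order $e^{2(\alpha+\beta)e^{-2t}\|x\|^2}$, which is exponential in $\|x\|^2$, not polynomial. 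After multiplying by $p_t$ and integrating over $\|x\|>R$ the decay exponent becomes $(1-2\beta)/2-4(\alpha+\beta)$ rather than $(1-2\beta)/4$, and with $R=R_0$ this yields a tail of order $\epsilon^{1-c(\alpha,\beta)}$ instead of $\epsilon^{1/2}$. So to match the stated bound you would either need a sharper concentration estimate for $q_{t,x}$ than the sandwich gives, or simply switch to the paper's Cauchy--Schwarz plus fourth-moment argument.
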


\begin{proof}
Let us again write $\phi_{F,G,t}^j = \phi_{F,G}$, $F_t^j = F$ and $G_t = G$ for ease of notation. Let $\phi_{F,G}$ be the neural network constructed in Lemma \ref{approxratio}. Then, for any fixed $R$, it follows from the definition of $\textbf{s}_t$ that
\begin{align*}
    \int_{\R^d} \|\textbf{s}_t - \nabla_x \log p_t(x)\|^2 p_t(x) dx &\leq \frac{1}{(1-e^{-2t})^2} \sum_{j=1}^{d} \sup_{x \in B_R} |\phi_{F,G,t}^j(x) - \frac{F^j_t(x)}{G_t(x)}|^2 \\ &+  \frac{1}{(1-e^{-2t})^2}\int_{\|x\| \geq R} \|\nabla_x \log p_t(x)\|^2 p_t(x) dx = I + II.
\end{align*}
The first term is approximation error on the bounded domain, and we have
$$ I = O \left((1+2\alpha)^{2d} (1-2\beta)^{-d} e^{4(\alpha+\beta)R^2} \left(\epsilon^2 + (1-2\beta)^{-d} e^{-(1-2\beta)R^2} \right) \right)
$$
by Lemma \ref{approxratio}. The second term is truncation error, and we have
\begin{align*}
    II &\leq \E_{p_t}[\|\nabla_x \log p_t(x)\|^4]^{1/2} \cdot \left(\int_{\|x\| \geq R} p_t(x)  dx \right)^{1/2} \\
    &\lesssim \frac{1}{(1-e^{-2t})^2} e^{-\frac{(1-2\beta)R^2}{4}},
\end{align*}
where the bound $\E_{p_t}[\|\nabla_x \log p_t(x)\|^4]^{1/2} \lesssim \frac{1}{(1-e^{-2t})^2}$ follows from Lemma 21 in \citet{chen2023improved} and the bound for the second factor follows from the tail decay of $p_t$ derived in Lemma \ref{subgauss}. To optimize over the cutoff radius $R$, let us choose $R \geq R_0 =  \Omega(\sqrt{d + \log(\epsilon^{-1})})$, so that $(1-2\beta)^{-d} e^{-(1-2\beta)R^2} = \epsilon^2.$ Then it follows that term $I$ satisfies
$$ I =  O \left(\frac{1}{(1-e^{-2t})^2} (1+2\alpha)^{2d}  \epsilon^{2(1-c(\alpha,\beta))} \right)
$$
and $$ II = O\left( \frac{1}{(1-e^{-2t})^3} \epsilon^{1/2} \right).
$$
This gives the bound as stated in the proposition. The path norm follows by setting $R = R_0$ in the path norm bound from Lemma \ref{approxratio}.
\end{proof}

\section{Generalization error of score estimate}\label{gen}
Recall that we are to study the estimation properties of a hypothesis class of the form
$$ \nn_{score}^t(L,K) = \{x \mapsto \frac{1}{1-e^{-2t}}\left(-x + e^{-t} \phi_{NN}(x)\right): \phi_{NN} \in \nn(L,K)\},
$$
where $\nn(L,K)$ is the class of $L$-depth ReLU networks from $\R^d$ to $\R^d$ with path norm bounded by $K$. It follows from the definition of the path norm that functions in $\nn(L,K)$ are $K$-Lipschitz continuous. We also restrict attention to functions in $\nn(L,K)$ which satisfy $|\phi(0)| \leq K$. This mild \footnote{The assumption that $\|\phi(0)\| \leq K$ is mild because the network constructed to approximate the score is uniformly close to the score function around the origin. Therefore, provided $K$ is large enough, the approximating network will satisfy $\|\phi(0)\| \leq K$ anyways.} assumption ensures that the ReLU networks we consider satisfy $\|\phi(x)\| \leq 2K$, which we make frequent use. We will later bound the path norm $K$ in terms of the number of training samples for some concrete examples. We defined the individual loss function at time $t$ by
$$ \ell^t(\phi,x) = \E_{X_t| X_0 = x}\left[\|\phi(t,X_t) - \Psi_t(X_t|X_0)\|^2 \right]
$$
and the associated population risk
$$\mathcal{R}^t(\phi) = \E_{x \sim p_0} [\ell^t(\phi,x)].$$
For $0 < S < R$, define
$$ \ell^t_{R,S}(\phi,x) = \mathbb{I}_{\|x\| \leq S} \E_{X_t| X_0 = x}\left[\|\phi(t,X_t) - \Psi_t(X_t|X_0)\|^2 \cdot \mathbb{I}_{W_R} \right]
$$
and
$$ \ell^t_{S}(\phi,x) = \mathbb{I}_{\|x\| \leq S} \E_{X_t| X_0 = x}\left[\|\phi(t,X_t) - \Psi_t(X_t|X_0)\|^2 \right],
$$
where $W_R$ is the event $\{\sup_{t_0 \leq t \leq T} \|X_t\| \leq R + \|X_0\|\}$. We also define 
$$ \mathcal{R}^t_{R,S}(\phi) = \E_{x \sim p_0} [\ell^t_{R,S}(\phi,x)]
$$
and $\mathcal{R}^t_S(\phi) = \E_{x \sim p_0} [\ell^t_S(\phi,x)]$. In other words, $\mathcal{R}^t_{R,S}$ is the truncated version of the population risk, where the expectation is restricted to the event that the process begins in the ball of radius $S$ and remains in the ball of radius $R+S$ throughout the relevant time interval. We will use the following large deviation bound on the OU process from Theorem A.1 in \citet{oko2023diffusion}.
\begin{lemma}
    Let $X_t$ denote the OU process. Then there is a universal constant $C > 0$ such that for any $0 < S < R$,
    $$ \mathbb{P} \left(\textrm{$\|X_t\| \geq R$ for some $t \in [t_0,T]$}| \|X_0\| \leq S \right) \leq \frac{T}{t_0} e^{-\frac{(R-S)^2}{2Cd}}.
    $$
\end{lemma}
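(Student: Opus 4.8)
We sketch the argument (the result is Theorem A.1 of \citet{oko2023diffusion}). The plan is to reduce the continuous-time maximal event to a supremum of a Brownian motion via the classical time-change of the Ornstein--Uhlenbeck process, and then dispatch the latter by a union bound over a grid of size $\lceil T/t_0\rceil$ combined with the reflection principle.

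\emph{Step 1 (time change).} Setting $Z_t = e^t X_t$ turns $dX_t = -X_t\,dt + \sqrt 2\, dW_t$ into $dZ_t = \sqrt 2\, e^t\, dW_t$, so $Z_t = X_0 + \sqrt 2 \int_0^t e^u\, dW_u$; the stochastic integral is a continuous martingale with bracket $(e^{2t}-1) I_d$, so by the Dambis--Dubins--Schwarz theorem it equals $\tilde W_{e^{2t}-1}$ for a standard $d$-dimensional Brownian motion $\tilde W$ independent of $X_0$. Hence $X_t = e^{-t}(X_0 + \tilde W_{e^{2t}-1})$, and writing $r = e^{2t}-1$ (so $e^{-t} = (1+r)^{-1/2}$) the event $\{\|X_t\|\ge R \text{ for some } t\in[t_0,T]\}$ becomes $\{\|X_0 + \tilde W_r\|\ge R\sqrt{1+r} \text{ for some } r\in[r_0,r_T]\}$ with $r_0 = e^{2t_0}-1,\ r_T = e^{2T}-1$. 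On $\{\|X_0\|\le S\}$ one has $\|X_0 + \tilde W_r\|\le S + \|\tilde W_r\|$ and $R\sqrt{1+r}\ge R$; dividing through by $\sqrt{1+r}\ge 1$ shows this event is contained in $\{\|\tilde W_r\|/\sqrt{1+r}\ge R-S \text{ for some } r\in[r_0,r_T]\}$, and since $\tilde W$ is independent of $X_0$,
$$\mathbb{P}\big(\|X_t\|\ge R \text{ for some } t\in[t_0,T]\,\big|\,\|X_0\|\le S\big)\ \le\ \mathbb{P}\Big(\sup_{r\in[r_0,r_T]}\tfrac{\|\tilde W_r\|}{\sqrt{1+r}}\ \ge\ R-S\Big).$$

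\emph{Step 2 (grid and per-cell bound).} Take $N = \lceil T/t_0\rceil$ points $t_0 = s_0 < \dots < s_N = T$ with $s_{k+1}-s_k\le t_0$ and put $a_k = e^{2s_k}-1$, so $[r_0,r_T] = \bigcup_k [a_k,a_{k+1}]$. For $r\in[a_k,a_{k+1}]$ one has $\sqrt{1+r}\ge e^{-t_0}\sqrt{1+a_{k+1}}$, hence $\sup_{r\in[a_k,a_{k+1}]}\frac{\|\tilde W_r\|}{\sqrt{1+r}}\le \frac{e^{t_0}}{\sqrt{1+a_{k+1}}}\sup_{r\le a_{k+1}}\|\tilde W_r\|$; by Brownian scaling the right-hand side has the same law as $\frac{e^{t_0}\sqrt{a_{k+1}}}{\sqrt{1+a_{k+1}}}\sup_{r\le 1}\|\tilde W_r\|\le e\sup_{r\le1}\|\tilde W_r\|$ (using $t_0<1$), so
$$\mathbb{P}\Big(\sup_{r\in[a_k,a_{k+1}]}\tfrac{\|\tilde W_r\|}{\sqrt{1+r}}\ge R-S\Big)\ \le\ \mathbb{P}\Big(\sup_{r\le1}\|\tilde W_r\|\ \ge\ \tfrac{R-S}{e}\Big).$$
To bound the latter, pass from the Euclidean norm to linear functionals with a $\tfrac12$-net $\mathcal N$ of $S^{d-1}$ (so $|\mathcal N|\le 5^d$ and $\|z\|\le 2\max_{v\in\mathcal N}\langle v,z\rangle$): each $r\mapsto\langle v,\tilde W_r\rangle$ is a standard one-dimensional Brownian motion, so the reflection principle gives $\mathbb{P}(\sup_{r\le1}\langle v,\tilde W_r\rangle\ge a/2)\le 2e^{-a^2/8}$, whence $\mathbb{P}(\sup_{r\le1}\|\tilde W_r\|\ge a)\le 2\cdot 5^d e^{-a^2/8}$. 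A union bound over the $N$ cells then yields
$$\mathbb{P}\big(\|X_t\|\ge R \text{ for some } t\in[t_0,T]\,\big|\,\|X_0\|\le S\big)\ \le\ 2\lceil T/t_0\rceil\, 5^d\, e^{-(R-S)^2/(8e^2)}.$$

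\emph{Step 3 (cleanup).} It remains to absorb the prefactors into the stated form $\tfrac{T}{t_0}e^{-(R-S)^2/(2Cd)}$. If $(R-S)^2\le 2Cd\log(T/t_0)$ the claimed bound is $\ge 1$ and there is nothing to prove; otherwise $(R-S)^2$ is large enough — using that $T/t_0$ is large in every regime used in this paper — that the factors $5^d = e^{d\log5}$, $2\lceil T/t_0\rceil\le 4T/t_0$, and the constant in the exponent can all be folded into a single universal $C$, exploiting $d\ge1$ to bound $e^{-(R-S)^2/(8e^2)}$ by $e^{-(R-S)^2/(2Cd)}$. I expect this last bookkeeping to be the only mildly delicate point: the dimension-dependent covering factor $5^d$ is precisely what forces the extra $1/d$ in the exponent of the final bound, whereas the time-change reduction and the per-cell reflection estimate are routine. (One could instead avoid the time change and control the short-time oscillation of the OU process on each cell directly via an exponential supermartingale, but the mean-reversion drift makes those estimates less transparent.)
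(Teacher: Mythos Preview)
The paper does not prove this lemma at all: it is stated immediately after the sentence ``We will use the following large deviation bound on the OU process from Theorem A.1 in \citet{oko2023diffusion},'' and is used as a black box thereafter. So there is no ``paper's proof'' to compare against --- your proposal supplies an argument where the paper simply cites one.

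As for the argument itself: the time-change $X_t = e^{-t}(X_0+\tilde W_{e^{2t}-1})$ in Step~1 is the standard and correct reduction, and the inclusion leading to $\sup_r \|\tilde W_r\|/\sqrt{1+r}\ge R-S$ is valid. In Step~2 the dyadic-in-time grid combined with Brownian scaling and a $\tfrac12$-net on $S^{d-1}$ is a reasonable route to the per-cell tail; note you tacitly assume $t_0\le 1$ to get $e^{t_0}\le e$, which is harmless in the regimes of the paper but should be stated. The only genuinely soft spot is Step~3: absorbing the covering factor $5^d$ into the exponent $e^{-(R-S)^2/(2Cd)}$ requires $(R-S)^2\gtrsim d$, and your justification (``the claimed bound is $\ge1$ otherwise, using that $T/t_0$ is large'') is not quite self-contained --- if $T/t_0$ were only moderately large, non-triviality of the target bound gives $(R-S)^2>2Cd\log(T/t_0)$, which need not dominate $d\log 5$. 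In the paper's applications $T/t_0$ is indeed large and this goes through, but as a stand-alone lemma the constants need a little more care (or one can instead appeal directly to a maximal inequality for $\|\tilde W\|$ that already carries a $1/d$ in the exponent, avoiding the net altogether).
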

The following regularity bounds on the truncated loss function will be used later to prove a generalization error estimate.
\begin{lemma}\label{boundedliploss}
    Let $\textbf{s}_1(t,x)$, $\textbf{s}_2(t,x) \in \f_{score}$ and write $\textbf{s}_i(t,x) = \frac{1}{1-e^{-2t}}\left(-x+e^{-t} \phi_i(x)\right)$ for $i=1,2$ and $\phi_i \in \f_{NN}$. Then for any $0 < S < R$ and any $x \in \R^d$, we have
    $$ \left| \ell^t_{R,S}(\textbf{s}_1,x) - \ell^t_{R,S}(\textbf{s}_2,x) \right| = \begin{cases}
        O(K(R+S)) \left( \frac{e^{-t}}{1-e^{-2t}} \right)^2 \E_{X_t|X_0 = x} \|\phi_1(x) - \phi_2(x)\|, \; x \in B_S \\
        0, \; \|x\| > S.
    \end{cases} 
    $$
    In addition, the truncated loss function is bounded: for any $x \in \R^d$ and any $\phi \in \f_{NN}$, we have
    $$ \ell_{R,S}^t(\textbf{s},x) = O \left( \left( \frac{e^{-t}}{1-e^{-2t}} \right)^2 K^2(R+S)^2 \right).
    $$
    where $\textbf{s}(t,x) = \frac{1}{1-e^{-2t}}(-x+e^{-t} \phi(x))$.
\end{lemma}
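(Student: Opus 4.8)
The plan is to reduce everything to the truncation event $W_R$, on which $\|X_t\|$ --- and hence every quantity appearing in the loss --- is bounded, and to exploit the fact that the difference of two estimators in $\f_{score}$ is merely a scalar multiple of the difference of the underlying networks. First I would dispose of the case $\|x\| > S$: there the factor $\mathbb{I}_{\|x\| \leq S}$ forces $\ell^t_{R,S}(\textbf{s}_i, x) = 0$ for both $i$, so both assertions are trivial, and it suffices to treat $x \in B_S$.

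The computational heart of the argument is the exact cancellation of the large, $t$-singular linear term:
\begin{align*}
    \textbf{s}_i(t, X_t) - \Psi_t(X_t \mid X_0) &= \frac{1}{1-e^{-2t}}\bigl(-X_t + e^{-t}\phi_i(X_t)\bigr) - \frac{1}{1-e^{-2t}}\bigl(-X_t + e^{-t}X_0\bigr) \\
    &= \frac{e^{-t}}{1-e^{-2t}}\bigl(\phi_i(X_t) - X_0\bigr).
\end{align*}
On the event $W_R$, conditionally on $X_0 = x$ with $\|x\| \leq S$, one has $\|X_t\| \leq R + \|X_0\| \leq R+S$; since $\phi_i$ is $K$-Lipschitz with $\|\phi_i(0)\| \leq K$ this gives $\|\phi_i(X_t)\| \leq K(1+\|X_t\|) = O(K(R+S))$, and together with $\|X_0\| \leq S$ we obtain the pointwise residual bound $\|\textbf{s}_i(t,X_t) - \Psi_t(X_t\mid X_0)\| = O(K(R+S)) \cdot \frac{e^{-t}}{1-e^{-2t}}$, valid on $W_R \cap \{\|X_0\|\leq S\}$. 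Squaring this, inserting it under $\E_{X_t\mid X_0 = x}[\,\cdot\,\mathbb{I}_{W_R}]$, and using $\mathbb{I}_{W_R} \leq 1$ yields the boundedness claim $\ell^t_{R,S}(\textbf{s}, x) = O\left(\left(\frac{e^{-t}}{1-e^{-2t}}\right)^2 K^2(R+S)^2\right)$.

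For the Lipschitz-in-$\phi$ estimate I would apply the elementary identity $\|a\|^2 - \|b\|^2 = \langle a - b,\, a + b\rangle$ with $a = \textbf{s}_1(t,X_t) - \Psi_t(X_t\mid X_0)$ and $b = \textbf{s}_2(t,X_t) - \Psi_t(X_t\mid X_0)$. By the display above, $a - b = \textbf{s}_1(t,X_t) - \textbf{s}_2(t,X_t) = \frac{e^{-t}}{1-e^{-2t}}\bigl(\phi_1(X_t) - \phi_2(X_t)\bigr)$, while on $W_R$ the triangle inequality and the residual bound give $\|a+b\| \leq \|a\| + \|b\| = O(K(R+S)) \cdot \frac{e^{-t}}{1-e^{-2t}}$. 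Cauchy--Schwarz then produces, on $W_R$, the bound $\bigl|\|a\|^2 - \|b\|^2\bigr| = O(K(R+S))\left(\frac{e^{-t}}{1-e^{-2t}}\right)^2 \|\phi_1(X_t) - \phi_2(X_t)\|$; taking $\E_{X_t\mid X_0 = x}[\,\cdot\,\mathbb{I}_{W_R}]$ and reinstating $\mathbb{I}_{\|x\|\leq S}$ gives exactly the stated expression (with $\phi_i(X_t)$ under the expectation).

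There is no serious obstacle; the one point that must be handled with care is that the residual bound $\|\textbf{s}_i - \Psi_t\| = O(K(R+S))\frac{e^{-t}}{1-e^{-2t}}$ holds only on $W_R$, so the indicator $\mathbb{I}_{W_R}$ must be carried throughout and the pointwise estimates applied \emph{before} it is discarded --- this is precisely the reason the generalization analysis is run with the truncated loss $\ell^t_{R,S}$ rather than $\ell^t$, for which $\|X_t\|$ and hence $\|\phi_i(X_t)\|$ would be unbounded and the loss neither bounded nor Lipschitz. A minor bookkeeping remark is that $t$ is taken in $[t_0,T]$, so that $W_R$ genuinely controls $\|X_t\|$ at the fixed time $t$.
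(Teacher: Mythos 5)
Your proof is correct and follows essentially the same route as the paper's: the linear $-X_t$ term cancels in the residuals, restricting to $W_R$ bounds $\|X_t\|$ by $R+S$ (and hence $\|\phi_i(X_t)-X_0\|$ by $O(K(R+S))$), and your identity $\|a\|^2-\|b\|^2=\langle a-b,\,a+b\rangle$ is exactly the paper's observation that $z\mapsto\|z\|^2$ is $2M$-Lipschitz on $B_M$. You also correctly note that the argument of the expectation in the stated bound should read $\|\phi_1(X_t)-\phi_2(X_t)\|$, not $\|\phi_1(x)-\phi_2(x)\|$ as printed.
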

\begin{proof}
    Using the definition of $\ell^t_{R,S}$, we have, for any $x \in \R^d$,
    \begin{align*}
        &\left| \ell^t_{R,S}(\textbf{s}_1,x) - \ell^t_{R,S}(\textbf{s}_2,x) \right| \\
        &= \left| \mathbb{I}_{B_S}(x) \left( \frac{e^{-t}}{1-e^{-2t}} \right)^2 \E_{X_t|X_0 = x, W_R} [\|\phi_1(X_t) - X_0\|^2 - \|\phi_2(X_t) - X_0\|^2] \right| \\
        &\leq 2(2K(R+S)+S) \mathbb{I}_{B_S}(x) \left( \frac{e^{-t}}{1-e^{-2t}} \right)^2 \E_{X_t|X_0 = x, W_R} \|\phi_1(X_t) - \phi_2(X_t)\| \\
        %&\leq O(K(R+S)) \mathbb{I}_{B_S}(x) \left( \frac{e^{-t}}{1-e^{-2t}} \right)^2 \sup_{\|x\| \leq R+S} \|\phi_1(x) - \phi_2(x)\|.
    \end{align*}
    Note that for the first inequality, we have used that the map $x \mapsto \|x\|^2$ is $2R$ Lipschitz on $B_R$, and that $\sup_{i=1,2, \; t_0 \leq t \leq T} \|\phi_i(X_t) - X_0\| \leq 2K(R+S) + S$ under the given assumptions. The proof of boundedness follows similarly: for $\phi \in \f_{NN}$ and $\textbf{s} = \frac{1}{1-e^{-2t}}(-x + e^{-t}\phi(x))$, we have by the Cauchy-Schwarz inequality that
    \begin{align*}
        \left|  \ell^t_{R,S}(\textbf{s},x) \right| &= \left|\mathbb{I}_{B_S}(x) \left( \frac{e^{-t}}{1-e^{-2t}} \right)^2 \E_{X_t|X_0 = x} [\|\phi(X_t) - X_0\|^2 \cdot \mathbb{I}_{W_R} ] \right| \\
        &\lesssim \mathbb{I}_{B_S}(x) \left( \frac{e^{-t}}{1-e^{-2t}} \right)^2 \left(K^2(R+S)^2 + S^2 \right) = O \left( \left( \frac{e^{-t}}{1-e^{-2t}} \right)^2 K^2(R+S)^2 \right).
    \end{align*}
\end{proof}
Before proving the main generalization error bound, we need to control the truncation error incurred from using $\mathcal{R}^t_{R,S}$ in place of $\mathcal{R}^t$.
\begin{prop}\label{risktruncationerror}
    For any $r_f < S < R$ and any $\textbf{s} \in \nn^{\textrm{score},t}(L,K)$, we have
    $$ \left| \mathcal{R}^t_{R,S}(\textbf{s}) - \mathcal{R}^t(\textbf{s}) \right| = O \left(  \left(\frac{e^{-t}}{1-e^{-2t}} \right)^2 K^2 \left( (T/t_0)^{1/2} e^{-\frac{(R-S)^2}{4Cd}} + e^{-\frac{(1-2\beta)R^2}{2}} \right)\right).
    $$
\end{prop}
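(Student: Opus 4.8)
The plan is to exploit the exact simplification of the score-matching integrand together with tail bounds on the data law $p_0$ and on excursions of the OU path. Writing $\textbf{s}(t,x)=\frac{1}{1-e^{-2t}}\big(-x+e^{-t}\phi(x)\big)$ with $\phi\in\nn(L,K)$ and recalling $\Psi_t(X_t\mid X_0)=-\frac{X_t-e^{-t}X_0}{1-e^{-2t}}$, one has the identity
\[
\|\textbf{s}(t,X_t)-\Psi_t(X_t\mid X_0)\|^2=\Big(\tfrac{e^{-t}}{1-e^{-2t}}\Big)^2\|\phi(X_t)-X_0\|^2.
\]
Since $\ell^t_{R,S}$ differs from $\ell^t$ only by the two indicators $\mathbb{I}_{\|x\|\le S}$ and $\mathbb{I}_{W_R}$, which can only remove mass, we have $\mathcal{R}^t_{R,S}\le\mathcal{R}^t$ and the nonnegative difference decomposes as
\begin{align*}
\mathcal{R}^t(\textbf{s})-\mathcal{R}^t_{R,S}(\textbf{s}) &= \underbrace{\E_{x\sim p_0}\!\big[\mathbb{I}_{\|x\|>S}\,\ell^t(\textbf{s},x)\big]}_{(\mathrm{A})} \\
&\quad + \underbrace{\E_{x\sim p_0}\!\Big[\mathbb{I}_{\|x\|\le S}\,\E_{X_t\mid X_0=x}\big[\|\textbf{s}(t,X_t)-\Psi_t(X_t\mid X_0)\|^2\,\mathbb{I}_{W_R^c}\big]\Big]}_{(\mathrm{B})},
\end{align*}
so it suffices to bound $(\mathrm{A})$ and $(\mathrm{B})$ from above separately.

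For $(\mathrm{B})$ I would apply Fubini and the Cauchy--Schwarz inequality over the joint law of $\big(X_0,(X_t)_t\big)$ to peel off the exit probability:
\[
(\mathrm{B})\le\Big(\tfrac{e^{-t}}{1-e^{-2t}}\Big)^2\,\E\!\big[\mathbb{I}_{\|X_0\|\le S}\|\phi(X_t)-X_0\|^4\big]^{1/2}\,\mathbb{P}\big(W_R^c\mid\|X_0\|\le S\big)^{1/2}.
\]
Conditionally on $\|X_0\|\le S$, the event $W_R^c$ forces $\|X_t-e^{-t}X_0\|>R-S$ for some $t\in[t_0,T]$, so the large-deviation bound for the OU process stated above gives $\mathbb{P}(W_R^c\mid\|X_0\|\le S)^{1/2}\le(T/t_0)^{1/2}e^{-(R-S)^2/(4Cd)}$. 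For the fourth moment I would use that a network $\phi\in\nn(L,K)$ has at-most-linear growth $\|\phi(y)\|\lesssim K(1+\|y\|)$ (from the path-norm budget $K$ and $\|\phi(0)\|\le K$) and that $X_t\mid X_0=x\sim N(e^{-t}x,(1-e^{-2t})I_d)$ with $1-e^{-2t}\le1$; on $\{\|X_0\|\le S\}$ this makes the fourth moment $O(K^4)$ up to factors polynomial in $d$ and $S$. Hence $(\mathrm{B})=O\!\big((\tfrac{e^{-t}}{1-e^{-2t}})^2K^2(T/t_0)^{1/2}e^{-(R-S)^2/(4Cd)}\big)$.

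For $(\mathrm{A})$ the truncated loss vanishes on $\{\|x\|>S\}$, so only a crude bound on the untruncated loss is needed there; the same growth and Gaussian moment estimates give $\ell^t(\textbf{s},x)\lesssim(\tfrac{e^{-t}}{1-e^{-2t}})^2K^2(1+\|x\|^2+d)$. I would then integrate this against the sub-Gaussian tail of $p_0$: because $f(x)\le\beta\|x\|^2$ for $\|x\|\ge r_f$ and $(2\pi)^{d/2}/Z\le C$, one has $p_0(x)\lesssim(2\pi)^{-d/2}e^{-(1-2\beta)\|x\|^2/2}$ for $\|x\|\ge r_f$, exactly as in Proposition \ref{subgauss}; since $S>r_f$ this yields $\int_{\|x\|>S}(1+\|x\|^2+d)\,p_0(x)\,dx=O(e^{-(1-2\beta)S^2/2})$ up to prefactors, so $(\mathrm{A})=O\!\big((\tfrac{e^{-t}}{1-e^{-2t}})^2K^2e^{-(1-2\beta)S^2/2}\big)$. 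Adding this to the bound for $(\mathrm{B})$ gives the stated estimate (with the starting-ball radius $S$, which is chosen of the same order as $R$ in the applications).

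The step I expect to be the main obstacle is the fourth-moment estimate in $(\mathrm{B})$: one must arrange that the final bound carries a clean $K^2$ rather than a quantity that blows up with the cutoffs $S,R$ or as $t\to0$, which requires using that the linear-growth constant of $\phi$ is precisely the path-norm bound $K$, that the OU noise level $1-e^{-2t}$ stays bounded, and that Gaussian moments contribute only polynomial-in-$d$ factors. Once those moment bounds are in place, the genuinely small quantities --- the OU excursion probability and the mass of $p_0$ outside $B_S$ --- drive the estimate uniformly over the whole hypothesis class $\nn^{\textrm{score},t}(L,K)$, and the rest is bookkeeping.
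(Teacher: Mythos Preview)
Your proposal is correct and follows essentially the same route as the paper: the same two-term decomposition into ``start outside $B_S$'' versus ``start inside $B_S$ but the path exits $B_{R+S}$'', Cauchy--Schwarz plus the OU large-deviation bound for the latter, linear growth of $\phi$ from the path-norm budget to control the fourth moment by $O(K^4)$, and the sub-Gaussian tail of $p_0$ for the former. Your observation that term $(\mathrm{A})$ naturally produces $e^{-(1-2\beta)S^2/2}$ rather than the $e^{-(1-2\beta)R^2/2}$ appearing in the stated bound matches what the paper's own proof actually obtains; the discrepancy is harmless since $S$ is taken comparable to $R$ downstream.
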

\begin{proof}
     Let $\textbf{s}(x) = \frac{1}{1-e^{-2t}} \left( -x + e^{-t} \phi(x) \right)$ with $\phi \in \nn(L,K)$. We have
     $$ \mathcal{R}^t_{R,S}(\textbf{s}) - \mathcal{R}^t(\textbf{s}) = \left(\mathcal{R}^t_{R,S}(\textbf{s}) - \mathcal{R}_R^t(\textbf{s}) \right) + (\mathcal{R}^t_{R}(\textbf{s}) - \mathcal{R}^t(\textbf{s})) = I + II.
     $$
     For the first term, we have
     \begin{align*}
         I &= \left( \frac{e^{-t}}{1-e^{-2t}} \right)^2 \E_{x \sim p_0} \left[ \mathbb{I}_{B_S}(x) \cdot \E_{X_t| X_0 = x} \left[ \|X_0-\phi(X_t)\|^2 \cdot \mathbb{I}_{W_R^c} \right] \right] \\
         &\leq \left( \frac{e^{-t}}{1-e^{-2t}} \right)^2 \E_{x \sim p_0} \left[ \mathbb{I}_{B_S}(x) \cdot \mathbb{P}(W_R^c)^{1/2} \cdot \E\left[ \|X_0 - \phi(X_t)\|^4 \right]^{1/2} \right] \\
         &\lesssim \left( \frac{e^{-t}}{1-e^{-2t}} \right)^2 \cdot \left( T/t_0 \right)^{1/2} e^{-\frac{(R-S)^2}{4Cd}} \cdot \E_{x \sim p_0} \left[\mathbb{I}_{B_S}(x) \E_{X_t|X_0=x} \left[ \|X_0 - \phi(X_t)\|^4 \right]^{1/2} \right].
     \end{align*}
     To bound the last term, we have
     \begin{align*}
         \E_{x \sim p_0} \left[\mathbb{I}_{B_S}(x) \E_{X_t|X_0=x} \left[ \|X_0 - \phi(X_t)\|^4 \right]^{1/2} \right] &\lesssim \E_{x \sim p_0} \left[ \mathbb{I}_{B_S}(x) \E_{X_t|X_0=x}[\|X_0\|^4 + \|\phi(X_t)\|^4]^{1/2} \right] \\
         &\leq \left(\E_{x \sim p_0}\left[\mathbb{I}_{B_S}(x) \left( \|x\|^4 + \E_{X_t|X_0=x}[\|\phi(X_t)\|^4] \right] \right) \right)^{1/2} \\
         &\leq \left( \E_{x \sim p_0}[\|x\|^4 \cdot \mathbb{I}_{B_S}(x)] + \E_{x \sim p_t}[\|\phi(x)\|^4] \right)^{1/2} \\
         &\lesssim \left(S^4 + K^4\E_{x \sim p_t}[\|x\|^4] \right)^{1/2} \\
         &= O(K^2).
     \end{align*}
     This proves that
     $$ I = O \left( \left( \frac{e^{-t}}{1-e^{-2t}} \right)^2 \cdot \left( T/t_0 \right)^{1/2} e^{-\frac{(R-S)^2}{4Cd}} K^2 \right).
     $$
     For term $II$, we have
     \begin{align*}
         II &= \left(\frac{e^{-t}}{1-e^{-2t}} \right)^2 \int_{\|x\| \geq S} \E_{X_t|X_0=x}[\|X_0-\phi(X_t)\|^2] p_0(x) dx \\
         &\lesssim \left(\frac{e^{-t}}{1-e^{-2t}} \right)^2 \left( \int_{\|x\| \geq S} \|x\|^2 p_0(x) dx + \int_{\|x\| \geq S} K^2\|x\|^2 p_t(x) dx \right) \\
         &\lesssim  \left(\frac{e^{-t}}{1-e^{-2t}} \right)^2 (2\pi (1-2\beta))^{-d/2} \left( \int_{\|x\| \geq S} \|x\|^2 e^{-\frac{(1-2\beta)\|x\|^2}{2}} + K^2 \int_{\|x\| \geq S} \|x\|^2 e^{-\frac{(1-2\beta)\|x\|^2}{2}} dx \right) \\
         &\lesssim  \left(\frac{e^{-t}}{1-e^{-2t}} \right)^2 K^2 e^{-(1-2\beta)\frac{S^2}{2}}.
     \end{align*}
     where we have used the uniform-in-time sub-Gaussian upper bound on the process density from Lemma \ref{subgauss}. Combining the bounds for terms $I$ and $II$ gives the bound as stated in the lemma.
\end{proof}
The following lemma controls the Rademacher complexity of our hypothesis class.
\begin{lemma}\label{radcomplexbound}
    For $t > 0$, $R > 0$, let $\nn^{\textrm{score},t}(L,K)$ be as defined in Section \ref{gen} and let $\mathbb{S} = \{x_1, \dots, x_N\}$ be a collection of points in $\R^d$. Then
    \begin{align*} \textrm{Rad}_N\left( \ell_{R,S} \circ \nn^{\textrm{score},t}(L,K), \mathbb{S} \right) &:= \E_{\epsilon_i \sim \textrm{Ber}\left(\{\pm1\} \right)} \left[ \sup_{\textbf{s} \in \nn^{\textrm{score},t}(L,K)} \frac{1}{N} \sum_{i=1}^{N} \epsilon_i \cdot \left( \ell_{R,S} \circ \textbf{s}\right) (x_i) \right] \\ &\lesssim 2^L d K^2 (R+S)^2 \left( \frac{e^{-t}}{1-e^{-2t}} \right)^2 \cdot \frac{1}{\sqrt{N}}.
    \end{align*}
\end{lemma}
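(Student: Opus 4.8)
The plan is to strip the truncated loss off with a vector-valued contraction (Talagrand/Maurer) inequality and then invoke the standard Rademacher bound for deep ReLU networks of bounded path norm. Write a generic element of $\nn^{\textrm{score},t}(L,K)$ as $\textbf{s}_\phi(x) = \frac{1}{1-e^{-2t}}(-x+e^{-t}\phi(x))$ with $\phi\in\nn(L,K)$, and abbreviate $\lambda_t := \frac{e^{-t}}{1-e^{-2t}}$. From the computation in the proof of Lemma \ref{boundedliploss}, $\ell^t_{R,S}(\textbf{s}_\phi,x) = \mathbb{I}_{\|x\|\le S}\,\lambda_t^2\,\E_{X_t\mid X_0=x}[\|\phi(X_t)-x\|^2\,\mathbb{I}_{W_R}]$; the only properties we use are that this quantity is at most $O(\lambda_t^2K^2(R+S)^2)$ and that, as a function of $\phi$, it changes by at most $O(\lambda_t^2K(R+S))\cdot\E_{X_t\mid X_0=x}\|\phi_1(X_t)-\phi_2(X_t)\|$.

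First I would discretize the inner expectation. For each sample $x_i$, let $\omega_i$ carry the randomness of the forward trajectory started at $X_0=x_i$, so that $\ell^t_{R,S}(\textbf{s}_\phi,x_i)=\E_{\omega_i}[h_i(\phi,\omega_i)]$ where $h_i(\phi,\omega)=\mathbb{I}_{\|x_i\|\le S}\,\lambda_t^2\,\|\phi(X_t(\omega))-x_i\|^2\,\mathbb{I}_{W_R(\omega)}$, with the $\omega_i$ independent across $i$. Since for every fixed $\phi$ and sign vector $\epsilon$ one has $\frac1N\sum_i\epsilon_i h_i(\phi,\omega_i)\le\sup_{\phi'}\frac1N\sum_i\epsilon_i h_i(\phi',\omega_i)$, taking $\E_{\omega_{1:N}}$ and then $\sup_\phi$ gives
\[
\textrm{Rad}_N(\ell_{R,S}\circ\nn^{\textrm{score},t}(L,K),\mathbb{S})\ \le\ \E_\epsilon\,\E_{\omega_{1:N}}\ \sup_{\phi\in\nn(L,K)}\ \frac1N\sum_{i=1}^N\epsilon_i\,h_i(\phi,\omega_i).
\]
Conditioning on $\omega_{1:N}$ fixes the noised points $Y_i:=X_t(\omega_i)$ and the flags $c_i:=\mathbb{I}_{\|x_i\|\le S}\,\mathbb{I}_{W_R(\omega_i)}\in\{0,1\}$, and then $h_i(\phi,\omega_i)=c_i\,\psi_i(\phi(Y_i))$ with $\psi_i(v):=\lambda_t^2\|v-x_i\|^2$. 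Terms with $c_i=0$ drop out of the sum, while for $c_i=1$ we have $\|Y_i\|\le R+\|x_i\|\le R+S$, hence $\|\phi(Y_i)\|\le K(1+\|Y_i\|)=O(K(R+S))$ because networks in $\nn(L,K)$ are $K$-Lipschitz with $\|\phi(0)\|\le K$; on the ball $\{\|v\|=O(K(R+S))\}$, after truncating $\|v-x_i\|$ at that scale, $\psi_i$ is globally $O(\lambda_t^2K(R+S))$-Lipschitz, and we may subtract the $\phi$-free constant $\psi_i(0)$ (which leaves the Rademacher average unchanged) so that $\psi_i(0)=0$.

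Next I would apply the vector-valued contraction inequality to the sum over $\{i:c_i=1\}$: it removes each $\psi_i$ at the cost of its Lipschitz constant and introduces a doubly-indexed sign array $(\epsilon_{ik})_{1\le k\le d}$, yielding
\[
\E_\epsilon\ \sup_\phi\ \frac1N\sum_{i:\,c_i=1}\epsilon_i\,\psi_i(\phi(Y_i))\ \lesssim\ \lambda_t^2K(R+S)\ \E_\epsilon\ \sup_{\phi\in\nn(L,K)}\ \frac1N\sum_{i:\,c_i=1}\sum_{k=1}^d\epsilon_{ik}\,\phi_k(Y_i).
\]
The surviving quantity is (a multi-output form of) the empirical Rademacher complexity of the depth-$L$, path-norm-$\le K$ ReLU class $\nn(L,K)$ at points with $\|Y_i\|_\infty\le\|Y_i\|\le R+S$; bounding the supremum coordinate by coordinate and invoking the Rademacher estimate for $L$-fold compositions of bounded-path-norm shallow ReLU networks recalled in Section \ref{NNbackground} bounds it by $O(2^L d\,K(R+S)/\sqrt{N})$, the factor $d$ accounting for the $d$ output coordinates. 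Multiplying by the $O(\lambda_t^2K(R+S))$ contraction factor and substituting $\lambda_t=\frac{e^{-t}}{1-e^{-2t}}$ gives exactly $\textrm{Rad}_N(\ell_{R,S}\circ\nn^{\textrm{score},t}(L,K),\mathbb{S})\lesssim 2^L d\,K^2(R+S)^2\left(\frac{e^{-t}}{1-e^{-2t}}\right)^2/\sqrt{N}$.

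I expect the contraction step to be the main obstacle: the Lipschitz bound of Lemma \ref{boundedliploss} is stated against the data-dependent seminorm $\phi\mapsto\E_{X_t\mid X_0=x}\|\phi(X_t)\|$ rather than against a genuine finite-dimensional norm, so a contraction principle cannot be applied verbatim to the loss class. The $\sup\E\le\E\sup$ reduction together with conditioning on the trajectories is what converts the problem into one where contraction applies, and it simultaneously disposes of the path indicator $\mathbb{I}_{W_R}$; the point requiring care is that after conditioning the networks are evaluated only at points of norm at most $R+S$, so the per-term Lipschitz constants are finite and the cited network complexity bound is in force. The remaining manipulations of the $K$-, $(R+S)$- and $t$-dependent prefactors are routine bookkeeping.
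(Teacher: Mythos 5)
Your proof is correct and follows essentially the same route as the paper's: reduce to samples in $B_S$, invoke a vector contraction to peel off the truncated loss at the cost of the Lipschitz constant from Lemma~\ref{boundedliploss}, then apply the $2^L K/\sqrt{N}$ Rademacher bound for path-norm-bounded ReLU classes at the truncated points $\|Y_i\| \leq R+S$, picking up a factor $d$ from the output coordinates. The one presentational difference is that you carry out the $\sup\E\le\E\sup$ reduction and the conditioning on the forward trajectories by hand before applying the standard Maurer contraction, whereas the paper packages exactly those two steps into its custom Lemma~\ref{vectorcontract} (which already carries the $\E_{X_t^i\mid X_0^i=x_i}$ on the right-hand side and the doubly-indexed sign array); the two derivations are mathematically identical, and your explicit treatment of the conditioning and of the pointwise Lipschitz clipping of $\psi_i$ on $\{\|v\|\lesssim K(R+S)\}$ is a clean way to justify that the contraction principle really is applicable despite the data-dependent seminorm in Lemma~\ref{boundedliploss}.
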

\begin{proof}
    We can assume that $\mathbb{S} \subseteq B_{S}$, because $\ell_{R,S}(x,\textbf{s}) = 0$ for any $x \notin B_S$ and any $\textbf{s}$. By Lemma \ref{boundedliploss} and Lemma \ref{vectorcontract} (a vector version of the contraction inequality for Rademacher complexity), it holds that
    $$ \textrm{Rad}_N\left( \ell_{R,S} \circ \nn^{\textrm{score},t}(L,K), \mathbb{S} \right) \leq \sqrt{2}d K (R+S) \left( \frac{e^{-t}}{1-e^{-2t}} \right)^2 \cdot \textrm{Rad}_N\left( \nn^{\textrm{score},t}(L,K), \mathbb{S} \right).
    $$
    Then, since $\nn^{\textrm{score},t}(L,K) = \{x \mapsto \frac{1}{1-e^{-2t}}\left( -x + e^{-t} \phi(x)\right): \phi \in \nn(L,K)\}$, it holds by the scaling and translation properties of Rademacher complexity that
    $$ \textrm{Rad}_N\left( \nn^{\textrm{score},t}(L,K), \mathbb{S} \right) \leq \E_{X_t^i |X_0^i = x_i, W_R} \frac{e^{-t}}{1-e^{-2t}} \cdot \textrm{Rad}_N\left( \nn(L,K), \mathbb{S^t} \right),
    $$
    where $\mathbb{S}^t = \{X_t^1, \dots, X_t^N \}$ are now random variables. It is well-known(e.g.,  Lemma 3.13 in \citet{wojtowytsch2020banach}) that
    $$ \textrm{Rad}_N\left( \nn(L,K), \mathbb{S}^t \right) \leq \max_i \|X_i^t\|_{\infty} \cdot 2^{L} K \cdot \sqrt{\frac{2 \log(2d+2)}{N}}.
    $$
    Note that on the event $W_R$, we have $\max_i \|X_t^i\|_{\infty} \leq (R+S)$. Putting everything together gives the desired bound.
\end{proof}

The following lemma bounds the error between $p_0$ and $p_t$, the forward process at time $t$, by bounding the derivative of the the function $t \mapsto KL(p_t \parallel p_0)$. We emphasize that the estimate is only useful for short times.

\begin{lemma}\label{KLderivative}
    Define $M_{\beta}(f) := \int_{\R^d} \left \| \nabla f\left ( \frac{x}{1-2\beta} \right) \right\|^2 \gamma_d(dx).$ For any $t > 0$, we have $D_{KL}(p_t \parallel p_0) \lesssim M_{\beta}(f) t$.
\end{lemma}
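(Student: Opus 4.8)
The plan is to differentiate the relative entropy $t \mapsto D_{KL}(p_t \parallel p_0)$ along the forward Fokker--Planck flow and to estimate the resulting derivative. Recall that $p_t$ solves $\partial_t p_t = \nabla \cdot (x p_t + \nabla p_t)$, the Fokker--Planck equation of the OU forward process, that $\int_{\R^d} \partial_t p_t \, dx = 0$, and that $\nabla \log p_0(x) = -x + \nabla f(x)$ by Assumption \ref{datadistr2}. Differentiating under the integral sign, integrating by parts (the boundary terms vanish thanks to the Gaussian tail bound on $p_t$ from Lemma \ref{subgauss} and the polynomial growth of $\nabla f$ in Assumption \ref{datadistr2}), and using $\nabla p_t = p_t \nabla \log p_t$, one arrives at
\begin{align*}
\frac{d}{dt} D_{KL}(p_t \parallel p_0) = - \int_{\R^d} p_t(x)\,\big(x + \nabla \log p_t(x)\big)\cdot\big(\nabla \log p_t(x) - \nabla \log p_0(x)\big)\,dx .
\end{align*}

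Writing $v := \nabla \log p_t - \nabla \log p_0$ and using the identity $x + \nabla \log p_t = \nabla f + v$ (which follows from $x + \nabla \log p_0 = \nabla f$), the integrand becomes $p_t (\nabla f + v)\cdot v$, so that
\begin{align*}
\frac{d}{dt} D_{KL}(p_t \parallel p_0) = -\int_{\R^d} p_t \|v\|^2\,dx - \int_{\R^d} p_t\,\nabla f\cdot v\,dx \le -\tfrac12\int_{\R^d} p_t\|v\|^2\,dx + \tfrac12\int_{\R^d} p_t\|\nabla f\|^2\,dx \le \tfrac12\int_{\R^d} p_t(x)\|\nabla f(x)\|^2\,dx ,
\end{align*}
where the middle inequality is Young's inequality applied to the cross term $-p_t\,\nabla f\cdot v$. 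Thus the derivative of the relative entropy is controlled by a weighted $L^2$ norm of $\nabla f$.

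It then remains to bound $\int_{\R^d} p_s(x)\|\nabla f(x)\|^2\,dx$ by a constant multiple of $M_\beta(f)$, uniformly in $s > 0$. By Lemma \ref{subgauss}, together with the boundedness of $p_s$ and of $\|\nabla f\|$ on the ball of radius $r_f$ (Assumption \ref{datadistr2}), one has $p_s(x) \lesssim e^{-(1-2\beta)\|x\|^2/2}$ for every $x \in \R^d$ and every $s > 0$; since $1-2\beta \le 1$ one may replace the exponent by $(1-2\beta)^2$, and the substitution $u = (1-2\beta)x$ then identifies the integral with $M_\beta(f)$ up to a dimensional prefactor, giving $\sup_{s>0}\int_{\R^d} p_s\|\nabla f\|^2\,dx \lesssim M_\beta(f)$. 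Integrating the differential inequality above over $s \in [\delta, t]$ and letting $\delta \downarrow 0$, using that $D_{KL}(p_\delta \parallel p_0) \to 0$ (a standard continuity property of the OU flow started from $p_0$, given the uniform sub-Gaussian tail bounds), yields $D_{KL}(p_t \parallel p_0) \le \tfrac12 \int_0^t \int_{\R^d} p_s\|\nabla f\|^2\,dx\,ds \lesssim M_\beta(f)\,t$, which is the claim.

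The obstacles here are technical rather than conceptual: one must justify differentiating $t \mapsto D_{KL}(p_t \parallel p_0)$ under the integral and the vanishing of the boundary terms in the integration by parts — this is exactly where the uniform sub-Gaussian bound of Lemma \ref{subgauss} and the polynomial-growth hypothesis on $\nabla f$ enter — and one must verify that $D_{KL}(p_\delta \parallel p_0) \to 0$ as $\delta \downarrow 0$. The change of variables reducing $\int p_s \|\nabla f\|^2\,dx$ to $M_\beta(f)$ is routine but should be carried out carefully so as to land precisely on the weight $\|\nabla f(x/(1-2\beta))\|^2$ appearing in the definition of $M_\beta(f)$.
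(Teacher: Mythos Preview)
Your proposal is correct and follows essentially the same route as the paper: differentiate $t\mapsto D_{KL}(p_t\parallel p_0)$ using the Fokker--Planck equation, integrate by parts to obtain $-\int p_t\,\nabla\log(p_t/\gamma_d)\cdot\nabla\log(p_t/p_0)\,dx$, split off the Fisher information, and bound the remainder by $\int p_t\|\nabla f\|^2\,dx\lesssim M_\beta(f)$ via the sub-Gaussian estimate on $p_t$. The only cosmetic difference is that the paper handles the cross term with Cauchy--Schwarz followed by the scalar inequality $ax-a^2\le x^2/4$ (yielding a constant $1/4$), while you use Young's inequality directly on the inner product (yielding $1/2$); your treatment of the change of variables and of the limit $\delta\downarrow 0$ is in fact slightly more explicit than the paper's.
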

\begin{proof}
    It suffices to prove that the time derivative of the KL divergence satisfies \begin{equation}\label{KLderivative2}
        \partial_t D_{KL}(p_t \parallel p_0) \lesssim M_{\beta}(f).
    \end{equation}
    To prove inequality \ref{KLderivative2}, we differentiate the relative entropy:
    \begin{align*}
        \partial_t D_{KL}(p_t \parallel p_0) &= \partial_t \int p_t(x) \log\left(\frac{p_t}{p_0}(x)\right) dx \\
        &= \int \partial_t \left( p_t \right)(x) \log\left(\frac{p_t}{p_0}(x)\right) dx + \int p_t \partial_t \left( \log\left(\frac{p_t}{p_0}(x) \right) \right) dx.
    \end{align*}
    But the second term is equal to zero, because
    \begin{align*}
         \int p_t \partial_t \left( \log\left(\frac{p_t}{p_0} \right) \right) dx &= \int p_t \cdot \frac{\partial_t p_t}{p_t} dx \\
         &= \int \partial_t p_t dx \\
         &= \partial_t \left( \int p_t dx \right) = 0,
    \end{align*}
    where the last line follows because $p_t$ is a probability density function for all $t$, and hence $\int p_t dx = 1$ for all $t$. Now, recall that $p_t$ satisfies the \textit{Fokker-Planck} equation $\partial_t p_t(x) = \nabla \cdot (x p_t(x)) + \Delta p_t(x)$ for $t > 0$. This means that, with $\gamma_d(x)$ as the standard Gaussian density,
    \begin{align*}
        \partial_t D_{KL}(p_t \parallel p_0) &=  \int \left( \nabla \cdot (p_t(x) x) + \Delta p_t(x) \right) \log\left(\frac{p_t}{p_0}(x)\right) dx \\
        &= \int \nabla \cdot \left(\nabla \log \left( \frac{p_t}{\gamma_d}(x) \right) p_t(x) \right) \log \left( \frac{p_t}{p_0}(x) \right) dx.
    \end{align*}
    Integrating by parts, we have
    \begin{align*}
         \partial_t D_{KL}(p_t \parallel p_0) &= \int \nabla \cdot \left(\nabla \log \left( \frac{p_t}{\gamma_d}(x) \right) p_t(x) \right) \log \left( \frac{p_t}{p_0}(x) \right) dx \\ &= - \int \nabla \log \left( \frac{p_t}{\gamma_d}(x) \right) \cdot \nabla \log \left( \frac{p_t}{p_0}(x) \right) p_t(x) dx,
    \end{align*}
    since the Gaussian tail decay of $p_t$ ensures that the boundary term vanishes. Now, note from the Cauchy-Schwarz inequality that 
    \begin{align*}
        &- \int \nabla \log \left( \frac{p_t}{\gamma_d}(x) \right) \cdot \nabla \log \left( \frac{p_t}{p_0}(x) \right) p_t(x) dx \\
        &= - \int \left( \left( \nabla \log \left( \frac{p_t}{p_0}(x) \right) + \nabla \log \left( \frac{p_0}{\gamma_d}(x) \right) \right) \cdot \nabla \log \left( \frac{p_t}{p_0}(x) \right) \right) p_t(x) dx \\
        &\leq -I(p_t \parallel p_0) + \sqrt{I(p_t \parallel p_0)} \cdot \left( \int \left \|\nabla \log \left( \frac{p_0}{\gamma_d}(x) \right) \right\|^2 p_t(x) dx \right)^{1/2},
    \end{align*}
    where 
    \begin{align*}
        I(p_t \parallel p_0) := \int \left\| \nabla \log \left( \frac{p_t}{p_0}(x) \right) \right\|^2 p_t(x) dx
    \end{align*}
    is the relative Fisher information of $p_t$ with respect to $p_0$. Using the inequality $ax - a^2 \leq \frac{1}{4}x^2$ for $a > 0$, it follows that
    \begin{align*}
        \partial_t D_{KL}(p_t \parallel p_0) &\leq \frac{1}{4}\int \left \|\nabla \log \left( \frac{p_0}{\gamma_d}(x) \right) \right\|^2 p_t(x) dx.
    \end{align*}
    But recall that $p_0(x) = \frac{1}{Z} e^{-\|x\|^2/2 +f(x)}$. Therefore 
    \begin{align*}
        \int \left \|\nabla \log \left( \frac{p_0}{\gamma_d}(x) \right) \right\|^2 p_t(x) dx &= \int \left\| \nabla f(x) \right\|^2 p_t(x) dx \\
        &\lesssim \int \left\|\nabla f\left(\frac{x}{1-2\beta} \right) \right\| \gamma_d(dx) \\
        &:= M_{\beta}(f).
    \end{align*}
    This proves the inequality \ref{KLderivative} and hence the original claim as well.
\end{proof}

We are now in position to prove the main generalization bound. The following result is the same content as Proposition \ref{informalgeneralization}, but stated more precisely.
\begin{prop}\label{generalizationmain}
    Let $\mathcal{R}^t$ denote the population risk functional at time $t$, let $\widehat{\mathcal{R}}^t$ denote the associated empirical risk functional, and let $\mathcal{R}_{R}^t = \mathcal{R}_{2R,R}^t$ and $\widehat{\mathcal{R}}_R^t = \widehat{\mathcal{R}}_{2R,R}^t$ denote the truncated risk functionals as defined in Section \ref{gen}. Let $\widehat{\textbf{s}}$ denote the minimizer of $\widehat{\mathcal{R}}^t_R$ over the neural network class $\mathcal{NN}^{score,t}(L,K)$. Then, with $L = L_f + 5$ and $K = O\left((1+2\alpha)^{3d} \epsilon^{-3 c(\alpha,\beta)} \eta(\sqrt{d + \log(\epsilon^{-1})},\epsilon) \right)$, we have
    $$ \mathcal{R}^t(\widehat{\textbf{s}}) = O((1+2\alpha)^{2d} \epsilon^{2(1-c(\alpha,\beta))}),
    $$
    with probability $1-\textrm{poly}(1/N)$, provided the number of training samples is 
    $$ N = \Omega \left(2^{2 L_f+10} d^2 t_0^{-6} \epsilon^{-4-8c(\alpha,\beta)} \eta^{4}\left( R_0, \epsilon \right)  \right).
    $$
\end{prop}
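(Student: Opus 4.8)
The proof follows the four-term decomposition outlined in the generalization overview. Fix the hypothesis class $\mathcal{NN}^{\mathrm{score},t}(L,K)$ with $L=L_f+5$ and $K$ as in the statement, let $R$ be a truncation radius to be chosen, and abbreviate $\mathcal{R}^t_R:=\mathcal{R}^t_{2R,R}$, $\widehat{\mathcal{R}}^t_R:=\widehat{\mathcal{R}}^t_{2R,R}$. Let $\mathbf{s}^\ast$ minimize $\mathcal{R}^t$ over the class, $\mathbf{s}^\ast_R$ minimize $\mathcal{R}^t_R$, and $\widehat{\mathbf{s}}$ minimize $\widehat{\mathcal{R}}^t_R$. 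The starting point is
\begin{align*}
\mathcal{R}^t(\widehat{\mathbf{s}}) = \bigl(\mathcal{R}^t(\widehat{\mathbf{s}})-\mathcal{R}^t_R(\widehat{\mathbf{s}})\bigr)+\bigl(\mathcal{R}^t_R(\widehat{\mathbf{s}})-\mathcal{R}^t_R(\mathbf{s}^\ast_R)\bigr)+\bigl(\mathcal{R}^t_R(\mathbf{s}^\ast_R)-\mathcal{R}^t(\mathbf{s}^\ast)\bigr)+\mathcal{R}^t(\mathbf{s}^\ast).
\end{align*}
The third term is $\le 0$ since $\mathcal{R}^t_R\le\mathcal{R}^t$ pointwise and $\mathbf{s}^\ast_R$ minimizes $\mathcal{R}^t_R$; inserting $\widehat{\mathcal{R}}^t_R(\widehat{\mathbf{s}})-\widehat{\mathcal{R}}^t_R(\mathbf{s}^\ast_R)\le 0$ bounds the second term by $2\sup_{\mathbf{s}}|\mathcal{R}^t_R(\mathbf{s})-\widehat{\mathcal{R}}^t_R(\mathbf{s})|$. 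So it remains to control the approximation error $\mathcal{R}^t(\mathbf{s}^\ast)$, the truncation error $\mathcal{R}^t(\widehat{\mathbf{s}})-\mathcal{R}^t_R(\widehat{\mathbf{s}})$, and the uniform deviation of the truncated risk.

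For the approximation error (understood, as throughout this section, up to the $\mathbf{s}$-independent constant in the score-matching identity), I would take $\mathbf{s}^\ast$ to be dominated in risk by the componentwise score network $\mathbf{s}_t$ of Proposition~\ref{approximationbound}, built with accuracy $\epsilon$ and cutoff $R_0=\Theta(\sqrt{d+\log\epsilon^{-1}})$, whose depth and path norm are precisely the $L=L_f+5$ and $K=O((1+2\alpha)^{3d}\epsilon^{-3c(\alpha,\beta)}\eta(R_0,\epsilon))$ of the statement; hence $\mathbf{s}_t\in\mathcal{NN}^{\mathrm{score},t}(L,K)$ and $\mathcal{R}^t(\mathbf{s}^\ast)\le\mathcal{R}^t(\mathbf{s}_t)=O((1+2\alpha)^{2d}\epsilon^{2(1-c(\alpha,\beta))})$ at fixed $t$. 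For the truncation error I would invoke Proposition~\ref{risktruncationerror}: taking $R$ a suitable polynomial multiple of $\sqrt{d}+\sqrt{\log(K/\epsilon)}$ makes both $(T/t_0)^{1/2}e^{-(R-S)^2/(4Cd)}$ and $e^{-(1-2\beta)R^2/2}$ smaller than $(1+2\alpha)^{2d}\epsilon^{2(1-c(\alpha,\beta))}/K^2$, so that term is of the target order.

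For the uniform deviation, symmetrization bounds its expectation by $2\,\mathrm{Rad}_N(\ell_{R,S}\circ\mathcal{NN}^{\mathrm{score},t}(L,K))$, which by Lemma~\ref{radcomplexbound} is $\lesssim 2^L d K^2 (R+S)^2 (e^{-t}(1-e^{-2t})^{-1})^2 N^{-1/2}$; since $\mathcal{NN}^{\mathrm{score},t}(L,K)$-networks are $K$-Lipschitz with $\|\phi(0)\|\le K$, Lemma~\ref{boundedliploss} gives $\ell_{R,S}=O(K^2(R+S)^2(e^{-t}(1-e^{-2t})^{-1})^2)$, so swapping one of the $N$ i.i.d.\ samples perturbs $\widehat{\mathcal{R}}^t_R$ by $O$ of that quantity divided by $N$, and McDiarmid's inequality yields concentration of the uniform deviation about its mean at scale $\sim K^2(R+S)^2(e^{-t}(1-e^{-2t})^{-1})^2 N^{-1/2}$. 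Substituting $L=L_f+5$, the above $K$ and the chosen $R$, and requiring $N=\Omega(2^{2L_f+10}d^2 t_0^{-6}\epsilon^{-4-8c(\alpha,\beta)}\eta^4(R_0,\epsilon))$, drives both the Rademacher term and the concentration term below $(1+2\alpha)^{2d}\epsilon^{2(1-c(\alpha,\beta))}$ with probability $1-\mathrm{poly}(1/N)$; summing the four terms closes the argument.

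The main obstacle is that the unclipped score-matching loss $\ell^t(\mathbf{s},x)=\E_{X_t|X_0=x}\|\mathbf{s}(X_t)-\Psi_t(X_t|X_0)\|^2$ is neither uniformly bounded nor uniformly Lipschitz in $\mathbf{s}$, because both $X_t$ and the denoising target $\Psi_t$ are unbounded, so the standard Rademacher-complexity and bounded-difference machinery does not apply directly. The remedy — replacing $\ell^t$ by the doubly truncated $\ell^t_{R,S}$, which restricts to $\|X_0\|\le S$ and to the event $W_R$ that the whole forward trajectory stays in $B_{R+S}$ — is only legitimate because the incurred error is exponentially small in $R$, which rests on the OU large-deviation bound (Theorem~A.1 of \citet{oko2023diffusion}) together with the uniform-in-time sub-Gaussian tail of $p_t$ from Lemma~\ref{subgauss}. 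The delicate quantitative point is the coupled bookkeeping among three growing quantities: the path-norm budget $K\sim(1+2\alpha)^{3d}\epsilon^{-3c(\alpha,\beta)}\eta(R_0,\epsilon)$ needed to realize the approximation, the truncation radius $R$ that must be taken large enough (logarithmically in $K$ and $\epsilon^{-1}$, polynomially in $d$) to make the truncation error negligible, and the sample size $N$ that must then overcome the resulting $K^2(R+S)^2$ factor in the Rademacher bound — balancing all three against the target rate $(1+2\alpha)^{2d}\epsilon^{2(1-c(\alpha,\beta))}$ is exactly what pins down the stated exponent in $N$.
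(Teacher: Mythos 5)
Your proof proposal follows the paper's argument essentially step for step: the same four-term decomposition into truncation, generalization, a nonpositive comparison term, and approximation error; the same reliance on Proposition~\ref{risktruncationerror}, Lemmas~\ref{radcomplexbound} and~\ref{boundedliploss}, and Proposition~\ref{approximationbound}; and the same balancing of $R$, $K$, and $N$. The only cosmetic difference is that you unpack the standard Rademacher-plus-McDiarmid generalization argument for the truncated risk, whereas the paper cites it as Theorem~26.4 of \citet{shalev2014understanding} — the underlying content is identical, and your explicit remark that the approximation bound transfers to the risk only modulo the $\mathbf{s}$-independent constant in the score-matching identity is a correct and worthwhile clarification that the paper leaves implicit.
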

We note that, evidenced by Lemma \ref{risktruncationerror}, $\widehat{\mathcal{R}}^t = \widehat{\mathcal{R}}^t_R$ with high probability.
\begin{proof}
    Let $\textbf{s}^{\ast} = \textrm{argmin}_{\textbf{s} \in \mathcal{NN}^{score,t}(L,K)} \mathcal{R}^t(\textbf{s})$ and $\textbf{s}_R^{\ast} = \textrm{argmin}_{\textbf{s} \in \mathcal{NN}^{score,t}(L,K)} \mathcal{R}_R^t(\textbf{s}).$ Then we have
    \begin{align*}
        \mathcal{R}^t(\widehat{\textbf{s}}) = \left( \mathcal{R}^t(\widehat{\textbf{s}})  - \mathcal{R}_R^t(\widehat{\textbf{s}}) \right) + \left( \mathcal{R}^t_R(\widehat{\textbf{s}}) - \mathcal{R}^t_R(\textbf{s}_R^{\ast} ) \right) + \left( \mathcal{R}^t_R(\textbf{s}_R^{\ast}) - \mathcal{R}^t(\textbf{s}^{\ast}) \right) + \mathcal{R}^t(\textbf{s}^{\ast}) = I + II + III + IV.
    \end{align*}
    Term $I$ is the truncation error; by Lemma \ref{risktruncationerror}, we have
    $$ I = O \left( \left( \frac{e^{-t}}{1-e^{-2t}} \right)^2 K^2 (T/t_0)^{1/2} e^{-\frac{R^2}{4Cd}} + e^{\frac{-(1-2\beta)R^2}{2}} \right).
    $$
    Term $II$ is the generalization error for the truncated risk; by Lemmas \ref{radcomplexbound} and \ref{boundedliploss} (controlling the Rademacher complexity of the relevant function class and uniformly bounding the loss $\ell_R$) and Theorem 26.4 in \citet{shalev2014understanding}, we have
    $$ II = O \left( \left( \frac{e^{-t}}{1-e^{-2t}} \right)^3 \cdot 2^L d K^2 R^2 \cdot \sqrt{\frac{1}{N}} + \left(\frac{e^{-t}}{1-e^{-2t}} \right)^2 \cdot K^2 R^2 \cdot \sqrt{\frac{2\log(2/\delta)}{N}}  \right),
    $$
    with probability at least $1-\delta$. Term $III$ is nonpositive, because $\mathcal{R}_R^t(\cdot) \leq \mathcal{R}^t(\cdot)$ for all $R, t > 0$, and hence $\inf \mathcal{R}_R^t(\cdot) \leq \inf \mathcal{R}^t(\cdot)$. Term $IV$ is the approximation error; by Theorem \ref{approximationbound}, it is $O \left( (1+2\alpha)^{2d} \epsilon^{2(1-c(\alpha,\beta))} \right)$, where we recall $c(\alpha,\beta) = \frac{4(\alpha+\beta)}{1-2\beta}$, provided that $K = O\left((1+2\alpha)^{3d} \epsilon^{-3 c(\alpha,\beta)} \eta(R_0,\epsilon) \right)$ for $R_0 = \Theta\left(\sqrt{d + \log(\epsilon^{-1}}\right)$ and $L = L_f + 5$ (recall that $\eta(R,\epsilon)$ is the path norm of the network needed to approximate $f$ to accuracy $R\epsilon$ uniformly over $B_R$).

    To balance terms $I$, $II$ and $IV$, let us first choose $R$ large enough that term $I$ is the same order as the approximation error term $IV$. Due to the exponential decay in $R$ of term $I$, this holds for $R$ only logarithmic in all relevant parameters. Let us also take $\delta$ to be polynomial in $1/N$. It then suffices to balance $N$ and $\epsilon$ so that terms $II$ and $IV$ are of the same order, and up to logarithmic factors this amounts to solving
    $$ \left( \frac{e^{-t}}{1-e^{-2t}} \right)^3 \cdot 2^{L_f+5} d K^2 \cdot \sqrt{\frac{1}{N}} = \Theta \left(\max \left( \frac{1}{(1-e^{-2t})^2}(1+2\alpha)^{2d} \epsilon^{2(1-c(\alpha,\beta))}, \frac{1}{(1-e^{-2t})^3} \epsilon^{1/2}  \right)\right).
    $$ 
    Since $\left( \frac{e^{-t}}{1-e^{-2t}} \right) = O(t^{-1})$ and $K = O\left((1+2\alpha)^{3d} \epsilon^{-3 c(\alpha,\beta)} \eta(R_0,\epsilon) \right),$ it therefore holds that if we have \begin{align*}N &= \Omega \Bigg( \max \Bigg(2^{2 L_f+10} d^2 t_0^{-6} (1+2\alpha)^{12d} \epsilon^{-4} \eta^{4}\left( R_{\epsilon}, (1+2\alpha)^{2d}\epsilon^{2(1-2c(\alpha,\beta))} \right) , \\ &2^{2L_f+10} (1+2\alpha)^{12d} t^{-6-72c(\alpha,\beta)} \epsilon^{-4-48c(\alpha,\beta)} \eta^4(\tilde{R}_{\epsilon},t^6 \epsilon^4) \Bigg) \end{align*} samples (where $R_{\epsilon} = \sqrt{d+\frac{1}{1-c}\log \left( \frac{(1+2\alpha)^d}{\epsilon t^2}\right)}$ and $\tilde{R}_{\epsilon} = \sqrt{d-\log(t^6 \epsilon^4)}$), our score estimation error is $O(\epsilon^2).$
\end{proof}

We now employ existing sampling guarantees to prove that the distribution returned by the SGM is close to the true data distribution.
\begin{proof}[Proof of Proposition \ref{distributionestimationmain}]
By Proposition \ref{generalizationmain} (which controls the score estimation error) and Theorem 1 in \citet{benton2023linear} (which controls the sampling error of SGMs in terms of the score estimation error) we have that
$$ KL(p_{t_0} \parallel \widehat{p}) \lesssim  (1+2\alpha)^{2d} \epsilon^{2(1-c(\alpha,\beta))} + \kappa^2 d M + \kappa d T + de^{-2T},
$$
where $[t_0,T]$ is the time interval of the forward process, $\kappa$ is the maximum step size for the exponential integrator, and $M$ is the number of iterations of the exponential integrator. Choosing $M,$ $\kappa,$ and $T$ to scale with $\epsilon$ as described in the statement of Prop \ref{distributionestimationmain} yields that each term is of order at most $(1+2\alpha)^{2d} \epsilon^{2(1-c(\alpha,\beta))}$, and hence $TV(p_{t_0},\widehat{p}) = O((1+2\alpha)^{d} \epsilon^{1-c(\alpha,\beta)})$. It now remains to bound $TV(p_{t_0},p_0)$, and Lemma \ref{KLderivative} shows that
$$ TV(p_{t_0},p_0) \lesssim M_{\beta}(f) t_0^{1/2}.
$$
It follows that if we scale $t_0$ so that the above expression is $O((1+2\alpha)^{2d} \epsilon^{2(1-c(\alpha,\beta))})$, then we get $TV(\widehat{p},p_0) \leq TV(\widehat{p},p_{t_0}) + TV(p_{t_0}, p_0) \lesssim (1+2\alpha)^d \epsilon^{1-c(\alpha,\beta)}.$ The number of samples $N$ required to achieve this error was derived in the proof of Proposition \ref{generalizationmain}.
    
\end{proof}

\section{Distribution estimation and proofs for concrete examples}\label{appendixdistributionestimation}
We give the proofs of the distribution estimation bounds for the general case and for the concrete examples discussed. These proofs are quite short and follow easily from the score estimation results.

\begin{proof}[Proof of Theorem \ref{distributionestimationmain}]
By Proposition \ref{informalgeneralization}, we know that with $N \geq N_{\epsilon,t}$ samples, our score estimator $s_{t}$ is $O(\epsilon)$-close to the true score at time $t$ in $L^2(p_t)$. By Theorem 1 in \cite{benton2023linear}, the exponential integrator scheme with parameters as defined in Theorem \ref{distributionestimationmain} produces a distribution $\widehat{p}$ which satisfies $TV(\widehat{p},p_{t_0}) = O(\epsilon).$ All that remains then is to bound $TV(p_0,p_{t_0})$. By Lemma \ref{KLderivative}, we have under Assumption \ref{datadistr2} that $TV(p_{t_0},p_0) \lesssim M_{\beta} t_0$. It follows that choosing $t_0$ as defined in Theorem \ref{distributionestimationmain} balances $TV(\widehat{p},p_{t_0})$ and $TV(p_{t_0},p_0)$, so that $TV(\widehat{p},p_0) = O(\epsilon).$
\end{proof}

\begin{proof}[Proof of Theorem \ref{barronestimation}]
We note that a Barron function $f$ grows linearly with a constant $c_f \leq \|f\|_{\mathcal{B}}$, and therefore, for any $R > 0$, we have for all $\|x\| \geq R$ that $|f(x)| \leq c_f \|x\| = c_f \frac{\|x\|}{\|x\|^2} \leq \frac{c_f}{R} \|x\|^2$. This shows that $f$ satisfies that quadratic growth/decay condition of Assumption \ref{datadistr2} with constants $\alpha = \beta = \frac{c_f}{R}$ We choose $\tilde{R}_{\epsilon}= \Omega(\sqrt{d + \log(t^6 \epsilon^4)})$ to be the optimal cutoff radius for the approximation error argument. In addition, we know \citep{ma2020towards, klusowski2018approximation} that for Barron $f$, there exists a shallow ReLU neural network $\phi$ such that 
$$ \sup_{\|x\| \leq R} |f(x) - \phi(x)| \leq R \epsilon
$$
and $\|\phi\|_{\textrm{path}} \lesssim \|f\|_{\mathcal{B}}$. The result essentially follows from the estimates in Theorem \ref{distributionestimationmain} by replacing $\alpha$ and $\beta$ with $\frac{c_f}{\tilde{R}_{\epsilon}}$ and replacing $\eta(R,\epsilon)$ with $\|f\|_{\mathcal{B}}$; note that in this case $c(\alpha,\beta) \leq \frac{8c_f}{\tilde{R}_{\epsilon}} \leq \delta$ by assumption.
\end{proof}

For the Gaussian mixture example, we first need to show that the log-likelihood has the local approximation property, which is the content of the following Lemma. The approximating network has two hidden layers; the inner layer approximates the density and the outer layer approximates $\log(x)$ on the image of the density.

\begin{lemma}\label{approxmixture}
    Suppose that $p_0(x) = \frac{1}{2} \left(\frac{1}{Z_1} e^{-\frac{\|x-x_1\|^2}{2\sigma_{\min}^2}} + \frac{1}{Z_2} e^{-\frac{\|x-x_2\|^2}{2\sigma_{\max}^2}} \right).$ Then for any $R > 0$, there exists a ReLU network $f_{NN}$ with two hidden layers such that
    $$ \sup_{\|x\| \leq R} |f(x) - f_{NN}(x)| \leq \epsilon.
    $$
    Moreover, $f_{NN}$ satisfies $\|f\|_{\textrm{path}} = O \left(d \cdot \sigma_{\min}^{-1} \cdot \max_{\|x\| \leq R} p_0^{-1}(x) \right)$.
\end{lemma}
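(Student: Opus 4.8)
The plan is to realize $f|_{B_R} = \tfrac{1}{2}\|x\|^2 + \log p_0(x)$ as a composition $h_2\circ h_1$ of two maps that are each Barron on a compact domain, matching the informal picture ``the inner layer approximates the density, the outer layer approximates $\log$''. On $B_R$ the density is pinched between $p_{\min} := \min_{\|x\|\le R}p_0(x)>0$ and $p_{\max} := \max_{\|x\|\le R}p_0(x)$; I would take $h_1:\R^d\to\R^2$, $h_1(x) = (p_0(x),\|x\|^2)$, and $h_2:\R^2\to\R$, $h_2(y,a) = \tfrac{1}{2} a + \log y$, so that $h_2\circ h_1 = f$ and the range of $h_1$ over $B_R$ lies in the box $[p_{\min},p_{\max}]\times[0,R^2]$. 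Each coordinate of $h_1$, and $h_2$ on a slight enlargement of that box, is $C^\infty$ on a compact convex set, hence belongs to the Barron space with finite Barron norm (classical; see \citet{barron1993universal,ma2022barron}); by the direct approximation theorem for Barron functions (\citet{klusowski2018approximation}) each is approximated in the sup norm, to arbitrary accuracy, by a \emph{shallow} ReLU network whose path norm is bounded — \emph{independently of the target accuracy} — by a constant times the corresponding Barron norm. Picking shallow networks $\phi_1\approx h_1$ and $\phi_2\approx h_2$ and setting $f_{NN} := \phi_2\circ\phi_1$ produces a ReLU network with two hidden layers.

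\textbf{Error bound.} The error estimate is a triangle inequality whose one delicate point is that the inner approximation must be controlled \emph{uniformly}, not merely in $L^2$: if $\phi_1$ approximates $h_1$ on $B_R$ to uniform accuracy $\delta$, then for $\delta$ small enough (a fixed function of $\epsilon$, $p_{\min}$, and the ambient constants) the output of $\phi_1$ stays inside the enlarged box — in particular its density coordinate stays $\ge\tfrac{1}{2}p_{\min}>0$, so composing with $\log$ is legitimate and the sup-norm bound for $\phi_2$ is in force there — and moreover $\mathrm{Lip}(\log|_{[\frac{1}{2}p_{\min},2p_{\max}]})\cdot\delta = \tfrac{2}{p_{\min}}\,\delta\le\tfrac{\epsilon}{2}$. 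Combined with an $\tfrac{\epsilon}{2}$ bound for $\phi_2$ itself, this yields $\sup_{\|x\|\le R}|f(x)-f_{NN}(x)|\le\epsilon$.

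\textbf{Path norm.} For the path norm I would propagate through the two layers using the same composition bookkeeping as in the proof of Lemma \ref{approxratio}, where the path norm of a composite is bounded by the path norm of the outer network times that of the inner network, up to multiplicative factors from the diameters of the intermediate ranges. The Barron norm of $h_2$ on its box is governed by $\mathrm{Lip}(\log)$ there, that is by $p_{\min}^{-1} = \max_{\|x\|\le R}p_0^{-1}(x)$; the Barron norm of $p_0$ on $B_R$ is controlled by $\sigma_{\min}^{-1}$, since the sharpest Gaussian in the mixture dominates $\nabla p_0$; and the Barron norm of $x\mapsto\|x\|^2=\sum_{j=1}^{d}x_j^2$ on $B_R$ contributes the dimension factor $d$. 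Collecting these, and absorbing the remaining polynomial-in-$R$ factors (which for the cutoff radii used downstream are dominated by $\max_{\|x\|\le R}p_0^{-1}$ and by $d$), gives $\|f_{NN}\|_{\textrm{path}} = O\big(d\,\sigma_{\min}^{-1}\max_{\|x\|\le R}p_0^{-1}(x)\big)$.

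\textbf{Main obstacle.} The crux is the unboundedness of $\log'$ near $0$: $\log$ cannot be approximated by a network of bounded path norm on all of $(0,\infty)$, so the whole argument must be localized to the image of $p_0$ over $B_R$. This is exactly what forces the uniform (rather than $L^2$) control of the inner approximation, and what makes $\max_{\|x\|\le R}p_0^{-1}$ unavoidable — this quantity is simultaneously the Lipschitz constant of $\log$ on the relevant interval and the prefactor in the claimed bound. The remaining, more mechanical, difficulty is extracting Barron-norm bounds for $p_0$ and for $\log$ on their compact domains clean enough that the final path norm carries precisely the advertised dependence on $d$, $\sigma_{\min}$, and $\max_{\|x\|\le R}p_0^{-1}$.
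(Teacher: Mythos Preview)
Your proposal is correct and follows essentially the same route as the paper: approximate $p_0$ by a shallow ReLU network via the spectral Barron theory, approximate $\log$ on the interval $[\min_{B_R}p_0,\max_{B_R}p_0]$ by another shallow network, and compose. The one structural difference is that you package $\|x\|^2$ as a second coordinate of the inner map $h_1$ and pass it through the composition, whereas the paper approximates $\|x\|^2/2$ by a separate shallow network $f_{\mathrm{norm}}$ and simply \emph{adds} it to $f_{\log}\circ f_{\mathrm{mix}}$. The paper's choice is slightly cleaner for the path-norm bookkeeping: since $\|f_{\mathrm{norm}}\|_{\textrm{path}}=O(dR^2)$ enters additively rather than multiplicatively, it is dominated by $d\,\sigma_{\min}^{-1}\max_{B_R}p_0^{-1}$ without needing your ``absorb the polynomial-in-$R$ factors'' step. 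Either way the argument goes through.
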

\begin{proof}
    By the theory of spectral Barron functions (e.g., \citet{barron1993universal} and \citet{klusowski2018approximation}) there exists a shallow ReLU network $f_{mix}$ such that $\sup_{\|x\| \leq R} |f_{mix}(x) - p_0(x)| \leq \epsilon$ and $\|f_{mix}\|_{\textrm{path}} = O(d \sigma_{\min}^{-1}).$ We also know from \citet{wojtowytsch2020representation} that $x \mapsto \log(x)$ can be locally approximated to error $\epsilon$ on $[a,b]$ by a network $f_{\log}$ with $\|f_{\log}\|_{\textrm{path}} = O(a^{-1})$. We set $a = \min_{\|x\| \leq R} p_0(x)$ and $b = \max_{\|x\| \leq R} p_0(x)$. It is then clear that the network $\tilde{f}_{NN} = f_{\log} \circ f_{mix}$ satisfies $\sup_{\|x\| \leq R} |\log(p_0(x)) - \tilde{f}_{NN}(x)| \leq \epsilon$ and $\|\tilde{f}_{NN}\|_{\textrm{path}} = O\left( d \cdot \sigma_{\min}^{-1} \cdot )\max_{\|x\| \leq R} p_0^{-1}(x) \right).$ To conclude, we note that by Lemma \ref{approxbuildingblocks}, the map $x \mapsto \|x\|^2/2$ can be approximated on $\{\|x\| \leq R\}$ by a network $f_{norm}$ to accuracy $\epsilon$, and $f_{norm}$ can be taken to satisfy $\|f_{norm}\|_{\textrm{path}} = O(dR^2)$. In turn, the network $f_{NN} = \tilde{f_{NN}} + f_{norm}$ satisfies
    $$ \sup_{\|x\| \leq R} |f(x) - f_{NN}(x)| \leq \epsilon
    $$
    and $\|f_{NN}\|_{\textrm{path}} =  O\left( d \cdot \sigma_{\min}^{-1} \cdot )\max_{\|x\| \leq R} p_0^{-1}(x) \right).$
\end{proof}

We are now equipped to give a simple proof of the distribution estimation result for Gaussian mixtures.

\begin{proof}[Proof Of Proposition \ref{gaussmixtureestimation}]
Suppose $$p_0(x) = \frac{1}{2} \left(\frac{1}{Z_1} e^{-\frac{\|x-x_1\|^2}{2\sigma_{\min}^2}} + \frac{1}{Z_2} e^{-\frac{\|x-x_2\|^2}{2\sigma_{\max}^2}} \right)$$ is a mixture of two Gaussians. Fix some $0 < \delta \ll 1$. Then there exists an $r_0(\delta) > 0$ (depending on $x_1, x_2, \sigma_{\min}, \sigma_{\max}$) such that
\begin{equation}\label{quadraticboundgaussianmixture} -\left( \frac{1+\delta}{2\sigma_{\min}^2} \right)\|x\|^2 \leq \log p_0(x) \leq -\left( \frac{1-\delta}{2\sigma_{\max}^2} \right) \|x\|^2, \; \; \; \forall \; \|x\| > r_0(\delta)
\end{equation}
and therefore we can write $p_0(x) = \frac{1}{Z} e^{-\|x\|^2/2 + f(x)}$, where $f(x)$ satisfies
$$ -\alpha \|x\|^2 := - \left( \frac{1+\delta-\sigma_{\min}^2}{2\sigma_{\min}^2} \right) \|x\|^2 \leq f(x) \leq \beta\|x\|^2 := \left( \frac{\sigma_{\max}^2 + \delta - 1}{2\sigma_{\max}^2} \right)\|x\|^2
$$
whenever $\|x| > r_0$. For $\delta$ small enough, the assumption the $c(\alpha,\beta) < 1$ holds, for instance, whenever $\sigma_{\min}^2 > \frac{2}{3}$ and $\sigma_{\max}^2 \leq 1$. If $\epsilon > 0$ is small enough that $r_0(\delta) \leq \min\left(R_{\epsilon},\tilde{R}_{\epsilon} \right),$ then the result then follows from Theorem \ref{distributionestimationmain} by using the above values of $\alpha$ and $\beta$ and using the complexity measure determined in Lemma \ref{approxmixture} in place of $\eta(R,\epsilon).$ In the special case that $\sigma_{\min}^2 = \sigma_{\max}^2 = 1$, we have $\alpha = \beta = \delta/2$ and $c(\alpha,\beta) \leq 4\delta$. In addition, in this case we also have $\sup_{x \leq \tilde{R}_{\epsilon}} p_0^{-4}(x) \lesssim e^{d/2} t^{-3} \epsilon^{-2}.$
\end{proof}

\section{Background on neural networks}
\subsection{Path norms}
Recall that a shallow ReLU neural network is a function $\phi: \R^d \rightarrow \R^k$ whose $j^{\textrm{th}}$ component is given by
$$ (\phi)_j(x) = \sum_{i=1}^{m} a_{ij} (w_i^T x+b_i)^{(+)}, \; \; w_i \in \R^d, \; a_{ij}, b_i \in \R, \; 1 \leq j \leq k,
$$
and a deep ReLU neural network is a composition of shallow ReLU networks. For scalar valued networks, we define the \textit{path norm} by
$$ \|\phi\|_{\textrm{path}} = \inf \sum_{i=1}^{m} |a_i| \left(\|w_i\|_1 + |b_i| \right),
$$
where the infimum is taken over all choices of parameters $(a_i,w_i,b_i)$ such that $\phi(x) = \sum_{i=1}^{m} a_i (w_i^T x + b_i)^{(+)}$. We extend the path norm to vector-valued shallow networks by
$$ \|\phi\|_{\textrm{path}} = \max_{1 \leq j \leq k} \|(\phi)_j\|_{\textrm{path}}, \; \phi: \R^d \rightarrow \R^k
$$
and to deep networks by
$$ \|\phi\|_{\textrm{path}} = \inf_{\phi_1, \dots \phi_L} \|\phi_1\|_{\textrm{path}} \cdot \dots \cdot \|\phi_L\|_{\textrm{path}},
$$
where the infimum is over all representations of $\phi$ as a composition of shallow networks. A more thorough study of path norms can be found in \citet{wojtowytsch2020banach}. 

The path norm captures how large the weights are in an average (i.e., $\ell^1$) sense. Intuitively, a network with a large path norm is not likely to generalize well to unseen data, because its pointwise values depend on large cancellations. In contrast, networks with small path norm provably generalize well, in the sense of Rademacher complexity. The following result due to \citet{wojtowytsch2020banach} makes this precise.
\begin{prop}
    Let $\mathcal{NN}_{L,K}$ denote the set of $L$-layer ReLU networks whose path norm is bounded by $K$. Let $S = \{x_1, \dots\ x_N\}$ denote a set of points in $\R^d$. Then the empirical Rademacher complexity of $\mathcal{NN}_{L,K}$ is bounded by
    $$ \textrm{Rad}(\mathcal{NN}_{L,K}, S) := \E_{\epsilon_i \sim \textrm{Ber}(\{\pm 1\}) } \sup_{f \in \mathcal{NN}_{L,K}}
    \frac{1}{N} \sum_{i=1}^{N} \epsilon_i f(x_i) \leq \max_{i} \|x_i\|_{\infty} \cdot 2^L \sqrt{\frac{2 \log(2d+2)}{N}}.
    $$
\end{prop}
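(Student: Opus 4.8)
The plan is to prove the estimate by induction on the depth $L$, peeling off one layer at a time and using three standard facts about empirical Rademacher complexity: that it is unchanged under passing to the (symmetric) convex hull of a function class; the contraction inequality for the $1$-Lipschitz map $(\cdot)^{+}$, in the vector-valued form of Lemma~\ref{vectorcontract}; and Massart's finite-class lemma. I use throughout that $\mathcal{NN}_{L,K}$ is symmetric (negate the top layer), so the $\sup$ in the definition of $\mathrm{Rad}$ is effectively over $|{\cdot}|$.

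\emph{Base case $L=1$.} For $\phi(x)=\sum_{i=1}^{m}a_i(\langle w_i,x\rangle+b_i)^{+}$ with $\sum_i|a_i|(\|w_i\|_1+|b_i|)\le K$, positive homogeneity of the ReLU, $|a|(c)^{+}=(|a|c)^{+}$, lets one absorb $|a_i|$ into $(w_i,b_i)$, so one may take $a_i\in\{\pm1\}$ and $\sum_i(\|w_i\|_1+|b_i|)\le K$. Hence $\phi/K$ lies in the symmetric convex hull of $\mathcal{H}=\{x\mapsto\pm(\langle w,x\rangle+b)^{+}:\|w\|_1+|b|\le1\}$, so by convex-hull invariance and then contraction through $(\cdot)^{+}$,
$$\mathrm{Rad}(\mathcal{NN}_{1,K},S)\le K\,\mathrm{Rad}\big(\{x\mapsto\langle w,x\rangle+b:\|w\|_1+|b|\le1\},S\big).$$
That linear class is the convex hull of the $2d+2$ maps $\{\pm e_1^{\top}x,\dots,\pm e_d^{\top}x,\pm1\}$, so Massart's lemma gives the bound $\max_i\|x_i\|_\infty\sqrt{2\log(2d+2)/N}$ — the claimed estimate, with a factor $2$ to spare.

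\emph{Inductive step.} Assuming the bound at depth $L-1$, I would write a depth-$L$ network as $\phi(x)=\sum_i a_i(\langle w_i,h(x)\rangle+b_i)^{+}$ where $h=(h_1,\dots,h_{d'})$ collects the outputs of a depth-$(L-1)$ sub-network; by the definition of the path norm of a composition one may choose the decomposition so that $\sum_i|a_i|(\|w_i\|_1+|b_i|)\le K_L$ and each $h_j\in\mathcal{NN}_{L-1,K'}$ with $K_LK'\le K$. As in the base case, homogeneity reduces to $a_i\in\{\pm1\}$; then convex-hull invariance plus contraction through $(\cdot)^{+}$, together with the fact that an affine combination $\sum_jw_jh_j+b$ with $\sum_j|w_j|+|b|\le1$ is a convex combination of the blocks $\{\pm h_1,\dots,\pm h_{d'},\pm1\}$, gives
$$\mathrm{Rad}(\mathcal{NN}_{L,K},S)\le K_L\big(\mathrm{Rad}(\mathcal{NN}_{L-1,K'},S)+\mathrm{Rad}(\{\pm1\},S)\big).$$
Since $\mathrm{Rad}(\{\pm1\},S)\le N^{-1/2}\le\max_i\|x_i\|_\infty\sqrt{2\log(2d+2)/N}$ and, by the inductive hypothesis, $\mathrm{Rad}(\mathcal{NN}_{L-1,K'},S)\le K'2^{L-1}\max_i\|x_i\|_\infty\sqrt{2\log(2d+2)/N}$, both terms in the bracket are at most $K'2^{L-1}\max_i\|x_i\|_\infty\sqrt{2\log(2d+2)/N}$; multiplying by $K_L$ and taking the infimum over shallow decompositions of $\phi$ yields $\mathrm{Rad}(\mathcal{NN}_{L,K},S)\le K\,2^{L}\max_i\|x_i\|_\infty\sqrt{2\log(2d+2)/N}$, i.e.\ the bound as used in Lemma~\ref{radcomplexbound} (the path-norm factor $K$, implicit in the displayed statement, is what the argument actually produces).

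\emph{Main obstacle.} There is no genuine difficulty — the result is essentially Lemma~3.13 of \citet{wojtowytsch2020banach} — and the work is careful bookkeeping of two points: (i) applying the contraction inequality to the ReLU acting on a \emph{vector}-valued inner function, which is why one needs the vector contraction lemma rather than the scalar one; and (ii) tracking the $\pm$ signs and bias terms so that at each stage the class really is contained in the symmetric convex hull of the previous-layer class together with the constant functions, so that convex-hull invariance and subadditivity of $\mathrm{Rad}$ apply cleanly. Granting these, the factor $2^{L}$ is the cost of the single extra $\mathrm{Rad}(\{\pm1\})$ term introduced at each of the $L$ layers, and the factor $K$ is the product of the per-layer path-norm budgets.
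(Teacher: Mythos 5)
The paper does not actually prove this proposition: it is stated as a known fact and attributed to Lemma~3.13 of \citet{wojtowytsch2020banach}, with no proof given. You have reconstructed that proof, and the reconstruction is essentially correct: peel off one layer at a time, use positive homogeneity of the ReLU to push all scalar multiples into the inner weights, invoke invariance of $\textrm{Rad}$ under convex hulls to reduce the affine layer to its extreme points $\{\pm h_1,\dots,\pm h_{d'},\pm 1\}$, contract through $(\cdot)^{+}$, and finish the base case with Massart on the $2d+2$ coordinate/bias extremals. The bookkeeping that each layer contributes a factor $2$ (from the extra $\pm 1$ bias term relative to the inductive bound) and that the per-layer path-norm budgets multiply to $K$ is exactly what yields $K\,2^{L}$. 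You are also right to flag that the displayed inequality in the proposition is missing the factor $K$: the version with $K$ is what is actually invoked in Lemma~\ref{radcomplexbound}, so the omission is a typo in the statement, not in your argument.

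One small correction to your closing remark: you do not need the vector contraction lemma (Proposition~\ref{vectorcontract}) at this stage. After the convex-hull reduction, each unit is a scalar pre-activation $\langle w,h\rangle+b$, so the scalar Talagrand contraction lemma for the $1$-Lipschitz map $(\cdot)^{+}$ is all that is required. The vector-valued contraction is used in the paper only later, in Lemma~\ref{radcomplexbound}, where the loss $\ell^t_{R,S}$ is Lipschitz as a function of the full vector-valued network output. Also, as you implicitly use, the bound $\textrm{Rad}(\{\pm 1\},S)\le\max_i\|x_i\|_\infty\sqrt{2\log(2d+2)/N}$ tacitly assumes $\max_i\|x_i\|_\infty\ge 1$; this normalization is standard and harmless here, but worth noting since it is also hidden in the Massart step of the base case.
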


\subsection{Approximation of helper functions}
We will need to approximate some simple functions by shallow ReLU neural networks; the next result shows that we can do this efficiently (in the sense of path norms). We emphasize that these results are known (e.g., in \citet{wojtowytsch2020representation}), but we provide the full proofs for the sake of completeness.
\begin{lemma}\label{approxbuildingblocks}
    Let $\epsilon > 0$, $R > 0$, and $-\infty < a < b < \infty$.
    \begin{enumerate}
        \item There exists a shallow ReLU neural network $\phi_{exp}: \R \rightarrow \R$ with $O(\epsilon^{-2} e^{2b})$ neurons such that $\sup_{x \in [a,b]} \left| \phi_{exp}(x) - e^x\right| = O(\epsilon).$ In addition, $\phi_{exp}$ satisfies $\|\phi_{exp}\|_{\textrm{path}} = O(e^b).$
        \item There exists a shallow ReLU neural network $\phi_{prod}: \R^2 \rightarrow \R$ with $O(\epsilon^{-2} R^4)$ neurons such that $$\sup_{(x,y) \in [-R,R]^2} |\phi_{prod}(x,y) - xy|  = O(\epsilon).$$ In addition, $\phi_{prod}$ satisfies $\|\phi_{prod}\|_{\textrm{path}} = O(R^2)$.
        \item If $a > 0$, then exists a shallow neural $\phi_{inv}: \R \rightarrow \R$ with $O(\epsilon^{-2} b^2 a^{-4})$ parameters, such that $$\sup_{x \in [a,b]} |\phi_{inv}(x) - (1/x)| = O(\epsilon).$$ In addition, $\phi_{inv}$ satisfies $\|\phi_{inv}\|_{\textrm{path}} = O(b a^{-2})$.
    \end{enumerate}
\end{lemma}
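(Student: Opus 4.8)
The plan is to handle all three items with one device: each function is $C^2$ with a \emph{positive} second derivative on the compact interval in question, so each admits a Taylor expansion with integral remainder that exhibits it as an affine function plus an infinite-width shallow ReLU network; I would then discretize that network using the already-proved Lemma~\ref{empproc}.

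\textbf{Step 1 (integral representation).} For $g \in C^2([a,b])$ and $x \in [a,b]$,
\[
g(x) = g(a) + g'(a)(x-a) + \int_a^b (x-t)^+\, g''(t)\, dt .
\]
The affine part is a trivial shallow ReLU network (a bias plus $g'(a)\bigl[(x-a)^+ - (a-x)^+\bigr]$), and the integral is the shallow network with measure $\mu(dt) = g''(t)\,dt$ on $[a,b]$; since $g'' > 0$ in all three cases, $\mu$ is a positive measure with $\|\mu\|_{TV} = g'(b) - g'(a)$.

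\textbf{Step 2 (discretization).} I would apply Lemma~\ref{empproc} with input dimension $1$ (dimension $2$ for the product), parameter $\theta = t$, kernel $g(x,t) = (x-t)^+$, and $\mu(dt) = g''(t)\,dt$: on $[a,b]$ this kernel is $1$-Lipschitz in $x$ and bounded by $b-a$, so $L_{K,R}=1$, $C_{K,R}=b-a$, $K=\max(|a|,|b|)$. The lemma then produces $t_1,\dots,t_m\in[a,b]$ and signs $a_i$ so that $\frac{\|\mu\|_{TV}}{m}\sum_i a_i(x-t_i)^+$ approximates the integral uniformly on $[a,b]$ with error $O\!\bigl(\|\mu\|_{TV}\,(b-a)\,m^{-1/2}\sqrt{\log m}\bigr)$; adding back the affine part gives the network. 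Specializing:
\begin{itemize}
\item $e^x$ on $[a,b]$: $\|\mu\|_{TV} = e^b - e^a \le e^b$ and $b-a = O(1)$, so $m = O(\epsilon^{-2}e^{2b})$ neurons give error $O(\epsilon)$.
\item $z^2$ on $[-2R,2R]$: $\|\mu\|_{TV} = 8R$, $b-a = 4R$, so $m = O(\epsilon^{-2}R^4)$ gives $\phi_{sq}$ with $\sup_{[-2R,2R]}|\phi_{sq}(z)-z^2| = O(\epsilon)$; I would then set $\phi_{prod}(x,y) := \tfrac14\bigl(\phi_{sq}(x+y)-\phi_{sq}(x-y)\bigr)$ and use the polarization identity $xy = \tfrac14\bigl((x+y)^2-(x-y)^2\bigr)$ with $x\pm y\in[-2R,2R]$ to conclude $\sup_{[-R,R]^2}|\phi_{prod}(x,y)-xy| = O(\epsilon)$ with $O(\epsilon^{-2}R^4)$ neurons.
\item $1/x$ on $[a,b]$ with $a>0$: $g''(x) = 2/x^3 > 0$, $\|\mu\|_{TV} = a^{-2} - b^{-2} \le a^{-2}$, $b-a \le b$, so $m = O(\epsilon^{-2}b^2a^{-4})$ gives error $O(\epsilon)$, and each neuron carries $O(1)$ parameters.
\end{itemize}

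\textbf{Step 3 (path norms).} For the resulting $\phi = \bigl(g(a)-ag'(a)\bigr) + g'(a)x + \frac{\|\mu\|_{TV}}{m}\sum_i a_i(x-t_i)^+$, the triangle inequality for $\|\cdot\|_{\textrm{path}}$ (a unit $(x-t_i)^+$ contributes $1+|t_i|$, a constant contributes its magnitude, a linear term $cx$ contributes $2|c|$) gives
\[
\|\phi\|_{\textrm{path}} \;\le\; |g(a)-ag'(a)| + 2|g'(a)| + \|\mu\|_{TV}\bigl(1+\max_i|t_i|\bigr) \;=\; O\!\bigl(|g(a)| + |a|\,|g'(a)| + \|\mu\|_{TV}\max(1,|a|,|b|)\bigr).
\]
This evaluates to $O(e^b)$ for $\phi_{exp}$ and $O(R^2)$ for $\phi_{sq}$; for $\phi_{prod}$, precomposing $\phi_{sq}$ with $(x,y)\mapsto x\pm y$ at worst doubles the weight-dependent contributions (since $\|(1,\pm1)\|_1 = 2$), so each copy stays $O(R^2)$ and hence $\|\phi_{prod}\|_{\textrm{path}} = O(R^2)$; and for $\phi_{inv}$ it gives $O(a^{-1}) + O(|a|a^{-2}) + O(ba^{-2}) = O(ba^{-2})$.

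\textbf{Main obstacle.} The argument is essentially bookkeeping; the two points needing care are (i) the path-norm accounting for the product map, making sure the affine part and the reparametrization $z=x\pm y$ do not introduce spurious factors of $R$, so that $\|\phi_{prod}\|_{\textrm{path}}$ is genuinely $O(R^2)$; and (ii) the factor $\sqrt{\log(mL_{K,R}R)}$ from Dudley's entropy integral inside Lemma~\ref{empproc}, which strictly forces $m = \tilde{O}(\epsilon^{-2}e^{2b})$ (and analogously in the other two cases). I would absorb this polylogarithmic overhead into the $O(\cdot)$ notation, consistent with the paper's conventions; since the effective input dimension in Lemma~\ref{empproc} is only $1$ or $2$ here, there is no exponential blow-up.
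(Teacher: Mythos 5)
Your proposal is correct and takes essentially the same approach as the paper: represent each function via the Taylor formula with integral remainder as an affine part plus an infinite-width ReLU network with a positive weight measure, discretize that integral using Lemma~\ref{empproc}, use polarization for the product, and read off the path norm from the resulting weights. The only (cosmetic) improvement over the paper is in the square map, where you take the Taylor base point to be the left endpoint $-2R$ so the kernel representation $\int(x-t)^+g''(t)\,dt$ is valid on all of $[-2R,2R]$, whereas the paper centers at $0$, which strictly speaking only covers $x\ge 0$.
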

\begin{proof}
    For 1), note that for any $x \in [a,b]$, we have
    \begin{align*}e^x + e^a(x-a+1) &= \int_{a}^{x}(x-t) e^t dt \\
    &= \int_{a}^{b} (x-t)^+ \mu_{exp} dt, \\
    \end{align*}
where $\mu_{exp}dt = e^t dt$ (note that $\|\mu_{exp}\|_{TV} \leq e^b$. We apply Lemma \ref{empproc} with the function $g(x,t) = (x-t)^+$. The Lipschitz constant is bounded at 1 (since ReLU is 1-Lipschitz) and the function values of $g$ over $(a,b)$ are bounded at $2b$. We conclude that there exist $t_1, \dots, t_m \in [a,b]$ such that the function $\phi_{exp}(x) := \frac{\|\mu_{exp}\|_{TV}}{m} \sum_{i=1}^{m} (x-t_i)^+$ satisfies
$$ \sup_{x \in [a,b]} \left| e^x + e^a(x-a+1) - \phi_{exp}(x) \right| \lesssim \frac{2be^b}{\sqrt{m}} \max(b,-a) \sqrt{\log(mb)} = O \left(\frac{e^b}{\sqrt{m}} \right).
$$
If we set $m = O(\epsilon^{-2} e^{2b})$, then the approximation error is $O(\epsilon)$ We note that $\phi_{exp}(x) - (e^a(x-a+1))$ is also a ReLU network, so that (upon renaming $\phi_{exp})$ we have obtained a neural network approximation to $e^x$ on $[a,b]$. Finally, up to an $O(1)$ summand, we have
$$ \|\phi_{exp}\|_{\textrm{path}} = \frac{e^b-e^a}{m} \sum_{i=1}^{m} |t_i| \leq \max(b,-a) (e^b-e^a) = O(e^b).
$$

For 2), first observe that we can approximate the one-dimensional map $x \mapsto x^2$ by a shallow ReLU neural network $\phi_{sq}(x)$ on $[-2R,2R]$ with $O(\epsilon^{-2}R^2)$ neurons. Indeed, for $x \in [-R,R]$, we can write
$$ x^2 = \int_{0}^{x} 2(x-t) dt = \int_{0}^{2R} 2(x-t)^+ dt.
$$
Using Lemma \ref{empproc} (in a similar fashion to part 1) above), we conclude the existence of such an approximating $\phi_{sq}(x) = \frac{4R}{m} \sum_{i=1}^{m} (x-t_i)^{(+)}$. The path seminorm of $\phi_{sq}$ can be bounded by $O(R)$ using the same argument as in part 1). It follows that $xy = \frac{1}{4}((x+y)^2 -(x-y)^2)$ can be approximated by $\phi_{prod}(x,y) := \frac{1}{4}(\phi_{sq}(x+y) - \phi_{sq}(x-y))$ on $[-R,R]^2$. The number of neurons (and path norm constant) of $\phi_{prod}$ is bounded by the number of neurons (and path norm constant) of $\phi_{sq}$, up to a constant multiple.

For 3), the idea is very similar, so we omit some of the details: if $x \in [a,b]$ with $a > 0$, then we have
$$ \frac{1}{x} - \frac{2}{a} + \frac{1}{a^2}{x} = \int_{a}^{b} (x-t)^+ \frac{2}{t^3} dt.
$$
The conclusion then follows from another application of Lemma \ref{empproc}, noting that the total variation of the parameter measure above is $\int_{a}^{b} \frac{2}{t^3} = O(a^{-2})$.
\end{proof}
As a consequence of Lemma \ref{approxbuildingblocks}, we can approximate the map $(x,y) \mapsto \frac{x}{y}$ by a neural network, provided that the domain of the second coordinate is bounded away from 0.
\begin{lemma}\label{approxquot}
    Let $\epsilon > 0$, $R > 0$, and $0 < a < b < \infty$. Let $M = \max(R,\frac{b}{a^2})$. Then there exists a ReLU neural network $\phi_{quot}$ with 2 layers and $O(a^{-4} \epsilon^{-4} R^4 M^2)$ parameters such that
    $$ \sup_{x \in [-R,R], y \in [a,b]} \left| \phi_{quot}(x,y) - \frac{x}{y} \right| = O(\epsilon).
    $$
    Moreover, we have $\| \phi_{quot}\|_{\textrm{path}} = O(M^2 ba^{-2})$.
\end{lemma}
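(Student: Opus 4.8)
The plan is to realize $\phi_{quot}$ as the composition $\phi_{quot}(x,y):=\phi_{prod}\bigl(x,\phi_{inv}(y)\bigr)$, where $\phi_{inv}$ and $\phi_{prod}$ are the shallow ReLU networks furnished by Lemma~\ref{approxbuildingblocks}, parts (3) and (2) respectively. Since the map $(x,y)\mapsto(x,\phi_{inv}(y))$ is itself shallow (carry $x$ through one ReLU layer via $x=(x)^+-(-x)^+$ and compute $\phi_{inv}(y)$ in parallel) and $\phi_{prod}$ is shallow, the composition is a ReLU network with two hidden layers, as claimed.

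First I would pin down the domains. On $[a,b]$ one has $1/y\in[1/b,1/a]$, so if $\phi_{inv}$ is taken to approximate $1/y$ to accuracy $\epsilon'$ on $[a,b]$ then its range lies in $[-1/a-\epsilon',\,1/a+\epsilon']$, which, for $\epsilon'$ below a constant multiple of $M$, sits inside $[-2M,2M]$; here $M=\max(R,ba^{-2})\ge\max(R,1/a)$, the last inequality using $b\ge a$. Consequently $(x,\phi_{inv}(y))\in[-2M,2M]^2$, so I apply Lemma~\ref{approxbuildingblocks}(2) with its radius parameter set to $2M$, obtaining $\phi_{prod}$ with $\sup_{[-2M,2M]^2}|\phi_{prod}(u,v)-uv|=O(\epsilon)$, $\|\phi_{prod}\|_{\textrm{path}}=O(M^2)$ and $O(\epsilon^{-2}M^4)$ neurons.

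The error bound then follows from the triangle inequality,
\begin{align*}
\Bigl|\phi_{quot}(x,y)-\tfrac{x}{y}\Bigr|
&\le \bigl|\phi_{prod}(x,\phi_{inv}(y))-x\,\phi_{inv}(y)\bigr| + |x|\,\bigl|\phi_{inv}(y)-\tfrac{1}{y}\bigr| \\
&= O(\epsilon) + R\cdot O(\epsilon'),
\end{align*}
and taking $\epsilon'=\epsilon/R$ makes the total error $O(\epsilon)$. By Lemma~\ref{approxbuildingblocks}(3), this choice of $\phi_{inv}$ has $O\bigl((\epsilon/R)^{-2}b^2a^{-4}\bigr)=O(\epsilon^{-2}R^2b^2a^{-4})$ neurons and $\|\phi_{inv}\|_{\textrm{path}}=O(ba^{-2})$. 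Adding the neuron counts of the two layers and using $M\ge ba^{-2}$ gives the (deliberately loose) parameter bound $O(a^{-4}\epsilon^{-4}R^4M^2)$. For the path norm I would use the composition rule $\|\phi_{quot}\|_{\textrm{path}}\le\|\phi_{prod}\|_{\textrm{path}}\cdot\|(\mathrm{id},\phi_{inv})\|_{\textrm{path}}$, in which the inner factor equals $\max\bigl(\|\mathrm{id}\|_{\textrm{path}},\|\phi_{inv}\|_{\textrm{path}}\bigr)=O(ba^{-2})$, so that $\|\phi_{quot}\|_{\textrm{path}}=O(M^2)\cdot O(ba^{-2})=O(M^2ba^{-2})$.

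There is no genuinely hard step. The only points needing care are (i) checking that the output range of $\phi_{inv}$ actually fits inside the box on which $\phi_{prod}$ is accurate — this is exactly what forces the choice $M=\max(R,ba^{-2})$ — and (ii) carrying the extra factor $R$ so that the reciprocal is approximated to accuracy $\epsilon/R$ rather than $\epsilon$; both are dispatched by the choices above.
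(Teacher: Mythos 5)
Your proof is correct and follows essentially the same route as the paper's: compose $\phi_{prod}\bigl(x,\phi_{inv}(y)\bigr)$, split the error by the triangle inequality, tighten the reciprocal accuracy to $\epsilon/R$, verify that $(x,\phi_{inv}(y))$ stays in the box on which $\phi_{prod}$ is accurate, and multiply the path norms. The only (cosmetic) difference is how containment is verified: you bound the range of $\phi_{inv}$ directly from its approximation error to $1/y$, whereas the paper invokes the Lipschitz constant of $\phi_{inv}$ read off from Lemma~\ref{approxbuildingblocks}; both give the same conclusion.
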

\begin{proof}
    Let $\bar{\epsilon} = (R+1)^{-1} \epsilon$. By Lemma \ref{approxbuildingblocks}, we can find shallow neural networks $\phi$ and $\psi$ satisfying
    $$ \sup_{y \in [a,b]} \left|\phi(y) - \frac{1}{y}\right| \leq \bar{\epsilon}
    $$
    and
    $$ \sup_{(x,y) \in [-M,M]} |\psi(x,y) - xy| \leq \bar{\epsilon}.
    $$
    Let $\phi_{quot}(x,y) = \psi(x,\phi(y))$. Then
    \begin{align*}
        \sup_{x \in [-R,R], y \in [a,b]} \left|\phi_{quot}(x,y) - \frac{x}{y}\right| &\leq \sup_{x \in [-R,R], y \in [a,b]} \left|\frac{x}{y}- x \phi(y)\right| \\ &+ \sup_{x \in [-R,R], y \in [a,b]} |x \phi(y) - \Phi(x,y)|.
    \end{align*}
    For the first term, we have
    \begin{align*}
        \sup_{x \in [-R,R], y \in [a,b]} \left|\frac{x}{y} - x \phi(y)\right| &\leq R \sup_{y \in [a,b]} \left|\frac{1}{y} - \phi(y)\right| \leq R \bar{\epsilon}.
    \end{align*}
    For the second term, note that an inspection of the proof of Lemma \ref{approxbuildingblocks} shows that $\phi$ is $O(a^{-2})$-Lipschitz, so that, up to a constant factor, we have
    $$ \phi([a,b]) \subseteq \left[\phi(0) - \frac{b}{a^2}, \phi(0) + \frac{b}{a^2}\right].
    $$
    This guarantees that
    \begin{align*}
        \sup_{x \in [-R,R], y \in [a,b]} |x \phi(y) - \Phi(x,y)| &:= \sup_{x \in [-R,R], y \in [a,b]} |x \phi(y) - \psi(x,\phi(y))| \\
        &\leq \sup_{x \in [-R,R], y \in [\phi(0) - \frac{b}{a^2}, \phi(0) + \frac{b}{a^2}]} |xz - \psi(x,z)| \\
        &\leq \sup_{(x,y) \in [-M,M]^2} |xz - \psi(xz)| \leq \bar{\epsilon}.
    \end{align*}
    This proves that 
    $$  \sup_{x \in [-R,R], y \in [a,b]} \left|\phi_{quot}(x,y) - \frac{x}{y}\right| \leq (R+1)\bar{\epsilon} = \epsilon.
    $$
    To conclude, we have that $\|\phi_{quot}\|_{\textrm{path}} \leq \|\psi\|_{\textrm{path}} \cdot \|\phi\|_{\textrm{path}} = O(ba^2M^2)$.
\end{proof}
\begin{lemma}\label{approxef}
    Let $f$ satisfy Assumption \ref{approxassump} . Then for any $R > \max(r_f, \sqrt{\frac{1}{\beta}\sup_{\|x\| \leq r_f}|f(x)|})$ and $\epsilon > 0$, there exists a ReLU neural network $\phi_{f,exp}$ with $(L_f+1)$ layers such that $$\sup_{\|x\| \leq R} |\phi_{f,exp}(x) - f(x)| \lesssim e^{\beta R^2} \epsilon.$$ In addition, $\phi_{f,\exp}$ satisfies $\|\phi_{f,exp}\|_{\textrm{path}} = O(e^{\beta R^2} \cdot \eta(R,\epsilon))$.
    %is a shallow ReLU neural network with $O(\epsilon^{-2} e^{4C_fR})$ neurons satisfying the path norm bound $\|\phi_{exp}\|_{path} = O(e^{C_f R})$.
\end{lemma}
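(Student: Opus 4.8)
The plan is to build $\phi_{f,\exp}$ as a composition $\phi_{\exp}\circ\phi_f$, where $\phi_f$ is the network supplied by Assumption \ref{approxassump} and $\phi_{\exp}$ is a shallow network approximating the scalar exponential on a carefully chosen interval; the target function here is $x\mapsto e^{f(x)}$ (consistent with the way the lemma is invoked in Lemma \ref{approxonballg}), so the displayed bound should be read as $\sup_{\|x\|\le R}\bigl|\phi_{f,\exp}(x)-e^{f(x)}\bigr|\lesssim e^{\beta R^2}\epsilon$.

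The first step is to pin down the range of $f$ on $B_R$. Since $R\ge r_f$, Assumption \ref{datadistr2} gives $f(x)\le\beta\|x\|^2\le\beta R^2$ whenever $r_f\le\|x\|\le R$, and for $\|x\|\le r_f$ we have $f(x)\le\sup_{\|y\|\le r_f}|f(y)|\le\beta R^2$ precisely because $R\ge\sqrt{\beta^{-1}\sup_{\|y\|\le r_f}|f(y)|}$; a parallel argument bounds $f$ below by $-(\alpha+\beta)R^2$ on $B_R$. By Assumption \ref{approxassump} there is an $L_f$-layer network $\phi_f$ with $\sup_{\|x\|\le R}|f(x)-\phi_f(x)|\le R\epsilon$ and $\|\phi_f\|_{\textrm{path}}=\eta(R,\epsilon)$, hence $\phi_f(B_R)\subseteq[a,b]$ with $b=\beta R^2+R\epsilon$ and $a=-(\alpha+\beta)R^2-R\epsilon$; taking $\epsilon$ small enough that $R\epsilon=O(1)$ gives $e^{b}=O(e^{\beta R^2})$.

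Next I would apply part 1 of Lemma \ref{approxbuildingblocks} on $[a,b]$ to get a shallow ReLU network $\phi_{\exp}$ with $\sup_{y\in[a,b]}|\phi_{\exp}(y)-e^{y}|=O(\epsilon)$ and $\|\phi_{\exp}\|_{\textrm{path}}=O(e^{b})=O(e^{\beta R^2})$, and set $\phi_{f,\exp}:=\phi_{\exp}\circ\phi_f$. Splitting
$$\bigl|e^{f(x)}-\phi_{f,\exp}(x)\bigr|\le\bigl|e^{f(x)}-e^{\phi_f(x)}\bigr|+\bigl|e^{\phi_f(x)}-\phi_{\exp}(\phi_f(x))\bigr|,$$
the second term is $O(\epsilon)$ by the choice of $\phi_{\exp}$ (as $\phi_f(x)\in[a,b]$), while the first is handled by the mean value theorem: $\bigl|e^{f(x)}-e^{\phi_f(x)}\bigr|\le e^{\max(f(x),\phi_f(x))}\,|f(x)-\phi_f(x)|\lesssim e^{\beta R^2}R\epsilon$. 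Absorbing the factor $R$ (into the $\lesssim$, or by invoking Assumption \ref{approxassump} at accuracy $\epsilon/R$) yields $\sup_{\|x\|\le R}|e^{f(x)}-\phi_{f,\exp}(x)|\lesssim e^{\beta R^2}\epsilon$. The path norm estimate follows from submultiplicativity of the path norm under composition, $\|\phi_{f,\exp}\|_{\textrm{path}}\le\|\phi_{\exp}\|_{\textrm{path}}\,\|\phi_f\|_{\textrm{path}}=O(e^{\beta R^2}\eta(R,\epsilon))$, and the depth is $L_f+1$: $\phi_f$ contributes $L_f$ weight matrices and $\phi_{\exp}$ two, but the terminal affine layer of $\phi_f$ (which carries no activation) merges with the initial affine layer of $\phi_{\exp}$.

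The bookkeeping above is routine; the only load-bearing point is the range estimate of the first step. Both the approximation error of $\phi_{\exp}$ and its path norm scale like $e^{b}$, where $b$ is the right endpoint of the interval on which $\exp$ must be approximated, and the hypothesis $R>\sqrt{\beta^{-1}\sup_{\|x\|\le r_f}|f(x)|}$ is exactly what forces $b=\beta R^2+O(1)$; without it the stated $e^{\beta R^2}$ dependence would be spoiled. One should also keep in mind that the bound is only informative for $\epsilon$ small enough that $R\epsilon$ stays bounded, which is the regime of interest.
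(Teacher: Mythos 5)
Your proof is correct and takes essentially the same approach as the paper: build $\phi_{f,\exp}=\phi_{\exp}\circ\phi_f$, apply Lemma~\ref{approxbuildingblocks} for $\phi_{\exp}$, split by the triangle inequality, and estimate the path norm by submultiplicativity. The only cosmetic difference is that you bound the first triangle-inequality term via the mean value theorem on $e^x$ while the paper uses the Lipschitz constant of $\phi_{\exp}$ itself; you also make explicit a detail the paper leaves implicit, namely that the hypothesis $R\ge\sqrt{\beta^{-1}\sup_{\|x\|\le r_f}|f(x)|}$ is exactly what keeps the range of $f$ on $B_R$ inside $[-(\alpha+\beta)R^2,\beta R^2]$ so that the exponential need only be approximated on an interval with right endpoint $O(\beta R^2)$, and that $\phi_f$ approximates $f$ to accuracy $R\epsilon$ rather than $\epsilon$, which you correctly absorb.
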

\begin{proof}
By Assumption \ref{approxassump}, there exists an $L_f$-layer ReLU neural network $\phi_f$ with $$\sup_{\|x\| \leq R} |f(x) - \phi_f(x)| \leq \epsilon.$$ By Lemma \ref{approxassump}, there exists a shallow neural network $\phi_{exp}: \R \rightarrow \R$ such that
$$ \sup_{z \in [-\alpha R^2, \beta R^2]} |\phi_{exp}(x) - e^x | \lesssim \epsilon
$$
and $\|\phi_{exp}\|_{\textrm{path}} = O(e^{\beta R^2})$. In turn, the $(L_f+1)$-layer ReLU network $\phi_{f,exp} = \phi_{exp} \circ \phi_f$ satisfies
\begin{align*}
    \sup_{\|x\| \leq R} |\phi_{f,exp}(x) - e^{f(x)}| &\leq \sup_{\|x\| \leq R} |\phi_{f,exp}(x) - (\phi_{exp} \circ f)(x)| + \sup_{\|x\| \leq R} |(\phi_{exp} \circ f)(x) - e^{f(x)}| \\
    &\lesssim e^{\beta R^2} \sup_{\|x\| \leq R} |\phi_{f}(x) - f(x)| + \sup_{z \in [-\alpha \|x\|^2, \beta \|x\|^2]} |\phi_{exp}(z) - e^z| \\
    &\lesssim  e^{\beta R^2} \epsilon.
\end{align*}
In addition, it follows that $$\|\phi_{f,exp}\|_{\textrm{\textrm{path}}} \leq \|\phi_{exp}\|_{\textrm{\textrm{path}}} \cdot \|\phi_{exp}\|_{\textrm{\textrm{path}}} \lesssim e^{\beta R^2} \cdot \eta(R,\epsilon).$$
\end{proof}

\begin{lemma}\label{approxprodef}
    Let $f$ satisfy Assumption \ref{approxassump}. Then for any $R > \max(r_f, \sqrt{\frac{1}{\beta}\sup_{\|x\| \leq r_f}|f(x)|})$ and $\epsilon > 0$, there exists a ReLU neural network $\Phi_f^j(x)$ such that
    $$ \sup_{\|x\| \leq R} \left| \Phi^j_f(x) - x_j e^{f(x)} \right| \lesssim e^{\beta R^2} \epsilon.
    $$
\end{lemma}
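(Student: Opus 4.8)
The plan is to follow the proof of Lemma~\ref{approxef} almost verbatim, appending one extra layer to produce the factor $x_j$. \textbf{Step 1: the two building blocks.} First I would apply Lemma~\ref{approxef} to obtain an $(L_f+1)$-layer network $\phi_{f,\exp}$ with $\sup_{\|x\|\leq R}|\phi_{f,\exp}(x)-e^{f(x)}| \lesssim e^{\beta R^2}\epsilon$ and $\|\phi_{f,\exp}\|_{\textrm{path}} = O(e^{\beta R^2}\eta(R,\epsilon))$. Since $R \geq \max(r_f,\sqrt{\beta^{-1}\sup_{\|x\|\leq r_f}|f(x)|})$, the growth bound $f(x)\leq\beta\|x\|^2$ for $\|x\|\geq r_f$ forces $e^{f(x)}\leq e^{\beta R^2}$ on \emph{all} of $B_R$, so $\phi_{f,\exp}$ maps $B_R$ into an interval $[-M,M]$ with $M=O(e^{\beta R^2})$. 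Second, the coordinate projection $x\mapsto x_j = (x_j)^+ - (-x_j)^+$ is a shallow ReLU network of path norm $O(1)$ that maps $B_R$ into $[-R,R]\subseteq[-M,M]$. Third, Lemma~\ref{approxbuildingblocks}(2) supplies a shallow network $\phi_{prod}$ with $\sup_{(y,z)\in[-M,M]^2}|\phi_{prod}(y,z)-yz|=O(\epsilon)$ and $\|\phi_{prod}\|_{\textrm{path}}=O(M^2)=O(e^{2\beta R^2})$.

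\textbf{Step 2: assemble and estimate.} I would set $\Phi_f^j(x) := \phi_{prod}(x_j,\phi_{f,\exp}(x))$, a ReLU network of depth $L_f+O(1)$ obtained by computing the coordinate map and $\phi_{f,\exp}$ in parallel (padding the coordinate branch with identity ReLU layers) and then composing with $\phi_{prod}$. By the triangle inequality,
\[
\bigl|\Phi_f^j(x)-x_j e^{f(x)}\bigr| \;\leq\; \bigl|\phi_{prod}(x_j,\phi_{f,\exp}(x))-x_j\phi_{f,\exp}(x)\bigr| \;+\; |x_j|\,\bigl|\phi_{f,\exp}(x)-e^{f(x)}\bigr|.
\]
On $B_R$ both arguments of $\phi_{prod}$ lie in $[-M,M]$ by Step~1, so the first term is $O(\epsilon)$, and the second is at most $R\,e^{\beta R^2}\epsilon$. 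Running Step~1 with $\epsilon/(R+1)$ in place of $\epsilon$ — which inflates the widths and path norms only by a factor polynomial in $R$, immaterial for the later applications — absorbs the stray $R$ and gives $\sup_{\|x\|\leq R}|\Phi_f^j(x)-x_j e^{f(x)}| \lesssim e^{\beta R^2}\epsilon$. For the path norm, the composition rule together with $\|\phi_{f,\exp}\|_{\textrm{path}}=O(e^{\beta R^2}\eta(R,\epsilon))$ yields $\|\Phi_f^j\|_{\textrm{path}} \leq \|\phi_{prod}\|_{\textrm{path}}\cdot\max(O(1),\|\phi_{f,\exp}\|_{\textrm{path}}) = O(e^{3\beta R^2}\eta(R,\epsilon))$, which is exactly the bound invoked in the proof of Lemma~\ref{approxonballf}.

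I do not expect a substantive obstacle. The one point that genuinely needs the uniform (sup-norm) nature of the approximation from Lemma~\ref{approxef}, rather than an $L^2$ bound, is the claim that $\phi_{f,\exp}(B_R)\subseteq[-M,M]$: this is what licenses invoking the product approximation $\phi_{prod}$ on the correct box. The only other bookkeeping is the mild rescaling of $\epsilon$ by $(R+1)^{-1}$ to match the stated bound exactly and keeping the depth at $L_f+O(1)$ when running the coordinate and exponential branches in parallel.
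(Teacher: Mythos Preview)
Your proposal is correct and follows essentially the same approach as the paper: both construct $\Phi_f^j(x)=\phi_{prod}(x_j,\phi_{f,\exp}(x))$ using Lemma~\ref{approxef} for $\phi_{f,\exp}$ and Lemma~\ref{approxbuildingblocks}(2) for $\phi_{prod}$, split the error via the same triangle inequality, and obtain the path norm by multiplicativity. The only cosmetic difference is that you handle the stray factor of $R$ by rescaling $\epsilon$, whereas the paper simply absorbs it into the $\lesssim$ notation.
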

In addition, we have $\|\Phi_f^j\|_{\textrm{path}} = O(e^{3 \beta R^2} \cdot \eta(R,\epsilon))$.
\begin{proof}
Let $\phi_{f,exp}$ denote the ReLU network constructed in Lemma \ref{approxef}, so that $\sup_{\|x\| \leq R} |\phi_{f,exp}(x) - e^{f(x)}| \lesssim e^{\beta R^2} \epsilon$ and $\|\phi_{f,exp}\|_{\textrm{path}} \lesssim e^{\beta R^2} \cdot \eta(R,\epsilon)$. This also implies that $\sup_{\|x\| \leq R} |\phi_{f,exp}(x)| \leq Ce^{\beta R^2}$ for a universal constant $C \geq 1$. By Lemma \ref{approxbuildingblocks}, there exists a shallow ReLU neural network $\phi_{prod}: \R^2 \rightarrow \R$ such that
$$ \sup_{|y| \leq R, |z| \leq Ce^{\beta R^2}} |\phi_{prod}(y,z) - yz| \lesssim \epsilon
$$
and $\|\phi_{prod}\|_{path} = O(e^{2 \beta R^2})$. In turn, the $(L_f+2)$-layer ReLU network $\Phi_f^j(x) = \phi_{prod}(x_j,\phi_{f,exp}(x))$ satisfies $\|\Phi_f^j\|_{\textrm{path}} \lesssim e^{2\beta R^2} \cdot \|\phi_{f,exp}\|_{\textrm{path}} \lesssim e^{3 \beta R^2} \cdot \eta(R,\epsilon)$ and
\begin{align*}
    \sup_{\| x \| \leq R} |\Phi_f^j(x) - x_j e^{f(x)}| &\leq \sup_{\|x\| \leq R} |\Phi_f^j(x) - x_j \phi_{f,exp}(x)| + \sup_{\|x\| \leq R} |x_j \phi_{f,exp}(x) - x_j e^{f(x)}| \\
    &\leq \sup_{|y| \leq R, |z| \leq C e^{\beta R^2}} |\phi_{prod}(y,z) - yz| + R \sup_{\|x\| \leq R} |\phi_{f,exp}(x) - e^{f(x)} \| \\
    &\lesssim \epsilon + Re^{\beta R^2} \epsilon \lesssim e^{\beta R^2} \epsilon.
\end{align*}
\end{proof} 

\subsection{Contraction inequality for vector-valued functions}
We present the contraction inequality for vector valued functions, which is a slight modification of Theorem 3 in \citet{maurer2016vector}. The proof of this result can be found in the aforementioned paper.
\begin{prop}\label{vectorcontract}
    Let $\f$ be a separable class of functions from $\R^d$ to $\R^d$, let $\{x_1, \dots, x_N\} \subset B_S$ and let $\Psi: \f \times \R^d \rightarrow \R$ satisfy 
    $$ \Psi(f, x) - \Psi(f',x) \leq L \E_{X_t | X_0 = X_i} \|f(X_t)-f'(X_t)\|, \; \forall f, f' \in \f, \; x \in \R^d.
    $$
    Then it holds that
    $$ \E_{\epsilon_i} \sup_{f \in \f} \sum_{i=1}^{N} \epsilon_i \Psi(f,x_i) \leq \sqrt{2} L \E_{X_t^i|X_0^i = x_i} \E_{\epsilon} \sup_{f \in \f} \sum_{i,k} \epsilon_{ik} f_k(X_t^i),
    $$
    where $\{\epsilon_{ik}\}_{1 \leq i \leq N, 1 \leq k \leq d}$ are independent Rademacher random variables and $f_k$ denotes the $k$-th component of $f$.
\end{prop}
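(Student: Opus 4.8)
The plan is to reduce the statement to Maurer's vector-contraction inequality (Theorem 3 of \citet{maurer2016vector}), the only new feature being that the Lipschitz bound is phrased with respect to the $L^1$-pseudometric $(f,f') \mapsto \E_{X_t|X_0=x}\|f(X_t)-f'(X_t)\|$ rather than a pointwise Euclidean distance; the extra conditional expectation will be pushed outside the supremum over $\f$ by Jensen's inequality.

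First I would use the fact that, in the applications of interest (namely $\Psi(f,x) = \ell^t_{R,S}(\textbf{s},x)$), the map $\Psi(\cdot,x)$ is a conditional expectation $\E_{X_t|X_0=x}[\psi_x(f(X_t))]$ of an integrand that, by Lemma \ref{boundedliploss} together with the bound $\|\phi\|\le 2K$ imposed on the hypothesis class, is $L$-Lipschitz on the ball $\{\|v\|\le 2K\}$ containing the range of the networks considered. Drawing independent $X_t^i \sim \mathrm{law}(X_t\mid X_0=x_i)$ and using $\sum_i \epsilon_i \Psi(f,x_i) = \E_{X_t^{1:N}}[\sum_i \epsilon_i \psi_{x_i}(f(X_t^i))]$ together with ``$\sup$ of an average $\le$ average of $\sup$'s'' and Fubini, one obtains
\begin{equation*}
  \E_\epsilon \sup_{f\in\f}\sum_{i=1}^N \epsilon_i \Psi(f,x_i) \;\le\; \E_{X_t^{1:N}}\,\E_\epsilon \sup_{f\in\f}\sum_{i=1}^N \epsilon_i\,\psi_{x_i}\!\big(f(X_t^i)\big),
\end{equation*}
with measurability of the inner supremum guaranteed by the separability of $\f$.

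Conditionally on a realization of $X_t^{1:N}$, I would then apply Maurer's theorem verbatim, with underlying sample $(X_t^1,\dots,X_t^N)$, function class $\f$, and the $L$-Lipschitz maps $h_i := \psi_{x_i}$, to get $\E_\epsilon\sup_{f}\sum_i\epsilon_i\psi_{x_i}(f(X_t^i)) \le \sqrt{2}L\,\E_\epsilon\sup_f\sum_{i,k}\epsilon_{ik}f_k(X_t^i)$; taking $\E_{X_t^{1:N}}$ of both sides yields the claim. If instead one wants the statement for an abstract $\Psi$ obeying only the stated Lipschitz bound, one reruns Maurer's coordinate-by-coordinate peeling: at the step removing coordinate $i$ one estimates $\Psi(f,x_i)-\Psi(f',x_i)\le L\,\E_{X_t^i}\|f(X_t^i)-f'(X_t^i)\|$, replaces $\|v\|_2$ by $\sqrt{2}\,\E_{\epsilon^{(i)}}|\langle\epsilon^{(i)},v\rangle|$ via Khintchine's inequality, and pulls the fresh $\E_{X_t^i}$ outside the supremum by Jensen; iterating over $i=1,\dots,N$ produces the single prefactor $\sqrt{2}L$ and the vectorized Rademacher sum.

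The mathematical substance is entirely in Maurer's theorem, which I would invoke as a black box (its proof is exactly the peeling/Khintchine computation just sketched), so there is no serious obstacle. The only points needing care are bookkeeping ones: confirming that the relevant Lipschitz constant really is $L$ on the domain the networks actually visit — which is where the uniform bound $\|\phi\|\le 2K$ and Lemma \ref{boundedliploss} enter — and checking that the factor $\sqrt{2}L$ is paid only once rather than compounding across the $N$ peeling steps, which holds because the auxiliary Rademacher vectors introduced at each step reassemble into the single sum $\sum_{i,k}\epsilon_{ik}f_k(X_t^i)$ appearing in the conclusion.
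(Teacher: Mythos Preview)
Your proposal is correct and aligns with the paper's own treatment: the paper does not give a proof of this proposition at all, but simply states that it is a slight modification of Theorem~3 in \citet{maurer2016vector} and refers the reader there. Your sketch --- pushing the conditional expectation outside the supremum via Jensen and then applying Maurer's vector-contraction inequality conditionally on $X_t^{1:N}$ --- is exactly the natural way to make the ``slight modification'' precise, and in fact provides more detail than the paper itself.
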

%\bibliography{refs}
\end{document}